\newtheorem{theorem}{Theorem}
\newtheorem{lemma}{Lemma} 
\newtheorem{definition}{Definition}
\newtheorem{example}{Example}
\newtheorem{remark}{Remark}
\def\paragraph{\@startsection{paragraph}{4}%
  \z@\z@{-\fontdimen2\font}%
  {\normalfont\bfseries}}
\DeclareMathOperator{\supp}{spt}
\DeclareMathOperator{\cov}{cov}
 \newtcolorbox{assbox}{colback=black!5!white,colframe=black!75!black}
  \newtcolorbox{thmbox}{colback=blue!5!white,colframe=black!75!black}
\title[Multivariate FSD via Optimal Transport]{Multivariate  Stochastic Dominance via   Optimal Transport and Applications to  Models Benchmarking}
\thanks{
G. Rioux is partially supported by the NSERC postgraduate fellowship PGSD-567921-2022.}
\author[G. Rioux]{Gabriel Rioux}
\author[A. Nitsure]{Apoorva Nitsure}
\author[M. Rigotti]{Mattia Rigotti}
\author[K. Greenewald]{Kristjan Greenewald}
\author[Y. ~Mroueh]{Youssef Mroueh}
\address[G. Rioux]{
Center for Applied Mathematics, Cornell University.
}
\address[A. Nitsure, M. Rigotti, K. Greenewald, Y. Mroueh]{IBM Research, MIT-IBM Watson AI Lab}
\begin{document}

\begin{abstract}
Stochastic dominance is an important concept in probability theory, econometrics and social choice theory for robustly modeling agents' preferences between random outcomes.  While many works have been dedicated to the univariate case,
little has been done in the multivariate scenario, wherein an agent has to decide between different multivariate outcomes. By exploiting a  characterization  of  multivariate first stochastic dominance in terms of couplings, we  introduce  a statistic that assesses multivariate almost stochastic dominance  
under the framework of
Optimal Transport 
with a smooth cost. 
Further, we introduce an entropic regularization of this statistic, and establish a central limit theorem (CLT) 
and consistency of the bootstrap procedure for the empirical statistic. 
Armed with this CLT, we propose a hypothesis testing framework as well as an efficient implementation using the Sinkhorn algorithm. We showcase our  method in comparing and benchmarking Large Language Models that are evaluated on multiple metrics. Our multivariate stochastic dominance test allows us to capture the dependencies between the metrics in order to make an informed and statistically significant  decision on the relative performance of the models.     
\end{abstract}

\maketitle

\section{Introduction}

In choice theory, economics, finance, and models benchmarking, agents are faced with stochastic prospects that are (eventually) multivariate random variables  which they wish to order %
according to a utility or risk measure of interest. To formalize such a notion of ordering of stochastic quantities, the concept of stochastic dominance can be utilized.

In the univariate case, a standard notion of order is given by  
\emph{First order Stochastic Dominance} (FSD) which can be expressed in terms of the quantiles of the underlying random variables \citep{ogryczak2002dual}. To wit, a random variable $X$ dominates another random variable $Y$ in FSD, if it has larger quantiles than $Y$ across all percentiles.  A weaker notion of FSD, called almost stochastic dominance was introduced in \cite{del2018optimal,delbarrio2018stochastic}. Their approach is based on optimal transport (OT) and consists of measuring a ratio which quantifies how close $X$ is to dominating $Y$. \cite{del2018optimal,delbarrio2018stochastic} further lay the groundwork for principled statistical analysis of almost FSD by establishing a central limit theorem for the empirical ratio as well as consistency of the bootstrap. 
\cite{dror2018hitchhiker, ulmer2022deep,nitsure2023risk} used the almost FSD testing framework in benchmarking Large Language models to make statistically significant decisions regarding which model to select when these models are evaluated with a metric of interest on test data.

Our main motivation is to extend the testing framework for almost FSD of \cite{del2018optimal} to the multivariate case as to enable applications with dependencies between metrics. For instance, the problem of multivariate portfolio selection in financial applications has been treated via a reduction to univariate orders \citep{kouaissah2021using}. %
Another application of interest is that of multi-metric benchmarking which is central nowadays in ranking and selecting Large Languages Models \citep{bommasani2023holistic,chang2023survey,huang2023trustgpt,mosaicLM,open-llm-leaderboard,zhang2024inherent}. Current approaches such as \cite{nitsure2023risk} use aggregation techniques to reduce the ordering to the univariate case thereby ignoring dependencies between metrics. 

Our starting point is the so-called \emph{standard multivariate stochastic order} (see Chapter 6 in \citealp{shaked2007stochastic}). This order can be expressed in terms of couplings between random vectors. This insight allows us to follow the approach of \cite{del2018optimal} and define an almost multivariate FSD via OT in Section \ref{sec:OTorder}.  This notion of stochastic dominance can be defined using multivariate violation ratios that are expressed as optimal transport problems with smooth costs \citep{manole2024sharp,hundrieser2022empirical,groppe2023lower}. Given that empirical OT suffers from the curse of dimensionality, we resort in Section \ref{sec:EOT} to entropic regularization \citep{cuturi2013lightspeed} to alleviate that issue and hence define Entropic Multivariate Violation Ratios. We establish in Section \ref{sec:EOT} convergence of these entropic violation ratios as the regularization parameter goes to zero, as well as a central limit theorem and bootstrapping consistency in Section \ref{sec:statistics} using the functional delta method \citep{romisch2006delta}. We highlight that the delta method has seen general success for proving limit theorems with entropic OT \citep{hundrieser2024limit,goldfeld2024limit,goldfeld2024statistical}. Armed with this theory, we propose a new framework for hypothesis testing of multivariate stochastic dominance and apply it to multi-metric benchmarking of LLMs. Multivariate FSD captures the dependencies between the metrics in this setting and leads to a more robust ordering. \\

\paragraph{Notation} 
The indicator function of a set $A\subset\mathbb R^d$ is denoted $\mathbbm 1_{A}(x)$ and takes the value $1$ if $x\in A$ and $0$ otherwise. We also adopt the following shorthand notation, $\mathbb R_{+}=[0,\infty)$, the maximum of two numbers $a,b\in\mathbb R$ is denoted by $a\vee b$, and 
 for vectors $x,y\in\mathbb R^d$, we write $x\leq y$ to indicate that $x_i\leq y_i$ for every $i\in\{1,\dots,d\}$.

Throughout, $\mathcal P(\mathbb R^d)$ is the set of all probability measures on $\mathbb R^d$. A measure $\eta\in\mathcal P(\mathbb R^d)$ is said to be sub-Gaussian with parameter $\tau^2\geq 0$ with respect to the $1$-norm provided $\mathbb E_{\eta}\left[ \exp(\nicefrac{\|X\|_1^2}{2\tau^2})\right]\leq 2$. 
Convergence in distribution of random variables is denoted by $\stackrel{d}{\to}$ (in the sense of Hoffmann-J\o{}rgensen when necessary, see Chapter 1 in \citealp{vaart1996weak}).

\section{Optimal transport and stochastic order}\label{sec:OTorder}

\subsection{FSD and Almost Stochastic Dominance in One Dimension}
\label{sec:univariateDominance}

To properly motivate our results on multivariate FSD, we first review some theory for the univariate setting. For random variables $X,Y$, it is said that $X$ dominates $Y$ in the stochastic order (denoted $X\underset{\text{FSD}}{\succcurlyeq} Y$) if $\mathbb P(X\leq t)\leq \mathbb P(Y\leq t)$ for every  $t\in\mathbb R$. Formally, this means that the inequality $Y\leq X$ generally holds for a given instantiation of these random variables. %
This condition can be cast, equivalently, as $F^{-1}_Y(t)\leq F^{-1}_X(t)$ for every $t\in(0,1)$, where $F^{-1}_X(t)$ and $F^{-1}_Y(t)$ are the quantile functions for $X$ and $Y$ respectively. With this formulation in mind, \cite{delbarrio2018stochastic} propose the following index of almost stochastic dominance;
\[
    \varepsilon_{\mathcal W_2}(\mu,\nu)=\frac{\int_0^1(F^{-1}_X(t)-F^{-1}_Y(t))^2_+dt }{\mathcal W_2^2(\mu,\nu)}, \text{ where } \mathcal W_2^2(\mu,\nu)=\int_0^1 (F^{-1}_X(t)-F^{-1}_Y(t))^2dt,
\]
$\mu=\mathrm{law}(X),\nu=\mathrm{law}(Y)$,
and, for $z\in\mathbb R$,  $(z)_+^2=(0\vee z)^2$ denotes the squared hinge loss. Here, the numerator captures the degree to which $X$ fails to dominate $Y$ whereas the denominator serves as a normalizing constant so that $\varepsilon_{\mathcal W_2(\mu,\nu)}\in[0,1]$. Indeed, as  $(x-y)^2_++(y-x)^2_+=(x-y)^2$, we see that $\varepsilon_{\mathcal W_2}(\mu,\nu)=0$ precisely when $F^{-1}_Y(t)\leq F^{-1}_X(t)$ for a.e. $t\in(0,1)$ whereas $\varepsilon_{\mathcal W_2}(\mu,\nu)=1$ when the opposite inequality holds. \cite{delbarrio2018stochastic} then propose to test the null hypothesis $\varepsilon_{\mathcal W_2}(\mu,\nu)\leq \varepsilon_0$ for some $\varepsilon_0$ sufficiently close to $0$, corresponding to the case where $X$ almost dominates $Y$ in the stochastic order, versus the alternative hypothesis $\varepsilon_{\mathcal W_2}(\mu,\nu)> \varepsilon_0$. To this end, they provide a central limit theorem for the statistic $\varepsilon_{\mathcal W_2}$ and propose to construct the rejection region via bootstrap estimation of the limiting variance. Similar results were obtained for a notion of almost second order stochastic dominance in \cite{nitsure2023risk}.   

We highlight that the index $\varepsilon_{\mathcal W_2}$ can be connected to OT. Indeed, by Proposition 2.17 in \cite{santambrogio2015applied}, the numerator and the denominator in $\varepsilon_{\mathcal W_2}(\mu,\nu)$ can be written, respectively, as 
$
\inf_{\pi\in\Pi(\mu,\nu)} \int (x-y)_+^2d\pi(x,y),\text{ and } \inf_{\pi\in\Pi(\mu,\nu)} \int (x-y)^2d\pi(x,y),  
$
where $\Pi(\mu,\nu)$ denotes the set of all couplings of $(\mu,\nu)$. These problems are (univariate) instances of the well-studied OT problem \begin{equation}
    \label{eq:OTprimal}
    \mathsf{OT}_c(\mu,\nu)\coloneqq \inf_{\pi\in\Pi(\mu,\nu)} \int cd\pi, 
\end{equation}
where $c:\mathbb R^{d}\times \mathbb R^{d}\to \mathbb R_{+}$ is a given cost function. We highlight that the costs considered herein are such that an optimal coupling always exists (see Theorem 4.1 in \citealp{villani2009optimal}).  This connection will serve as our basis for extending the notion of almost stochastic dominance to the multivariate setting. %

\subsection{Multivariate FSD and its Relaxation via Optimal Transport with Compatible Costs} \label{sec:OTCostsOrder}

In the sequel, we provide a general framework for assessing multivariate almost FSD using a purely OT-based methodology. 
Following Chapter 6 and Theorem 6.B.1. in \cite{shaked2007stochastic}, given the random vectors $X,Y\in\mathbb R^d$, we say that  $X$ dominates $Y$ in the usual stochastic order (denoted $X\underset{\text{FSD}}{\succcurlyeq} Y$) provided that there exists a coupling $(\hat X,\hat Y)$ of $(X,Y)$ satisfying $\mathbb P(\hat X\geq \hat Y)=1$ (i.e. for each $i=1,\dots,d$, $\hat X_i\geq \hat Y_i$ with probability one). This condition can be cast as follows:
\begin{lemma}
\label{lem:FSDEquivalent}
Letting $\mu$ (resp. $\nu$) denote the law of $X$ (resp. $Y$), $X\underset{\text{FSD}}{\succcurlyeq}  Y$ if $\mathsf{OT}_c(\mu,\nu)=0$, where $c:\mathbb R^d\times \mathbb R^d\to\mathbb R_+$ is the cost function $c(x,y)=\mathbbm 1_{\{x\leq y\}}(x,y)$.
\end{lemma}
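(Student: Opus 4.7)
The plan is to derive both implications directly from the multivariate coupling characterization of FSD recalled from Theorem 6.B.1 of \cite{shaked2007stochastic}.

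For the direction stated in the lemma, assume $\mathsf{OT}_c(\mu,\nu)=0$. Read $c$ as the indicator of the set where coordinatewise dominance $\hat X \geq \hat Y$ fails, namely $\bigcup_{i=1}^d \{(x,y)\in\mathbb R^d\times\mathbb R^d : x_i < y_i\}$; this is an open set, so $c$ is lower semicontinuous and $[0,1]$-valued. Invoking the standard OT existence result (Theorem 4.1 of \cite{villani2009optimal}, already cited after display \eqref{eq:OTprimal}), the infimum in the definition of $\mathsf{OT}_c(\mu,\nu)$ is attained by some $\pi^\star \in \Pi(\mu,\nu)$, and $\int c\, d\pi^\star = 0$. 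Because $c$ is a nonnegative indicator, this forces $\pi^\star$ to be concentrated on the ``good'' set $\{(x,y) : x \geq y\}$. Taking $(\hat X, \hat Y) \sim \pi^\star$ then yields $\mathbb P(\hat X \geq \hat Y) = 1$, and hence $X \underset{\text{FSD}}{\succcurlyeq} Y$ by the coupling definition.

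For the converse implication (which upgrades the statement to an ``if and only if''), assume $X \underset{\text{FSD}}{\succcurlyeq} Y$ and let $(\hat X, \hat Y)$ be the dominating coupling produced by Theorem 6.B.1 of \cite{shaked2007stochastic}; its law $\tilde\pi$ lies in $\Pi(\mu,\nu)$ and assigns full mass to $\{x \geq y\}$, so $c$ vanishes $\tilde\pi$-a.s. Thus $\int c\, d\tilde\pi = 0$ and a fortiori $\mathsf{OT}_c(\mu,\nu) = 0$.

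The only step demanding real care is the passage from an infimum value of zero to a \emph{literal} zero-cost coupling in the forward direction; a priori one only obtains a minimizing sequence $\{\pi_n\}\subset\Pi(\mu,\nu)$, whose members need not themselves concentrate on $\{x \geq y\}$. This is exactly where the lower semicontinuity of $c$ (equivalently, openness of the violation set) and weak compactness of $\Pi(\mu,\nu)$ combine, via Prokhorov's theorem and the Portmanteau lemma for lsc functionals, to deliver a weak limit $\pi^\star$ that attains the infimum. Everything else is bookkeeping against the definitions, which is why the proof should be short and self-contained.
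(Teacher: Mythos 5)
Your chain of reasoning is sound, but it quietly replaces the cost: you ``read'' $c$ as $\mathbbm 1_{\bigcup_{i=1}^d\{x_i<y_i\}}$, the indicator of the \emph{open} set on which coordinatewise dominance $x\geq y$ fails, whereas the lemma literally sets $c(x,y)=\mathbbm 1_{\{x\leq y\}}(x,y)$, the indicator of the \emph{closed} set $\bigcap_{i=1}^d\{x_i\leq y_i\}$. These agree (off the diagonal) only when $d=1$; for $d\geq 2$ they are genuinely different functions --- at $(x,y)=((1,0),(0,1))$ the literal cost equals $0$ while yours equals $1$ --- so your opening ``read $c$ as'' is a substantive rewrite, not a plain reading of the statement, and needs to be called out as such.

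That said, yours is the cost for which the conclusion actually holds, and your argument for it is correct. With the literal cost the claim fails already for $\mu=\delta_{(1,0)}$, $\nu=\delta_{(0,1)}$: the unique coupling never has $\hat X\leq\hat Y$, so $\mathsf{OT}_c(\mu,\nu)=0$, yet $\hat X_2<\hat Y_2$ always, so no dominating coupling exists. The paper's own proof works with the literal cost and has a gap at exactly this point: it passes from $\mathbb P(\hat X\leq\hat Y)=0$ to $\mathbb P(\hat X\geq\hat Y)\geq\mathbb P(\hat X>\hat Y)=1$, but the event ``not $\hat X\leq\hat Y$'' (which is what ``$\hat X>\hat Y$'' must denote for the preceding complement identity to hold) is not contained in $\{\hat X\geq\hat Y\}$ once $d\geq 2$. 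The paper also asserts that the indicator of a closed set is lower semicontinuous, when it is in fact upper semicontinuous; the lsc indicator, needed for Villani's existence theorem, is precisely the open-set one you use. In short, the cost in the lemma should read $c(x,y)=1-\mathbbm 1_{\{x\geq y\}}(x,y)$; with that correction your proof is right and in fact cleaner than the paper's, but you should state explicitly that you are correcting the cost rather than merely interpreting it.
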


Evidently, the cost $\mathbbm 1_{\{x\leq y\}}(x,y)$ in \cref{lem:FSDEquivalent} can be replaced by any nonnegative cost function $c(x,y)$ satisfying $c(x,y)=0$ if and only if $x\leq y$ and it still holds that  $X\underset{\text{FSD}}{\succcurlyeq} Y$ if $\mathsf{OT}_c(\mu,\nu)=0$. We denote the set of all such costs by $\mathfrak C_{\leq}$. %
\begin{definition}[OT Costs Compatible with Multivariate FSD]
\label{def:FSDCosts}
The set of all cost functions which are compatible with multivatiate FSD in the sense that $\mathsf{OT}_c(\mathrm{law}(X),\mathrm{law}(Y))=0$ implies that $X\underset{\text{FSD}}{\succcurlyeq} Y$ is given by $\mathfrak C_{\leq}=\{  c:\mathbb R^{d}\times \mathbb R^{d}\to \mathbb R_{+} \text{ such that } c(x,y)=0 \text{ if and only if } x \leq y  \}$.
\end{definition}

A simple recipe for generating cost functions in $\mathfrak C_{\leq }$ is to take any univariate function $h:\mathbb R\to \mathbb R_{+}$ with the property that $h^{-1}(\{0\})= (-\infty,0]$ and define $c(x,y) =\sum_{i=1}^dh(y_i-x_i)$. For notational simplicty, we write $\mathsf{OT}_{h}$ to denote the OT cost with this type of cost function even when the aforementioned property does not hold.
The results presented in the following sections require some additional smoothness assumptions on the function $h$ discussed above which we summarize presently. 

\begin{definition}[Smooth Costs]
    The function $h:\mathbb R\to\mathbb R_{+}$ satisfies the smoothness condition $\mathrm{(SC_d)}$ if $h$ is Lipschitz continuous with constant $L\geq 0$ (that is, $|h(x)-h(y)|\leq L|x-y| $ for every $x,y\in\mathbb R$) and, for  $k=\lfloor\nicefrac d 2\rfloor+1$, $h$ is $k$-times continuously differentiable with derivatives of order $s\leq k$ satisfying $|h^{(s)}(x)|\leq C_k(1+|x|)^{p_k}$ for some $C_k<\infty$ and $p_k> 1$ which may depend on $k$.
\end{definition}

We now discuss some examples of cost functions of interest.

\begin{example}[Examples of OT Costs]
\label{ex:OTCosts}
    \textup{1)} The function $h(z)=e^{-\nicefrac 1z}$ for $z\in (0,\infty)$ and $0$ otherwise is known to be smooth (see Example 1.3 in \citealp{tu2011introduction}) and satisfies $h^{-1}(\{0\})=(-\infty,0]$. It is easy to see that all derivatives of $h$ are $0$ on $(-\infty,0]$, and decay to $0$ at infinity (and hence are bounded on $\mathbb R$) so that $h$ satisfies $\mathrm{(SC_d)}$ for any $d\in\mathbb N$, and induces a cost function in $\mathfrak C_{\leq}$. 

    \textup{2)} The squared hinge function $h(z)=(z)_+^2$ considered in \cref{sec:univariateDominance} has linear growth, but is non-smooth. Although this function can be smoothed using e.g. mollification as introduced in \cite{friedrichs1944identity}, this will result in a cost $c$ which is not an element of $\mathfrak C_{\leq}$ and may be costly to implement due to the convolution operation used in mollification. 
    
    \textup{3)} The logistic function $h(z)=\log(1+e^{\beta z})$ for $\beta>0$ has linear growth and derivative $h'(z)=\beta \frac{e^{\beta z}}{1+ e^{\beta z}}=\beta\varsigma(\beta z)$, where $\varsigma(z)=\frac{1}{1+e^{-z }}$ is the sigmoid function. As  $\varsigma'(z)=\varsigma(z)(1-\varsigma(z))$, it is easy to see that all derivatives of $h$ are bounded on $\mathbb R$ so that the assumption $\mathrm{(SC_d)}$ is satisfied for any $d\in\mathbb N$. Although $h$ does not induce a cost in $\mathfrak C_{\leq}$,  it is increasing and decays to $0$ faster than $e^{\beta z}$ as $z\to-\infty$. Moreover, if the induced cost satisfies $c(x,y)\leq \varepsilon_0$, then  $\max_{i=1}^d(y_i-x_i)\leq \frac{1}{\beta}\log(e^{\varepsilon_0}-1)$. %
    Thus, if $\mathsf{OT}_c(\mu,\nu)=\int_{\{c\leq \varepsilon_0\}}cd\pi^{\star}+\int_{\{c> \varepsilon_0\}}cd\pi^{\star}=\varepsilon$ for an optimal plan $\pi^{\star}$ and some small $\varepsilon>0$, one has that $\int_{\{c> \varepsilon_0\}}cd\pi^{\star}\leq \varepsilon$ so that $\pi^{\star}(\{c> \varepsilon_0\})\leq \nicefrac{\varepsilon}{\varepsilon_0}$ i.e. $\pi^{\star}$ assigns at least mass $1-\nicefrac{\varepsilon}{\varepsilon_0}$ to  points $(x,y)$ for which $\max_{i=1}^d(y_i-x_i)\leq \frac{1}{\beta}\log(e^{{\varepsilon_0}}-1)$. Hence, for large $\beta$, $c$ enforces similar properties to a cost in $\mathfrak C_{+}$.
    Note also that $h$ can be viewed as a smooth surrogate for the $0/1$ loss for large $\beta$.   
\end{example}

At this point, a multivariate analogue to the univariate almost stochastic domination could be defined by analogy with \cref{sec:univariateDominance}. However, we highlight two major impasses which make the entropically regularized index considered in the following sections a far more palatable option in dimension $d>1$. First, it is well-known that the expected rate of convergence of empirical OT generally suffers from the curse of dimensionality in statistical estimation; scaling as $n^{-\nicefrac{1}{d}}$ (cf.\ e.g.\ \citealp{manole2024sharp}). Although \cite{hundrieser2022empirical} improve these rates as to depend on the minimum of the intrinsic dimensions of $\mu,\nu$ in place of $d$, entropic optimal transport exhibits a preferable parametric rate of convergence. 
Next, solving the OT problem numerically between two finitely discrete distributions supported on $N$ points requires solving a linear program in $N^2$ variables which can be prohibitive for even moderately sized problems.  
\section{Entropic Regularization of  OT with Multivariate FSD Compatible Costs}
\label{sec:EOT}
Before defining the regularized index, we first provide some background on entropic optimal transport (EOT) with a cost $c:\mathbb R^{d}\times \mathbb R^{d}\to \mathbb R_{+}$. EOT is defined by regularizing the OT problem \eqref{eq:OTprimal} as 
\begin{equation}
    \label{eq:EOTPrimal}
    \mathsf{OT}_{c,\lambda}(\mu,\nu)=\inf_{\pi\in\Pi(\mu,\nu)} \int cd\pi+\lambda\mathsf D_{\mathsf{KL}}(\pi||\mu\otimes \nu),
\end{equation}
where $\lambda\geq 0$ is a regularization parameter and $\mathsf D_{\mathsf{KL}}$ is the Kullback-Leibler divergence defined by $\mathsf D_{\mathsf{KL}}(\rho,\eta)=\int \log\left(\frac{d\rho}{d\eta}\right)d\rho$ if $\rho$ is absolutely continuous with respect to $\eta$ and $\mathsf D_{\mathsf{KL}}(\rho,\eta)=+\infty$ otherwise. When $\lambda=0$, we recover the standard OT problem.  If $c\in L^1(\mu\otimes \nu)$, \eqref{eq:EOTPrimal} admits a unique solution and is paired in strong duality with the problem 
\begin{equation}
    \label{eq:EOTDual}
    \sup_{\substack{\varphi\in L^1(\mu),\psi\in L^1(\nu)}} \int \varphi d\mu +\int \psi d\nu - \lambda \int e^{\frac{\varphi(x)+\psi(y)-c(x,y)}{\lambda}} d\mu\otimes \nu(x,y)+\lambda.
\end{equation}
Solutions to \eqref{eq:EOTDual} are known to be almost surely unique up to additive constants (i.e. if $(\varphi,\psi),(\varphi',\psi')$ solve \eqref{eq:EOTDual}, $\varphi=\varphi'+C$ $\mu$-almost surely and $\psi=\psi'-C$ $\nu$-almost surely for some constant $C\in\mathbb R$)
and are uniquely determined for $\mu$-a.e. $x$ and $\nu$-a.e. $y$ by the so-called Schr\"odinger system 
\begin{equation}
\label{eq:SchrodingerSystem}
    e^{-\nicefrac{\varphi(x)}{\lambda}}=\int e^{\frac{\psi(y)-c(x,y)}{\lambda}}d\nu(y),\quad e^{-\nicefrac{\psi(y)}{\lambda}}=\int e^{\frac{\varphi(x)-c(x,y)}{\lambda}}d\mu(x),
\end{equation}
which implies that $\int e^{\frac{\varphi(x)+\psi(y)-c(x,y)}{\lambda}}d\mu\otimes \nu(x,y)=1$. EOT potentials satisfying \eqref{eq:SchrodingerSystem} on the whole space are known to exist and are unique up to additive constants (see \cref{lem:potentialProperties}).
We refer the reader to \cite{nutz2021introduction} for a comprehensive introduction on EOT including all stated results.

Entropic regularization of OT problems was introduced in the seminal work of \cite{cuturi2013lightspeed} as a means to accelerate computation by utilizing Sinkhorn-Knopp's matrix scaling algorithm which can be efficiently implemented on GPUs. Interestingly, entropic regularization also alleviates the curse of dimensionality rates in statistical estimation inherent to standard OT; for instance \cite{genevay2019sample} and \cite{ mena2019statistical} show that the plug-in estimator for the EOT cost with fixed $\lambda>0$ and cost $c(x,y)=\|x-y\|^2$ achieves a parametric expected rate of convergence with a dimension dependent constant (see also \citealp{groppe2023lower,stromme2023minimum} for related results with the dimension replaced by the minimum intrinsic dimension of $\mu$ and $\nu$); \cite{delBarrio2023improved} and \cite{goldfeld2024statistical} further establish a central limit theorem in this setting.

Given that EOT is meant to approximate OT,
we derive the properties of $\mathsf{OT}_{h,\lambda}(\mu,\nu)$ and its solutions as $\lambda \downarrow 0$. First, we quantify the rate of convergence of $\mathsf{OT}_{h,\lambda}$ to $\mathsf{OT}_{0,\lambda}$ under mild conditions, then show that solutions of $\mathsf{OT}_{h,\lambda}$ converge to solutions of $\mathsf{OT}_{0,\lambda}$ as $\lambda\downarrow 0$ in a suitable sense.  

\begin{theorem}[Stability as $\lambda\downarrow 0$]
    \label{prop:regularizedToUnregarized}
    Let $\mu,\nu\in\mathcal P(\mathbb R^d)$ have finite first moment, $h$ be a function satisfying $\mathrm{(SC_d)}$, and fix $\delta>0$. Then,  
    \begin{enumerate}[leftmargin=*]
        \item for any $\lambda \in [0,1)$ satisfying $\lfloor\lambda^{-d} \rfloor\geq \left(\nicefrac{20}{\delta}\right)^d$, we have that 
        \[
        0\leq \mathsf{OT}_{h,\lambda}(\mu,\nu)-\mathsf{OT}_{h,0}(\mu,\nu)\leq d\lambda\log(\nicefrac{1}{\lambda})+\frac{4L\lambda}{\delta}(5C_{\delta}+20d)
        \]
        for $C_{\delta}=\sum_{j=1}^d\int|x_j|^{1+\delta}d\mu_0(x)\wedge\sum_{j=1}^d\int|x_j|^{1+\delta}d\mu_1(x)$.
        \item if $\pi_{\lambda}$ is the unique solution to $\mathsf{OT}_{h,\lambda}(\mu,\nu)$ for $\lambda > 0$, then there exists a subsequence of $(\pi_{\lambda})_{\lambda\downarrow 0}$  which converges weakly to a solution of $\mathsf{OT}_{h,0}(\mu,\nu)$. 
    \end{enumerate}
\end{theorem}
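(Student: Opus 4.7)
The plan is to prove Part (1) via a quantitative block-approximation construction producing a feasible plan for the entropic problem from an unregularized optimizer, and Part (2) via weak compactness combined with lower semicontinuity of the cost. The lower bound in Part (1) is immediate since the Kullback--Leibler penalty is nonnegative. For the upper bound, let $\pi^{\star}$ be optimal for $\mathsf{OT}_{h,0}(\mu,\nu)$ (whose existence is guaranteed by Theorem~4.1 of \cite{villani2009optimal}). Fix a cube $Q_R=[-R,R]^d$ with $R$ of order $1/\delta$, and partition $Q_R$ into subcubes $\{Q_i\}$ of side $\ell$ of order $\lambda$, which is precisely what the hypothesis $\lfloor\lambda^{-d}\rfloor\geq(\nicefrac{20}{\delta})^d$ allows. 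Define the block-approximation plan
\[
\tilde\pi=\sum_{i,j:\mu(Q_i)\nu(Q_j)>0}\frac{\pi^{\star}(Q_i\times Q_j)}{\mu(Q_i)\nu(Q_j)}\,\mu|_{Q_i}\otimes\nu|_{Q_j},
\]
extended outside $Q_R\times Q_R$ by an arbitrary coupling of the residual mass so that $\tilde\pi\in\Pi(\mu,\nu)$.

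Two estimates then control the overhead. By the Lipschitz property of $h$, $|c(x,y)-c(x',y')|\leq 2Ld\ell$ whenever $(x,y),(x',y')$ lie in a common cell $Q_i\times Q_j$, so the cost inside $Q_R\times Q_R$ increases by at most $4Ld\ell$ relative to $\pi^{\star}$; outside, the linear growth of $c$ coupled with Markov's inequality on the $(1+\delta)$-th moment yields tail terms of order $LC_\delta R^{-\delta}$. For the entropy, the density of $\tilde\pi$ against $\mu\otimes\nu$ is piecewise constant on the cells $Q_i\times Q_j$, whence
\[
\mathsf D_{\mathsf{KL}}(\tilde\pi\|\mu\otimes\nu)=\sum_{i,j}\pi^{\star}(Q_i\times Q_j)\log\frac{\pi^{\star}(Q_i\times Q_j)}{\mu(Q_i)\nu(Q_j)}
\]
is the mutual information of the discretized joint and is therefore bounded by $\log N$ with $N\sim(R/\ell)^d$. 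Multiplying by $\lambda$ gives $d\lambda\log(R/\ell)=d\lambda\log(\nicefrac{1}{\lambda})+d\lambda\log(\nicefrac{20}{\delta})$, and the elementary inequality $\log(\nicefrac{20}{\delta})\lesssim \nicefrac{1}{\delta}$ absorbs the second piece into the advertised constant; combining with the cost and tail estimates produces the stated bound.

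For Part (2), the family $\{\pi_\lambda\}_{\lambda>0}$ has the common marginals $\mu,\nu$ and is therefore tight, so Prokhorov's theorem supplies a subsequence $\pi_{\lambda_n}$ converging weakly to some $\pi_0$, with $\pi_0\in\Pi(\mu,\nu)$ by weak closedness of the set of couplings. Since $c$ is continuous and nonnegative, weak lower semicontinuity of $\pi\mapsto \int c\,d\pi$ together with the nonnegativity of the KL penalty and Part (1) yields
\[
\int c\,d\pi_0\leq \liminf_{n\to\infty}\int c\,d\pi_{\lambda_n}\leq \liminf_{n\to\infty}\mathsf{OT}_{h,\lambda_n}(\mu,\nu)=\mathsf{OT}_{h,0}(\mu,\nu),
\]
so $\pi_0$ minimizes the unregularized problem. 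The main obstacle is the quantitative bookkeeping in Part (1): simultaneously choosing $\ell$ and $R$ so that the Lipschitz cost deviation, the $R^{-\delta}$ tail estimate, and the $\log(R/\ell)$ mutual-information bound collapse into exactly the prefactors $d\log(\nicefrac{1}{\lambda})$ and $\nicefrac{4L}{\delta}$ in the claimed constant, with the hypothesis $\lfloor\lambda^{-d}\rfloor\geq(\nicefrac{20}{\delta})^d$ being the natural compatibility condition that makes this joint choice admissible; the remaining ingredients (Lipschitz estimates, Markov tail control, Prokhorov compactness, and weak lower semicontinuity) are classical.
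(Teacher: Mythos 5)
Your argument for Part (2) is correct and, in fact, more elementary than the paper's: the paper verifies a full $\Gamma$-convergence statement for the penalized functionals (both $\liminf$ and recovery-sequence inequalities) and then invokes a general result on convergence of minimizers of $\Gamma$-converging sequences, whereas you go directly through tightness of $\Pi(\mu,\nu)$, Prokhorov, weak lower semicontinuity of $\pi\mapsto\int c\,d\pi$ for the continuous nonnegative cost $c$, and Part (1). For this specific claim your route is leaner and entirely valid.

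Part (1), however, has a genuine gap in the way you handle measures with unbounded support. You fix $R$ of order $1/\delta$, independent of $\lambda$, so the tail contribution you estimate as being of order $LC_\delta R^{-\delta}$ is a strictly positive constant that does not vanish as $\lambda\downarrow 0$; yet the target bound $d\lambda\log(1/\lambda)+\frac{4L\lambda}{\delta}(5C_\delta+20d)$ tends to zero. No argument carrying a fixed additive tail error can produce a bound of this form unless $\mu,\nu$ are compactly supported. The obvious repair, letting $R\to\infty$ and $\ell\to 0$ together, is not just bookkeeping: a uniform grid of $n$ cells on $[-R,R]^d$, truncation included, can only approximate a measure with finite $(1+\delta)$-th moment in $W_1$ at rate $n^{-\delta/(d(1+\delta))}$, strictly worse than $n^{-1/d}$, and this degradation propagates to a worse $\lambda$-dependence. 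The paper avoids this by appealing to the shadow-coupling machinery of \citet{eckstein2023convergence} together with an \emph{optimal quantization} bound (\cref{lem:OTnotMutuallySingular,lem:OTMutuallySingular}, built on Graf--Luschgy), which produces a nonuniform quantizer adapting its resolution to the tail of the measure and achieves the sharp $W_1$ rate $n^{-1/d}$ uniformly over measures with finite $(1+\delta)$-th moment. That adaptivity is precisely what a uniform grid plus fixed truncation cannot supply. A further structural difference: the shadow coupling discretizes only \emph{one} marginal (whichever has the smaller quantization cost), which is what produces the $\wedge$ in the definition of $C_\delta$; gridding both marginals on a common cube, as you do, cannot reproduce that minimum.
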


We highlight that the implications of \cref{prop:regularizedToUnregarized} require only the Lipschitz condition imposed in $\mathrm{(SC_d)}$. The proof of the first result follows that of Theorem 3.3 in \cite{eckstein2023convergence} which controls the error of approximating the OT cost and OT plan using discretizations of the measures at play. The main novelty in our approach is to provide explicit constants and a simple argument showing that the rate at which a general measure on $\mathbb R^d$ can be approximated by a finitely discrete measure on at most $n$ points under the $1$-Wasserstein distance scales at worst as $n^{-\nicefrac 1d}$ for sufficiently large $n$. The second statement is proved using the machinery of $\Gamma$-convergence (see \citealp{dal2012introduction}). Complete proofs are provided in \cref{proof:prop:regularizedToUnregarized}.

\section {Entropic Multivariate FSD Violation Ratio and Testing}\label{sec:statistics}

By analogy with the univariate case described in \cref{sec:univariateDominance},  we consider a normalized index of stochastic order violation given by 
\begin{equation}
    \varepsilon_{h,\lambda}(\mu,\nu)=\frac{\mathsf{OT}_{h,\lambda}(\mu,\nu)}{\mathsf{OT}_{\bar h,\lambda}(\mu,\nu)},
    \label{eq:violationratio}
\end{equation}
we adopt the convention that $\varepsilon_{h,\lambda}(\mu,\nu)=0$ whenever $\mathsf{OT}_{h,\lambda}(\mu,\nu)=0$. Here, $\mathsf{OT}_{\bar h,\lambda}(\mu,\nu)$ is the EOT problem with cost $c(x,y)=\sum_{i=1}^dh(y_i-x_i)+h(x_i-y_i)$. This cost function is induced by the function $\bar h(z) = h(z)+h(-z)$ and hence satisfies $(\mathrm{SC_d})$ provided that $h$ satisfies $(\mathrm{SC_d})$. Moreover, $\mathsf{OT}_{h,\lambda}(\mu,\nu)\leq \mathsf{OT}_{\bar h,\lambda}(\mu,\nu)$ by construction so that   $\varepsilon_{h,\lambda}(\mu,\nu)\in [0,1]$ yielding a normalized index. The corresponding notion of entropic multivariate almost stochastic dominance can thus be defined. %
\begin{definition}[Entropic Multivariate Almost FSD] We define  $(h,\lambda,\varepsilon_0)-\text{FSD}$, the entropic multivariate almost FSD via the violation ratio as follows:
\[ \mu  \underset{(h,\lambda,\varepsilon_0)-\text{FSD}}{\succcurlyeq} \nu \text{ if }  \varepsilon_{h,\lambda}(\mu,\nu) \leq \varepsilon_0. \]
\end{definition}

In light of \cref{prop:regularizedToUnregarized}, $\lim_{\lambda\downarrow 0}\varepsilon_{h,\lambda}(\mu,\nu)=\frac{\mathsf{OT}_{h,0}(\mu,\nu)}{\mathsf{OT}_{\bar h,0}(\mu,\nu)}$ with the convention that this latter quantity is zero when $\mathsf{OT}_{\bar h,0}(\mu,\nu)=0$. In the case that $h$ is the squared hinge function described in \cref{ex:OTCosts} and $d=1$, we recover the univariate index from \cref{sec:univariateDominance}. As aforementioned, the squared hinge function is not sufficiently smooth to enable us to characterize the asymptotic fluctuations of the empirical index in arbitrary dimensions. See \cref{ex:OTCosts} which lists other examples of costs satisfying $\mathrm{(SC_d)}$; this condition is sufficient for the following statistical developments.

\subsection{Statistical Properties}

We now lay the groundwork for performing principled statistical inference with the empirical estimator of the entropic index $\varepsilon_{h,\lambda}$. Namely, we establish the asymptotic properties of the plug-in estimator  $\varepsilon_{h,\lambda}(\hat \mu_n,\hat \nu_n)$,  where $\hat \mu_n=\frac{1}{n}\sum_{i=1}^n\delta_{X_i},\hat \nu_n=\frac{1}{n}\sum_{j=1}^n\delta_{Y_j}$ are the empirical distributions from $n$ independent observations, $(X_i)_{i=1}^n$ and $(Y_j)_{j=1}^n$ of $\mu$ and $\nu$ respectively. Furthermore, we establish consistency of the bootstrap procedure. To this end, given  sets of $n$ iterations observations of $\mu$ and $\nu$, $(X_i)_{i=1}^n$ and $(Y_j)_{j=1}^n$ as above and sets $(X_i^B)_{i=1}^n$ and $(Y_j^B)_{j=1}^n$ of $n$ independent samples from $\hat \mu_n$ and $\hat \nu_n$, $\hat \mu_n^B\coloneqq \frac 1n \sum_{i=1}^n\delta_{X_i^B}$ and $\hat \nu_n^B\coloneqq \frac 1n \sum_{j=1}^n\delta_{Y_j^B}$ are the corresponding bootstrap empirical distributions. $\mathbb P^B$ denotes the conditional probability given the data.

\begin{theorem}[Limit distribution and bootstrapping]
    \label{thm:limitDistribution}
    Assume that $\mu,\nu\in\mathcal P(\mathbb R^d)$ are sub-Gaussian with a shared parameter $\tau^2>0$ and that $h$ satisfies $\mathrm{(SC_d)}$. Let $(\varphi_{h},\psi_{h})$ and $(\varphi_{\bar h},\psi_{\bar h})$ be any pairs of optimal potentials for $\mathsf{OT}_{h,\lambda}(\mu,\nu)$ and $\mathsf{OT}_{\bar h,\lambda}(\mu,\nu)$ respectively satisfying the Schr{\"
o}dinger system \eqref{eq:SchrodingerSystem} on $\mathbb R^d\times \mathbb R^d$. Then, if $\mathsf{OT}_{\bar h,\lambda}(\mu,\nu)>0$, 
    \begin{enumerate}[leftmargin=*]
        \item $\sqrt n\left(\varepsilon_{h,\lambda}(\hat \mu_n,\hat \nu_n)-\varepsilon_{h,\lambda}(\mu,\nu)\right)\stackrel{d}{\to} N(0,\sigma^2),$ a mean-zero Gaussian  with variance \[\sigma^2=\mathrm{var}_{\mu}\left(\frac{1}{\mathsf {OT}_{\bar h,\lambda}(\mu,\nu)}\varphi_h-\frac{\mathsf {OT}_{h,\lambda}(\mu,\nu)}{\mathsf {OT}_{\bar h,\lambda}(\mu,\nu)^2} \varphi_{\bar h}\right)+\mathrm{var}_{\nu}\left(\frac{1}{\mathsf {OT}_{\bar h,\lambda}(\mu,\nu)}\psi_h-\frac{\mathsf {OT}_{h,\lambda}(\mu,\nu)}{\mathsf {OT}_{\bar h,\lambda}(\mu,\nu)^2} \psi_{\bar h}\right).\]

        \item If $\sigma^2>0$, 
        $
            \sup_{t\in\mathbb R}\left|\mathbb P^B\left(\sqrt n (\varepsilon_{h,\lambda}(\hat \mu_n^B,\hat \nu_n^B)-\varepsilon_{h,\lambda}(\hat \mu_n,\hat \nu_n))\leq t\right)-\mathbb P(N(0,\sigma^2)\leq t)\right|\stackrel{\mathbb P}{\to}0.
        $
    \end{enumerate}
\end{theorem}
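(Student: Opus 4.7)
The plan is to view $\varepsilon_{h,\lambda}$ as a smooth function of two EOT costs and apply the functional delta method. Write $\varepsilon_{h,\lambda}(\mu,\nu) = f(A(\mu,\nu), B(\mu,\nu))$ with $A(\mu,\nu) = \mathsf{OT}_{h,\lambda}(\mu,\nu)$, $B(\mu,\nu) = \mathsf{OT}_{\bar h,\lambda}(\mu,\nu)$, and $f(a,b)=a/b$. The assumption $B(\mu,\nu) > 0$ makes $f$ continuously (indeed Fr\'echet) differentiable at $(A(\mu,\nu), B(\mu,\nu))$ with gradient $(1/B,\, -A/B^2)$. It therefore suffices to (i) establish a joint CLT for the bivariate plug-in statistic $(A(\hat\mu_n,\hat\nu_n), B(\hat\mu_n,\hat\nu_n))$, (ii) compose with $f$ via the bivariate delta method to deduce (1), and (iii) carry out the analogous steps with the bootstrap empirical process to deduce (2).

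For step (i), I would linearize each EOT cost using its dual form. Under $\mathrm{(SC_d)}$ and the shared sub-Gaussianity of $\mu,\nu$, optimal Schr\"odinger potentials $(\varphi_c,\psi_c)$ exist on $\mathbb R^d\times \mathbb R^d$ (as stated in the excerpt), and the Lipschitz bound in $\mathrm{(SC_d)}$ propagates through \eqref{eq:SchrodingerSystem} to give polynomial pointwise growth, hence $L^2(\mu)\cap L^2(\nu)$ membership with sub-exponential tails. With the normalization $\int e^{(\varphi_c+\psi_c-c)/\lambda}\,d\mu\otimes\nu=1$, strong duality reads $\mathsf{OT}_{c,\lambda}(\mu,\nu)=\int \varphi_c\,d\mu+\int \psi_c\,d\nu$; a first-order perturbation argument, using the Schr\"odinger optimality conditions to kill the leading potential-variation terms and dominated convergence (validated by the moment bounds above), establishes Hadamard differentiability of $(\mu,\nu)\mapsto \mathsf{OT}_{c,\lambda}(\mu,\nu)$ at $(\mu,\nu)$ with derivative $(\alpha,\beta)\mapsto \int \varphi_c\,d\alpha+\int \psi_c\,d\beta$. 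Applying this at $c=h$ and $c=\bar h$ jointly, together with the empirical CLT in a weighted supremum space tailored to the sub-Gaussian tails and the polynomial growth of the potentials, yields
\[
\sqrt n \begin{pmatrix} A(\hat\mu_n,\hat\nu_n)-A(\mu,\nu) \\ B(\hat\mu_n,\hat\nu_n)-B(\mu,\nu) \end{pmatrix} \stackrel{d}{\to} N(0,\Sigma),
\]
where $\Sigma$ is assembled from the variances and covariances of $(\varphi_h,\varphi_{\bar h})$ under $\mu$ and $(\psi_h,\psi_{\bar h})$ under $\nu$, the independence of the $X$- and $Y$-samples eliminating all $\mu$--$\nu$ cross-terms.

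Step (ii) is then a direct invocation of the bivariate delta method with $\nabla f(A,B)=(1/B,\,-A/B^2)$: the limiting variance $\nabla f^\top \Sigma \nabla f$, expanded and regrouped by measure, is precisely the $\sigma^2$ in the statement. Step (iii) reuses the same two ingredients in their bootstrap incarnations: the exchangeable bootstrap empirical process is conditionally Donsker on the function classes containing the potentials (Gin\'e--Zinn), and the functional delta method admits a bootstrap companion valid whenever the outer map is Fr\'echet differentiable (R\"omisch, Fang--Santos), which is our $f$. Combining these two facts transports the joint CLT to the bootstrap world, and composition with $f$ yields a conditional CLT for $\sqrt n(\varepsilon_{h,\lambda}(\hat\mu_n^B,\hat\nu_n^B)-\varepsilon_{h,\lambda}(\hat\mu_n,\hat\nu_n))$. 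The uniform-in-$t$ statement (2) then follows from P\'olya's theorem, since the Gaussian limit has a continuous CDF when $\sigma^2>0$.

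The main technical obstacle is step (i): making the linearization rigorous under only sub-Gaussian tails on unbounded supports. Two subpoints are delicate. First, because $\mu,\nu$ are not compactly supported, the Donsker/CLT statement must be proved in a weighted supremum space calibrated both to the sub-Gaussian tails of the marginals and to the polynomial growth of the potentials afforded by $\mathrm{(SC_d)}$. Second, the Hadamard-derivative computation requires a second-order remainder estimate uniform in the perturbation, which in turn relies on stability of the Schr\"odinger potentials under marginal perturbations; this is by now standard for EOT with smooth costs (cf.\ \citealp{goldfeld2024statistical,delBarrio2023improved}), but must be applied carefully to the joint $(h,\bar h)$ problem to secure the full covariance matrix $\Sigma$. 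Once step (i) is secured, steps (ii) and (iii) are essentially mechanical applications of (bootstrap) delta-method technology.
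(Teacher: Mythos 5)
Your proposal follows essentially the same route as the paper: dual linearization of each EOT cost, a Donsker theorem for the empirical process on a class of smooth, polynomially-growing functions containing the Schr\"odinger potentials and their derivatives up to order $\lfloor d/2 \rfloor+1$ (this is where $\mathrm{(SC_d)}$ enters), then the (bootstrap) functional delta method composed with the smooth outer map $(a,b)\mapsto a/b$, which is well-defined and has linear derivative because $\mathsf{OT}_{\bar h,\lambda}(\mu,\nu)>0$. The one small difference is in how Hadamard directional differentiability of the EOT costs is secured: rather than a direct second-order remainder estimate with uniform potential stability (the point you flag as delicate), the paper proves a one-sided G\^ateaux derivative along affine mixtures (\cref{lem:GateauxDerivative}) and a Lipschitz bound in the $\ell^\infty(\mathcal F^{\oplus}_{\tau,h})$ norm (\cref{lem:OTLipschitz}), then invokes the R\"omisch/Goldfeld-et-al.\ upgrade from G\^ateaux-plus-Lipschitz to Hadamard directional differentiability, which sidesteps remainder estimates entirely.
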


Observe that the limiting distribution in \cref{thm:limitDistribution} is non-pivotal in the sense that the variance depends on the population distributions $(\mu,\nu)$ rendering direct estimation of the limiting variance highly non-trivial. The bootstrap consistency 
 result in the second point enables us to establish confidence intervals for $\varepsilon_{h,\lambda}(\mu,\nu)$. Explicitly, if $\zeta_{\beta}$ denotes the smallest value of $t\in\mathbb R$ for which $\mathbb P^{B}\left(\varepsilon_{h,\lambda}(\hat \mu_n^B,\hat \nu_n^B)\leq t\right)\geq 1-\beta$ for any $\beta\in(0,1)$, then $\varepsilon_{h,\lambda}(\mu,\nu)\in\left[0,2\varepsilon_{h,\lambda}(\hat\mu_n,\hat\nu_n)-\zeta_{\alpha }\right]$ with probability approaching $1-\alpha$ for any $\alpha\in(0,1)$ due to Lemma 23.3 in \cite{vaart2000asymptotic}. 

The proof of \cref{thm:limitDistribution} is based on the functional delta method \citep{romisch2006delta} which extends the standard delta method to functionals defined on normed vector spaces, following the framework of \cite{goldfeld2024statistical}. Formally, this approach consists of showing that the functional mapping $\tau^2$-sub-Gaussian distributions $(\eta,\rho)$ to $\varepsilon_{h,\lambda}(\eta,\rho)$ is directionally differentiable at $(\mu,\nu)$ and Lipschitz continuous in a suitable sense and that the relevant potentials lie in a space of sufficiently smooth functions using the assumption $\mathrm{(SC_d)}$. Smoothness of the potentials is crucial to ensure that the empirical processes $\sqrt n(\hat \mu_n-\mu)$, $\sqrt n(\hat \nu_n-\nu)$ converge when treated as functionals on the aforementioned space of smooth functions. Complete details are included in \cref{proof:thm:limitDistribution}, and a primer on the functional delta method can be found in \cref{sec:FDM}.
\begin{remark}[On \cref{thm:limitDistribution}]
\textup{1)} We note that the condition that $\mathsf{OT}_{\bar h,\lambda}(\mu,\nu)>0$ in \cref{thm:limitDistribution} is satisfied except in certain degenerate settings. Indeed, for a general nonnegative cost $c$,  $\mathsf{OT}_{c,\lambda}(\mu,\nu)=0$ if and only if $\int cd\mu\otimes \nu= 0$ as follows from the fact that $\mathsf{D}_{\mathsf{KL}}(\pi\|\mu\otimes \nu)\geq 0$ with equality if and only if $\pi=\mu\otimes \nu$. In particular, $\mathsf{OT}_{\bar h,\lambda}(\mu,\nu)=0$ if and only if $h(x_i-y_i)=h(y_i-x_i)=0$ for every $x\in\supp(\mu)$ and $y\in\supp(\nu)$ and every $i\in\{1,\dots,d\}$. If $h$ is chosen as to generate a cost function which is compatible with multivariate FSD (recall \cref{def:FSDCosts}), $h^{-1}(\{0\})=(-\infty,0]$ so that  $\mathsf{OT}_{\bar h,\lambda}(\mu,\nu)=0$ if and only if $\mu$ and $\nu$ are point masses at some shared $a\in\mathbb R^d$.

\textup{2)} \cref{thm:limitDistribution} is presented in the balanced case with empirical measures from $n$ samples. In the case where $\hat \mu_n$ and $\hat \nu_m$ are empirical measures from $n\neq m$ samples with $\frac{n}{n+m}\to s\in(0,1)$, the implications of \cref{thm:limitDistribution} are easily seen to hold
with $\sqrt{\frac{nm}{n+m}}$ in place of $\sqrt n$ and $\sigma^2_{s}=\mathrm{var}_{\mu}\left(\frac{1-s}{\mathsf {OT}_{\bar h,\lambda}(\mu,\nu)}\varphi_h-\frac{s\mathsf {OT}_{h,\lambda}(\mu,\nu)}{\mathsf {OT}_{\bar h,\lambda}(\mu,\nu)^2} \varphi_{\bar h}\right)+\mathrm{var}_{\nu}\left(\frac{1-s}{\mathsf {OT}_{\bar h,\lambda}(\mu,\nu)}\psi_h-\frac{s\mathsf {OT}_{h,\lambda}(\mu,\nu)}{\mathsf {OT}_{\bar h,\lambda}(\mu,\nu)^2} \psi_{\bar h}\right)$ in place of $\sigma^2$. 
\end{remark}

\subsection{Multivariate FSD Hypothesis Testing In ML Models Benchmarking}
\label{sec:benchmarking}

With the statistical properties of the violation ratio in hand, we now consider using the violation ratio in the context of statistical testing for $(h,\lambda,\varepsilon_0)-$ FSD. Consider two $d$-dimensional distributions $\mu$, $\nu$. In our application, these will correspond to the distributions of performance of two language models' responses evaluated on $d$ metrics. Given $n,m$ samples from $\mu$, $\nu$ respectively, we can apply Theorem \ref{thm:limitDistribution} to create statistically valid tests comparing $\mu$ and $\nu$. Similarly to \cite{nitsure2023risk}, we consider both absolute and relative testing (see \citealp{nitsure2023risk} for a complete discussion).\\

\paragraph{Absolute testing} The most straightforward application of Theorem \ref{thm:limitDistribution} is to specify a desired threshold $\varepsilon_0$ and consider the following hypothesis test for $(h,\lambda,\varepsilon_0)-$ FSD: $H_0: \mu  \underset{(h,\lambda,\varepsilon_0)- \text{FSD}}{\cancel{\succcurlyeq}} \nu $ versus the alternative $H_1: \mu  \underset{(h,\lambda,\varepsilon_0)- \text{FSD}}{{\succcurlyeq}} \nu$. Note that $\nu$ dominating $\mu$ would be tested separately. Given a desired confidence $1-\alpha$, the central limit theorem and bootstrap results in Theorem \ref{thm:limitDistribution} suggest rejecting $H_0$ if
\begin{equation}
\varepsilon_{h,\lambda}(\hat{\mu}_n,\hat{\nu}_m)  \leq \varepsilon_0 + \sqrt{\frac{m+n}{mn}}  \sigma_B(\hat\mu_n, \hat{\nu}_m)\Phi^{-1}(\alpha),
\label{eq:CLT-E-SSD}
\end{equation}
where $\sigma_B^2$ is the bootstrapped variance (See Algorithm \ref{alg:1})  and $\Phi$ is the CDF of the standard normal distribution. By \cref{thm:limitDistribution}, this test will be asymptotically valid.\\

\paragraph{Relative testing} A downside of absolute testing is that it requires specifying a threshold $\varepsilon_0$. This threshold can be meaningful in pairwise comparisons, but when multiple distributions (e.g. multiple language models evaluations) are being compared for ranking purposes, it is difficult to determine a priori what threshold to use to ensure that the distributions can be separated. As in \cite{nitsure2023risk}, we therefore also present a \emph{relative} test  that compares each of $k$ random vectors with measures $\mu_1,\dots,\mu_k$ using a one-versus-all violation ratio. First, 
consider all pairs of violations ratios between the $k$ measures:
$\varepsilon^{(h,\lambda)}_{ij}=\varepsilon_{h,\lambda}(\mu_i,\mu_j) \text{ for } i, j \in \{1\dots k\},i\neq j.$
Let $M=(\mu_1,\dots \mu_k)$, and define the one-versus-all violation ratio of the dominance of $\mu_i$ on all other variables $\mu_j, j\neq i$:
$\varepsilon^{(h,\lambda)}_i(M) = \frac{1}{k-1}\sum_{j\neq i }\varepsilon^{(h,\lambda)}_{ij}.$
We can then define \emph{relative stochastic dominance} $(h,\lambda)$-R-FSD as 
$ \mu_{i_1} \underset{R-\text{FSD}}{\succcurlyeq} \mu_{i_2} \dots  \underset{R-\text{FSD}}{\succcurlyeq}  \mu_{i_{k}} \iff  \varepsilon^{(h,\lambda)}_{i_1}(M ) \leq \dots\leq \varepsilon^{(h,\lambda)}_{i_k}(M ).$ Here the most dominating model is the one with the lowest one-versus-all violation ratio. Testing for relative dominance of $\mu_i$ on $\mu_j$ we can then compare their one-versus-all ratios via the following statistic:
$   \Delta \varepsilon^{(\ell)}_{ij}(M)=\varepsilon^{(\ell)}_i(M) -  \varepsilon^{(\ell)}_j(M).$To test for $(h,\lambda)$-R-FSD of $\mu_i$ versus $\mu_j$ then, we have the null hypothesis $H_0: \Delta \varepsilon_{ij}(M) \geq 0$ versus the alternative $H_1 : \Delta \varepsilon_{ij}(M) < 0$. It is possible to extend the central limit theorem and bootstrapping results in Theorem \ref{thm:limitDistribution} to this relative statistic under an independence assumption (omitted for brevity). Let $\hat{M}_{n} = (\hat{\mu}_{1,n},\dots \hat{\mu}_{k,n})$ be the empirical measures for $n$ samples from each distribution. As in the absolute case, we then reject $H_0$ with a confidence $1-\alpha$ if:
$ \Delta \varepsilon_{i_1,i_2}(\hat{M}_{n}) \leq \sqrt{\frac{1}{n}}  \sigma_{B,\text{relative}}(i_1, i_2)\Phi^{-1}(\alpha)$
 where $\sigma_{B,\text{relative}}^2(i_1, i_2)$ is the bootstrapped variance (see Algorithm \ref{alg:1} for the variance expression).\\

\paragraph{Multitesting and Ranking}
To apply the above multivariate FSD violation ratio hypothesis tests to ranking of multiple distributions, we follow the approach of \cite{nitsure2023risk}. We aggregate the set of all pairwise tests, ensure multitesting statistical validity via Family-Wise Error Rate (FWER) control, and, if needed, aggregate the pairwise test results into a numerical ranking. Our approach is described below and summarized in Algorithm \ref{alg:1} that invokes the Sinkhorn algorithm \citep{cuturi2013lightspeed}, which computes EOT between distributions on $N$ points with a complexity of $O(N^2)$. \\

\paragraph{Ranking multiple distributions}
In our experiments below, we seek to use the pairwise tests above to obtain a statistically valid ranking of a set of $k$ random vectors $X^{(i)}$ with measures $\mu^{(i)}$, e.g. samples of $X^{(i)}$ can be the per-sample evaluation metrics for a set of language models. To rank $k$ models at a specified significance level $\alpha$, we first test all $k^2-k$ pairs $(\mu^{(i)}, \mu^{(j)})$, $i \neq j$, employing a FWER correction (see next paragraph) to guarantee a valid control on the overall false rejection rate. This yields a set of trinary outcomes for each $(i,j)$ pair, with 1 if the null is rejected in the positive direction, -1 if the null is rejected in the negative direction, and 0 if the null is not rejected. These pairwise rankings are then combined into a single rank using a simple Borda count \citep{borda} rank aggregation algorithm.\\

\paragraph{Multitesting FWER control} 
When running a family of $T$ tests each at a significance level $1-\alpha$, the true probability that at least one test falsely rejects the null scales with $T$. If the output of all $T$ tests needs to be trusted simultaneously, instead it is desirable that the probability of \emph{any} test falsely rejects the null is less than or equal to the specified $\alpha$. Achieving this requires adjusting, or ``correcting'' the significance level of each of the $T$ tests, in a process called Family-Wise Error Rate (FWER) control. In the present work, we use the Bonferroni correction, which sets the significance level of the $i$th test to $1-\alpha_i$ with $\alpha_i = \alpha/T$. Note that while the Bonferroni correction is known to be pessimistic, we choose it as it sets uniform significance levels for all tests, as opposed to other strategies such as the Holm correction \citep{holm1979simple} which is tighter but yields highly nonuniform statistical power across the family of tests. Exploring sensible ways to employ nonuniform FWER control approaches in the context of performance ranking is an interesting avenue for future work.

\section{Experiments}
\label{sec:experiments}

All experiments were run on NVIDIA A100 80GB GPUs using PyTorch \citep{ansel2024pytorch} (v.2.3.0, BSD-3 license) and the Python Optimal Transport package \citep{flamary2021pot} (v.0.9.3, MIT license) to compute optimal transport distances with and without regularization.  

\subsection{Synthetic Data Experiment}

In this section we analyze our method on a synthetic toy dataset that enables us to parametrically control the level of the multivariate stochastic dominance between two random variables. Given a dimension $d$, a parameter $p\in[0,1]$, and mean and variance parameters $\mu$, $\sigma^2$, our synthetic dataset is generated by sampling from the multivariate random variables $X, Y\in\mathbb{R}^d$:
\begin{itemize}
\item $X_i \sim \mathcal{N}(\mu,\sigma^2)$ for $i=1,\ldots,d$
\item $Y_i = X_i + (2\cdot B_i(p) - 1)U_i$ with $B_i(p)=\mbox{Bernoulli}(p)\in\{0,1\}$ and $U_i=\mbox{Uniform}(0,1)$.
\end{itemize}
These variables $X$ and $Y$ are designed in such a way that $p$ parametrizes the dominance of $Y$ over $X$.
In particular, $X \underset{(h,\lambda)-\text{FSD}}{\succcurlyeq} Y$ if $p<0.5$, and $Y \underset{(h,\lambda)-\text{FSD}}{\succcurlyeq} X$ if $p>0.5$. 

As a baseline for our synthetic experiments, we also %
compute the violation ratio for the standard FSD framework using unregularized OT (EMD), i.e.\ $\varepsilon_{\text{hinge},0}$, as a function of $p$ for fixed $d=5$, $\mu=0$, $\sigma^2=1.0$ and $N=100$ samples from $X$ and $Y$.
We then investigate how well this baseline is approximated by $\varepsilon_{\log,\lambda>0}$, the entropically regularized ratio with a logistic cost as in \cref{ex:OTCosts}.
\cref{fig:toyds1} shows that as the entropic regularization parameter $\lambda$ decreases towards 0 and as the gain of the logistic cost $\beta$ increases, $\varepsilon_{\log,\lambda>0}$ converges towards $\varepsilon_{\text{hinge},0}$ across all values of $p\in[0,1]$. In all cases, multivariate FSD violation ratio predicts linearly $p$, indicating that it is captures well the FSD violations. 

\begin{figure}[ht!]
\centering
\includegraphics[width=\linewidth]{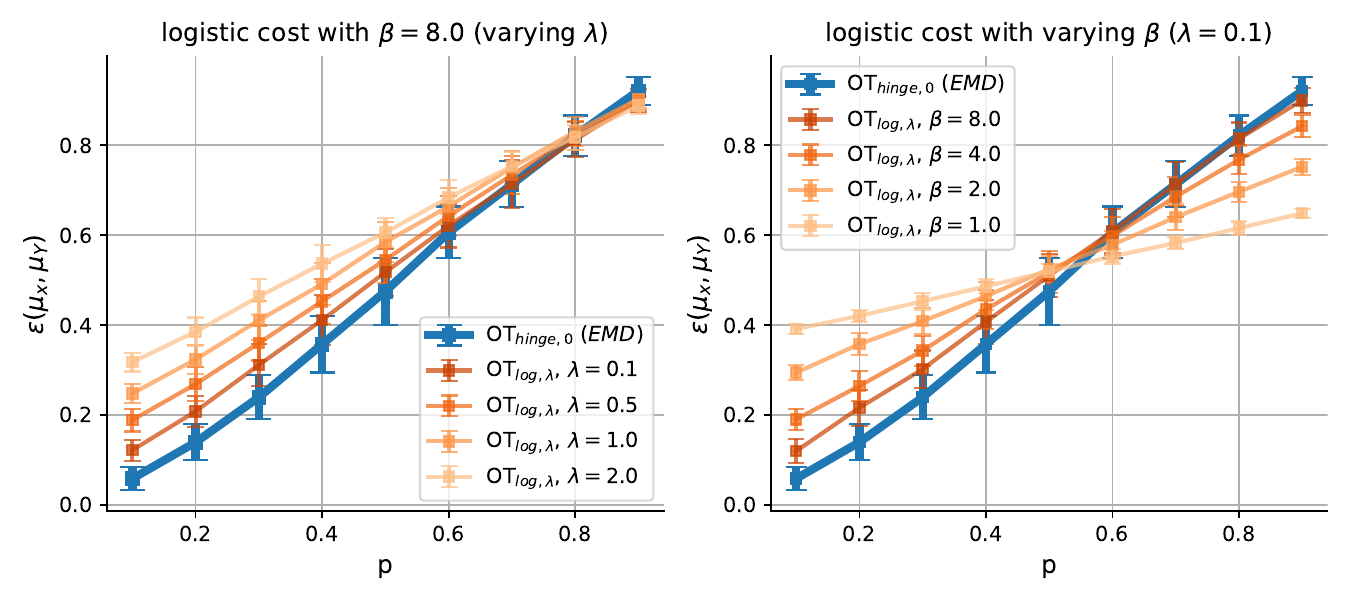} 
\vspace{-0.8cm}
\caption{Convergence of $\varepsilon_{\log,\lambda>0}$ towards $\varepsilon_{\text{hinge},0}$ in the synthetic dataset introduced in this section. Left panel: for a fixed parameter $\beta=0$ of the logistic cost, $\varepsilon_{\log,\lambda>0}$ converge towards $\varepsilon_{\text{hinge},0}$ as $\lambda$ is decreased toward $0$.
Right panel: for a fixed entropic regularization parameter $\lambda=0.1$, $\varepsilon_{\log,\lambda}$ converges towards $\varepsilon_{\text{hinge},0}$ as the gain of the logistic cost $\beta$ increases.
All simulations were generated for $d=5$, $\mu=0$, $\sigma^2=1.0$ and $N=100$. Points and error bars indicate average and standard deviation across 100 repetitions.}
\label{fig:toyds1}
\end{figure}

\begin{figure}[htp!]
\centering
\includegraphics[scale=0.4]{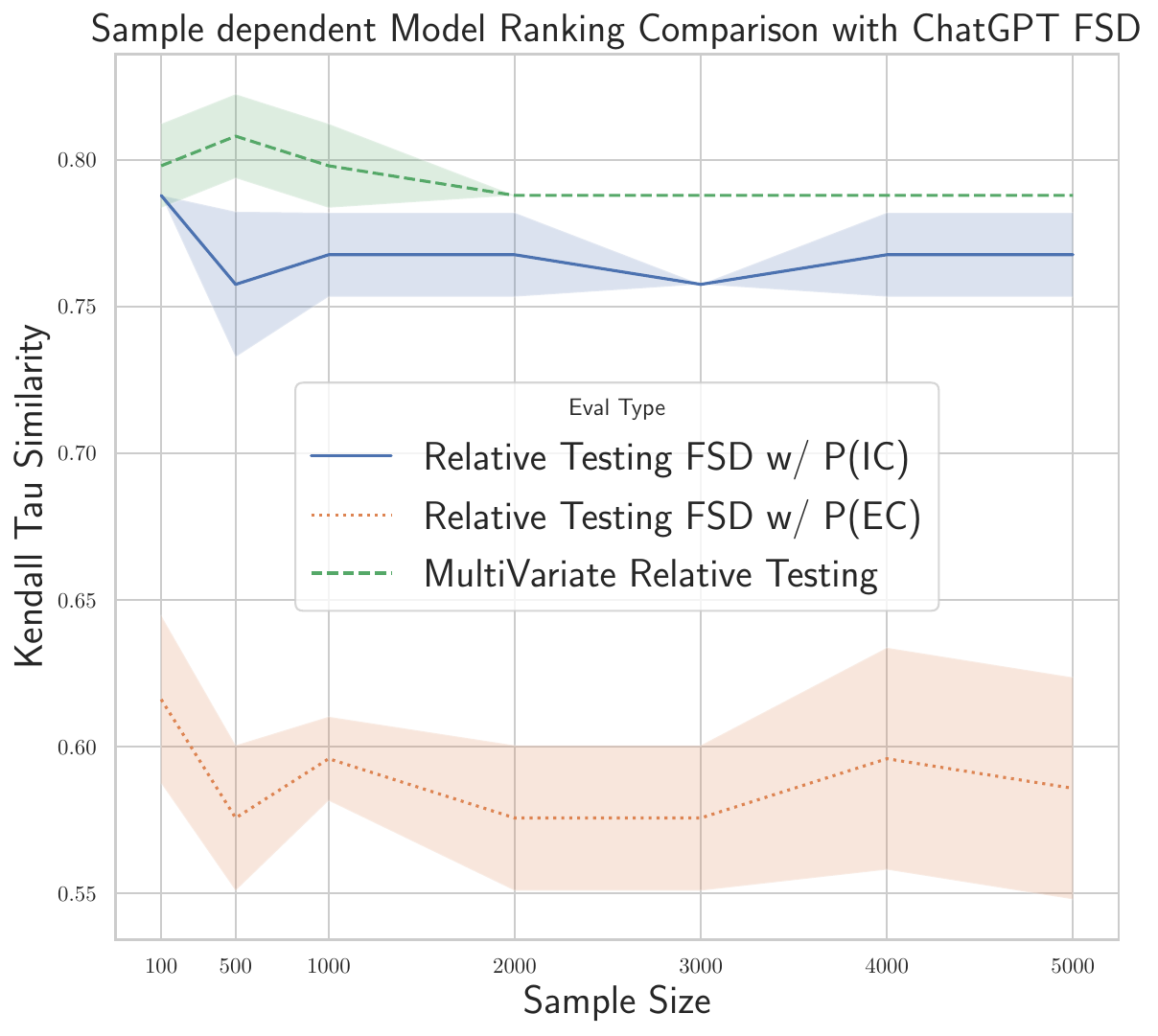} 
\caption{Mix Instruct Results: Comparison of Multivariate FSD to Reduction to univariate FSD with aggregation across the dimensions. }
\label{fig:mixinstruct}
\end{figure}

\subsection{LLM Benchmarking}
To test our method on real world scenarios, we have chosen a current topic of  significant interest to the community: LLM Benchmarking. We show through our experiments that our method can provide a more holistic ranking of LLMs evaluated on different metrics as opposed to present strategies which involve mean win rate. To demonstrate our method's application to LLM Benchmarking, we have conducted assessments on two different sets of data.

\paragraph{Mixinstruct}
For our first evaluation we use the dataset from \cite{jiang2023llm} (MIT license) that consists of responses from 12 different instruction following LLMs, with each response evaluated on 9 metrics such as BLEU, ROUGE, BERTScore, BARTScore, etc. The data has a train (100K rows) and test (5k rows) split where each row consists of an instruction, input sentence, the expected output from users, as well as the responses of a set of different LLMs with their decoding parameters and evaluation scores on different metrics. However for the test set, \cite{jiang2023llm} also did a pairwise evaluation of the responses from the models by asking ChatGPT which response was better. We use this test set and generate a ranking of the LLMs using Entropic Multivariate FSD (Algorithm \ref{alg:1}), where for each LLM we consider empirical measures in dimension $d=9$ and using  $n$ samples varying from 100 to 5000. We  compare the resulting ranking from the entropic multivariate FSD (using logistic loss with $\beta=0.2$, an entropic regularizer $\lambda=0.1$ and $1000$ bootstraps) with the univariate FSD ranking  of ChatGPT scoring, as a function of the sample size $n$.   
In \cref{fig:mixinstruct} we depict the asymptotics of the ranks from our tests as function of the sample size. Using our FSD framework, we rank the models based on their ChatGPT scores %
as it has been shown to be a human proxy.
 
We then compare different methods that reduce multivariate ordering to univariate FSD via aggregation. The first method is  a  portfolio aggregation with  Independent Copula P(IC) \citep{nitsure2023risk}, where a dimension is normalized with a global univariate CDF across all models and a geometric mean is performed across all dimensions. A univariate FSD is then applied on the resulting univariate random variables. The second method, referred to as  portfolio aggregation with  Empirical Copula P(EC) \citep{empricalcopula,copula-aggregation}, estimates a global multivariate  CDF across all models, and then assigns to each evaluation vector the value of its CDF. Similarly, a univariate FSD is applied on this one dimensional data.\\    

\paragraph{Results} We see from \cref{fig:mixinstruct} that the multivariate FSD, is sample efficient and has the highest Kendall tau rank similarity with GPT score. We hypothesize that this thanks to its ability to capture dependencies between the metrics. The independent copula  P(IC), ignores the dependencies and hence lags a little behind but is still sample efficient. Whilst the empirical copula P(EC) captures the dependencies, it suffers from the curse of dimension and is not sample efficient.

\section{Conclusion}

In this paper, we proposed entropic multivariate FSD violation ratio as a statistic for assessing multivariate first order dominance. We addressed the convergence of these ratios as the entropic regularization goes to zero and established a central limit theorem and bootstrap consistency for this statistic. These statistical properties were leveraged in a framework for multivariate FSD testing which was applied to  multi-metrics benchmarking machine learning models, showing its benefits in capturing the metric dependencies.  Casting testing for stochastic order as an optimal transport problem with a smooth cost and devising an entropic regularization to ensure beneficial statistical and computational properties  is an interesting framework that we envision to be useful and versatile for other stochastic orders. For instance the $\mu$-first order dominance of \cite{galichon2012dual} uses optimal transport maps as  multivariate quantiles \citep{carlier2014vector} and defines a $\mu$-stochastic dominance; our entropic violation ratio framework can be extended to that case and, upon proving central limit theorems on the OT potentials, will lead to similar central limit theorems to the one presented in this work. Similarly, for the multivariate Lorenz order \citep{fan2024lorenz} that is of interest when the agent making the choice is risk averse.  The Lorenz order can be expressed in terms of optimal transport maps and  can be extended to our statistical testing framework using the tools introduced in this paper. We leave these developments for future work.

\bibliographystyle{abbrvnat}
{
\small
\bibliography{ref}

\begin{thebibliography}{56}
\providecommand{\natexlab}[1]{#1}
\providecommand{\url}[1]{\texttt{#1}}
\expandafter\ifx\csname urlstyle\endcsname\relax
  \providecommand{\doi}[1]{doi: #1}\else
  \providecommand{\doi}{doi: \begingroup \urlstyle{rm}\Url}\fi

\bibitem[Ambrosio et~al.(2005)Ambrosio, Gigli, and
  Savar{\'e}]{ambrosio2005gradient}
L.~Ambrosio, N.~Gigli, and G.~Savar{\'e}.
\newblock \emph{Gradient flows: in metric spaces and in the space of
  probability measures}.
\newblock Birkh{\"a}user Basel, Basel, 2005.
\newblock URL \url{https://doi.org/10.1007/978-3-7643-8722-8}.

\bibitem[Ansel et~al.(2024)Ansel, Yang, He, Gimelshein, Jain, Voznesensky, Bao,
  Bell, Berard, Burovski, et~al.]{ansel2024pytorch}
J.~Ansel, E.~Yang, H.~He, N.~Gimelshein, A.~Jain, M.~Voznesensky, B.~Bao,
  P.~Bell, D.~Berard, E.~Burovski, et~al.
\newblock Pytorch 2: Faster machine learning through dynamic python bytecode
  transformation and graph compilation.
\newblock In \emph{Proceedings of the 29th ACM International Conference on
  Architectural Support for Programming Languages and Operating Systems, Volume
  2}, pages 929--947, 2024.

\bibitem[Beeching et~al.(2023)Beeching, Fourrier, Habib, Han, Lambert, Rajani,
  Sanseviero, Tunstall, and Wolf]{open-llm-leaderboard}
E.~Beeching, C.~Fourrier, N.~Habib, S.~Han, N.~Lambert, N.~Rajani,
  O.~Sanseviero, L.~Tunstall, and T.~Wolf.
\newblock Open llm leaderboard.
\newblock
  \url{https://huggingface.co/spaces/HuggingFaceH4/open_llm_leaderboard}, 2023.

\bibitem[Billingsley(2013)]{billingsley2013convergence}
P.~Billingsley.
\newblock \emph{Convergence of probability measures}.
\newblock John Wiley \& Sons, 2013.

\bibitem[Bommasani et~al.(2023)Bommasani, Liang, and
  Lee]{bommasani2023holistic}
R.~Bommasani, P.~Liang, and T.~Lee.
\newblock Holistic evaluation of language models.
\newblock \emph{Annals of the New York Academy of Sciences}, 2023.

\bibitem[Carlier et~al.(2014)Carlier, Chernozhukov, and
  Galichon]{carlier2014vector}
G.~Carlier, V.~Chernozhukov, and A.~Galichon.
\newblock Vector quantile regression: an optimal transport approach.
\newblock \emph{arXiv preprint arXiv:1406.4643}, 2014.

\bibitem[Chang et~al.(2023)Chang, Wang, Wang, Wu, Zhu, Chen, Yang, Yi, Wang,
  Wang, et~al.]{chang2023survey}
Y.~Chang, X.~Wang, J.~Wang, Y.~Wu, K.~Zhu, H.~Chen, L.~Yang, X.~Yi, C.~Wang,
  Y.~Wang, et~al.
\newblock A survey on evaluation of large language models.
\newblock \emph{arXiv preprint arXiv:2307.03109}, 2023.

\bibitem[Constantine and Savits(1996)]{constantine1996multivariate}
G.~Constantine and T.~Savits.
\newblock A multivariate {F}aa di {B}runo formula with applications.
\newblock \emph{Transactions of the American Mathematical Society},
  348\penalty0 (2):\penalty0 503--520, 1996.

\bibitem[Cuturi(2013)]{cuturi2013lightspeed}
M.~Cuturi.
\newblock Sinkhorn distances: Lightspeed computation of optimal transport.
\newblock In C.~Burges, L.~Bottou, M.~Welling, Z.~Ghahramani, and
  K.~Weinberger, editors, \emph{Advances in Neural Information Processing
  Systems}, volume~26. Curran Associates, Inc., 2013.
\newblock URL
  \url{https://proceedings.neurips.cc/paper_files/paper/2013/file/af21d0c97db2e27e13572cbf59eb343d-Paper.pdf}.

\bibitem[de~Borda(1781)]{borda}
J.-C. de~Borda.
\newblock Mémoire sur les élections au scrutin.
\newblock \emph{Histoire de l'Académie Royale des Sciences}, 1781.

\bibitem[Del~Barrio et~al.(2018)Del~Barrio, Cuesta-Albertos, and
  Matr{\'a}n]{del2018optimal}
E.~Del~Barrio, J.~A. Cuesta-Albertos, and C.~Matr{\'a}n.
\newblock An optimal transportation approach for assessing almost stochastic
  order.
\newblock \emph{The Mathematics of the Uncertain: A Tribute to Pedro Gil},
  pages 33--44, 2018.

\bibitem[del Barrio et~al.(2018)del Barrio, Cuesta-Albertos, and
  Matr{\'a}n]{delbarrio2018stochastic}
E.~del Barrio, J.~A. Cuesta-Albertos, and C.~Matr{\'a}n.
\newblock An optimal transportation approach for assessing almost stochastic
  order.
\newblock In E.~Gil, E.~Gil, J.~Gil, and M.~{\'A}. Gil, editors, \emph{The
  Mathematics of the Uncertain: A Tribute to Pedro Gil}, pages 33--44. Springer
  International Publishing, Cham, 2018.
\newblock URL \url{https://doi.org/10.1007/978-3-319-73848-2_3}.

\bibitem[del Barrio et~al.(2023)del Barrio, Sanz, Loubes, and
  Niles-Weed]{delBarrio2023improved}
E.~del Barrio, A.~G. Sanz, J.-M. Loubes, and J.~Niles-Weed.
\newblock An improved central limit theorem and fast convergence rates for
  entropic transportation costs.
\newblock \emph{SIAM Journal on Mathematics of Data Science}, 5\penalty0
  (3):\penalty0 639--669, 2023.
\newblock URL \url{https://doi.org/10.1137/22M149260X}.

\bibitem[Dror et~al.(2018)Dror, Baumer, Shlomov, and
  Reichart]{dror2018hitchhiker}
R.~Dror, G.~Baumer, S.~Shlomov, and R.~Reichart.
\newblock The hitchhiker’s guide to testing statistical significance in
  natural language processing.
\newblock In \emph{Proceedings of the 56th annual meeting of the association
  for computational linguistics (volume 1: Long papers)}, pages 1383--1392,
  2018.

\bibitem[Dudley(2014)]{dudley2014uniform}
R.~M. Dudley.
\newblock \emph{Uniform central limit theorems}, volume 142.
\newblock Cambridge university press, 2014.

\bibitem[Eckstein and Nutz(2023)]{eckstein2023convergence}
S.~Eckstein and M.~Nutz.
\newblock Convergence rates for regularized optimal transport via quantization.
\newblock \emph{Mathematics of Operations Research, Articles in Advance}, pages
  1--18, 2023.
\newblock URL \url{https://doi.org/10.1287/moor.2022.0245}.

\bibitem[Fan et~al.(2024)Fan, Henry, Pass, and Rivero]{fan2024lorenz}
Y.~Fan, M.~Henry, B.~Pass, and J.~A. Rivero.
\newblock Lorenz map, inequality ordering and curves based on multidimensional
  rearrangements, 2024.

\bibitem[Flamary et~al.(2021)Flamary, Courty, Gramfort, Alaya, Boisbunon,
  Chambon, Chapel, Corenflos, Fatras, Fournier, Gautheron, Gayraud, Janati,
  Rakotomamonjy, Redko, Rolet, Schutz, Seguy, Sutherland, Tavenard, Tong, and
  Vayer]{flamary2021pot}
R.~Flamary, N.~Courty, A.~Gramfort, M.~Z. Alaya, A.~Boisbunon, S.~Chambon,
  L.~Chapel, A.~Corenflos, K.~Fatras, N.~Fournier, L.~Gautheron, N.~T. Gayraud,
  H.~Janati, A.~Rakotomamonjy, I.~Redko, A.~Rolet, A.~Schutz, V.~Seguy, D.~J.
  Sutherland, R.~Tavenard, A.~Tong, and T.~Vayer.
\newblock Pot: Python optimal transport.
\newblock \emph{Journal of Machine Learning Research}, 22\penalty0
  (78):\penalty0 1--8, 2021.
\newblock URL \url{http://jmlr.org/papers/v22/20-451.html}.

\bibitem[Folland(1999)]{folland1999real}
G.~B. Folland.
\newblock \emph{Real analysis: modern techniques and their applications},
  volume~40.
\newblock John Wiley \& Sons, 1999.

\bibitem[Friedrichs(1944)]{friedrichs1944identity}
K.~O. Friedrichs.
\newblock The identity of weak and strong extensions of differential operators.
\newblock \emph{Transactions of the American Mathematical Society},
  55:\penalty0 132--151, 1944.

\bibitem[Galichon and Henry(2012)]{galichon2012dual}
A.~Galichon and M.~Henry.
\newblock Dual theory of choice with multivariate risks.
\newblock \emph{Journal of Economic Theory}, 147\penalty0 (4):\penalty0
  1501--1516, 2012.

\bibitem[Genevay et~al.(2019)Genevay, Chizat, Bach, Cuturi, and
  Peyr\'{e}]{genevay2019sample}
A.~Genevay, L.~Chizat, F.~Bach, M.~Cuturi, and G.~Peyr\'{e}.
\newblock Sample complexity of sinkhorn divergences.
\newblock In K.~Chaudhuri and M.~Sugiyama, editors, \emph{Proceedings of the
  Twenty-Second International Conference on Artificial Intelligence and
  Statistics}, volume~89 of \emph{Proceedings of Machine Learning Research},
  pages 1574--1583. PMLR, 16--18 Apr 2019.
\newblock URL \url{https://proceedings.mlr.press/v89/genevay19a.html}.

\bibitem[Goldfeld et~al.(2024{\natexlab{a}})Goldfeld, Kato, Rioux, and
  Sadhu]{goldfeld2024limit}
Z.~Goldfeld, K.~Kato, G.~Rioux, and R.~Sadhu.
\newblock Limit theorems for entropic optimal transport maps and sinkhorn
  divergence.
\newblock \emph{Electronic Journal of Statistics}, 18\penalty0 (1):\penalty0
  980--1041, 2024{\natexlab{a}}.

\bibitem[Goldfeld et~al.(2024{\natexlab{b}})Goldfeld, Kato, Rioux, and
  Sadhu]{goldfeld2024statistical}
Z.~Goldfeld, K.~Kato, G.~Rioux, and R.~Sadhu.
\newblock Statistical inference with regularized optimal transport.
\newblock \emph{Information and Inference: A Journal of the IMA}, 13\penalty0
  (1):\penalty0 iaad056, 2024{\natexlab{b}}.

\bibitem[Graf and Luschgy(2000)]{graf2007foundations}
S.~Graf and H.~Luschgy.
\newblock \emph{Foundations of Quantization for Probability Distributions}.
\newblock Springer Berlin Heidelberg, Berlin, Heidelberg, 2000.
\newblock URL \url{https://doi.org/10.1007/BFb0103945}.

\bibitem[Groppe and Hundrieser(2023)]{groppe2023lower}
M.~Groppe and S.~Hundrieser.
\newblock Lower complexity adaptation for empirical entropic optimal transport.
\newblock \emph{arXiv preprint arXiv:2306.13580}, 2023.

\bibitem[Holm(1979)]{holm1979simple}
S.~Holm.
\newblock A simple sequentially rejective multiple test procedure.
\newblock \emph{Scandinavian journal of statistics}, pages 65--70, 1979.

\bibitem[Huang et~al.(2023)Huang, Zhang, Sun, et~al.]{huang2023trustgpt}
Y.~Huang, Q.~Zhang, L.~Sun, et~al.
\newblock Trustgpt: A benchmark for trustworthy and responsible large language
  models.
\newblock \emph{arXiv preprint arXiv:2306.11507}, 2023.

\bibitem[Hundrieser et~al.(2022)Hundrieser, Staudt, and
  Munk]{hundrieser2022empirical}
S.~Hundrieser, T.~Staudt, and A.~Munk.
\newblock Empirical optimal transport between different measures adapts to
  lower complexity.
\newblock \emph{arXiv preprint arXiv:2202.10434}, 2022.

\bibitem[Hundrieser et~al.(2024)Hundrieser, Klatt, and
  Munk]{hundrieser2024limit}
S.~Hundrieser, M.~Klatt, and A.~Munk.
\newblock {Limit distributions and sensitivity analysis for empirical entropic
  optimal transport on countable spaces}.
\newblock \emph{The Annals of Applied Probability}, 34\penalty0 (1B):\penalty0
  1403 -- 1468, 2024.
\newblock \doi{10.1214/23-AAP1995}.
\newblock URL \url{https://doi.org/10.1214/23-AAP1995}.

\bibitem[Jiang et~al.(2023)Jiang, Ren, and Lin]{jiang2023llm}
D.~Jiang, X.~Ren, and B.~Y. Lin.
\newblock Llm-blender: Ensembling large language models with pairwise ranking
  and generative fusion.
\newblock \emph{arXiv preprint arXiv:2306.02561}, 2023.

\bibitem[Keener(2010)]{keener2010theoretical}
R.~W. Keener.
\newblock \emph{Theoretical Statistics: Topics for a Core Course}.
\newblock Springer New York, New York, NY, 2010.
\newblock URL \url{https://doi.org/10.1007/978-0-387-93839-4}.

\bibitem[Kouaissah(2021)]{kouaissah2021using}
N.~Kouaissah.
\newblock Using multivariate stochastic dominance to enhance portfolio
  selection and warn of financial crises.
\newblock \emph{The Quarterly Review of Economics and Finance}, 80:\penalty0
  480--493, 2021.

\bibitem[Manole and Niles-Weed(2024)]{manole2024sharp}
T.~Manole and J.~Niles-Weed.
\newblock Sharp convergence rates for empirical optimal transport with smooth
  costs.
\newblock \emph{The Annals of Applied Probability}, 34\penalty0 (1B):\penalty0
  1108--1135, 2024.

\bibitem[Maso(1993)]{dal2012introduction}
G.~D. Maso.
\newblock \emph{An introduction to {$\Gamma$}-convergence}.
\newblock Birkh{\"a}user Boston, Boston, MA, 1993.
\newblock URL \url{https://doi.org/10.1007/978-1-4612-0327-8}.

\bibitem[Mena and Niles-Weed(2019)]{mena2019statistical}
G.~Mena and J.~Niles-Weed.
\newblock Statistical bounds for entropic optimal transport: sample complexity
  and the central limit theorem.
\newblock In H.~Wallach, H.~Larochelle, A.~Beygelzimer, F.~d\textquotesingle
  Alch\'{e}-Buc, E.~Fox, and R.~Garnett, editors, \emph{Advances in Neural
  Information Processing Systems}, volume~32. Curran Associates, Inc., 2019.
\newblock URL
  \url{https://proceedings.neurips.cc/paper_files/paper/2019/file/5acdc9ca5d99ae66afdfe1eea0e3b26b-Paper.pdf}.

\bibitem[MosaicLM(2023)]{mosaicLM}
MosaicLM.
\newblock Llm evaluation scores.
\newblock \url{https://www.mosaicml.com/llm-evaluation}, 2023.

\bibitem[Nietert et~al.(2021)Nietert, Goldfeld, and Kato]{nietert2021smooth}
S.~Nietert, Z.~Goldfeld, and K.~Kato.
\newblock Smooth $ p $-{W}asserstein distance: structure, empirical
  approximation, and statistical applications.
\newblock In \emph{International Conference on Machine Learning}, pages
  8172--8183. PMLR, 2021.

\bibitem[Nitsure et~al.(2023)Nitsure, Mroueh, Rigotti, Greenewald, Belgodere,
  Yurochkin, Navratil, Melnyk, and Ross]{nitsure2023risk}
A.~Nitsure, Y.~Mroueh, M.~Rigotti, K.~Greenewald, B.~Belgodere, M.~Yurochkin,
  J.~Navratil, I.~Melnyk, and J.~Ross.
\newblock Risk assessment and statistical significance in the age of foundation
  models.
\newblock \emph{arXiv preprint arXiv:2310.07132}, 2023.

\bibitem[Nutz(2021)]{nutz2021introduction}
M.~Nutz.
\newblock Introduction to entropic optimal transport.
\newblock \emph{Lecture notes, Columbia University}, 2021.

\bibitem[Ogryczak and Ruszczynski(2002)]{ogryczak2002dual}
W.~Ogryczak and A.~Ruszczynski.
\newblock Dual stochastic dominance and related mean-risk models.
\newblock \emph{SIAM Journal on Optimization}, 13\penalty0 (1):\penalty0
  60--78, 2002.

\bibitem[R\"omisch(2006)]{romisch2006delta}
W.~R\"omisch.
\newblock Delta method, infinite dimensional.
\newblock In \emph{Encyclopedia of Statistical Sciences}. John Wiley \& Sons,
  Ltd, 2006.
\newblock URL \url{https://doi.org/10.1002/0471667196.ess3139}.

\bibitem[Ruschendorf(1976)]{empricalcopula}
L.~Ruschendorf.
\newblock {Asymptotic Distributions of Multivariate Rank Order Statistics}.
\newblock \emph{The Annals of Statistics}, 4\penalty0 (5):\penalty0 912 -- 923,
  1976.

\bibitem[Santambrogio(2015)]{santambrogio2015applied}
F.~Santambrogio.
\newblock \emph{Optimal Transport for Applied Mathematicians: Calculus of
  Variations, PDEs, and Modeling}.
\newblock Springer International Publishing, Cham, 2015.
\newblock URL \url{https://doi.org/10.1007/978-3-319-20828-2}.

\bibitem[Shaked and Shanthikumar(2007)]{shaked2007stochastic}
M.~Shaked and J.~G. Shanthikumar.
\newblock \emph{Stochastic Orders}.
\newblock Springer New York, New York, NY, 2007.
\newblock URL \url{https://doi.org/10.1007/978-0-387-34675-5}.

\bibitem[Shapiro(1990)]{shapiro1990concepts}
A.~Shapiro.
\newblock On concepts of directional differentiability.
\newblock \emph{Journal of optimization theory and applications}, 66:\penalty0
  477--487, 1990.

\bibitem[Shapiro(1991)]{shapiro1991asymptotic}
A.~Shapiro.
\newblock Asymptotic analysis of stochastic programs.
\newblock \emph{Annals of Operations Research}, 30\penalty0 (1):\penalty0
  169--186, Dec. 1991.

\bibitem[Stromme(2023)]{stromme2023minimum}
A.~J. Stromme.
\newblock Minimum intrinsic dimension scaling for entropic optimal transport.
\newblock \emph{arXiv preprint arXiv:2306.03398}, 2023.

\bibitem[Tu(2011)]{tu2011introduction}
L.~W. Tu.
\newblock \emph{An Introduction to Manifolds}.
\newblock Springer New York, New York, NY, 2011.
\newblock URL \url{https://doi.org/10.1007/978-1-4419-7400-6}.

\bibitem[Ulan et~al.(2021)Ulan, L{\"o}we, Ericsson, and
  Wingkvist]{copula-aggregation}
M.~Ulan, W.~L{\"o}we, M.~Ericsson, and A.~Wingkvist.
\newblock Copula-based software metrics aggregation.
\newblock \emph{Software Quality Journal}, 29\penalty0 (4):\penalty0 863--899,
  2021.
\newblock URL \url{https://doi.org/10.1007/s11219-021-09568-9}.

\bibitem[Ulmer et~al.(2022)Ulmer, Hardmeier, and Frellsen]{ulmer2022deep}
D.~Ulmer, C.~Hardmeier, and J.~Frellsen.
\newblock deep-significance-easy and meaningful statistical significance
  testing in the age of neural networks.
\newblock \emph{arXiv preprint arXiv:2204.06815}, 2022.

\bibitem[Van~der Vaart(2000)]{vaart2000asymptotic}
A.~W. Van~der Vaart.
\newblock \emph{Asymptotic statistics}, volume~3.
\newblock Cambridge university press, 2000.

\bibitem[van~der Vaart and Wellner(1996)]{vaart1996weak}
A.~W. van~der Vaart and J.~A. Wellner.
\newblock \emph{Weak Convergence and Empirical Processes: With Applications to
  Statistics}.
\newblock Springer New York, New York, NY, 1996.
\newblock URL \url{https://doi.org/10.1007/978-1-4757-2545-2}.

\bibitem[Varadarajan(1958)]{varadarajan1958convergence}
V.~S. Varadarajan.
\newblock On the convergence of sample probability distributions.
\newblock \emph{Sankhy{\=a}: The Indian Journal of Statistics (1933-1960)},
  19\penalty0 (1/2):\penalty0 23--26, 1958.

\bibitem[Villani(2009)]{villani2009optimal}
C.~Villani.
\newblock \emph{Optimal Transport: Old and New}.
\newblock Springer Berlin Heidelberg, Berlin, Heidelberg, 2009.
\newblock URL \url{https://doi.org/10.1007/978-3-540-71050-9}.

\bibitem[Zhang and Hardt(2024)]{zhang2024inherent}
G.~Zhang and M.~Hardt.
\newblock Inherent trade-offs between diversity and stability in multi-task
  benchmarks, 2024.

\end{thebibliography}
}

\vfill
\pagebreak
\appendix

\section{Algorithm}
Algorithm \ref{alg:1} describes our multitesting-based ranking procedure, for both the absolute and relative testing frameworks.
\begin{algorithm}[htp!]
\definecolor{asparagus}{rgb}{0.53, 0.66, 0.42}
\caption{Multivariate Stochastic Order Multi-testing (relative and \textcolor{violet}{\bf absolute})}
\label{alg:1}
\begin{algorithmic}[1]
 \STATE {\bfseries Input:} $\mu_1, ...,\mu_{k}$, $k$ models we want to rank corresponding to empirical measure $p_1=\frac{1}{n}\sum_{i=1}^n \delta_{x^1_i}$, \dots $p_k=\frac{1}{n}\sum_{i=1}^n \delta_{x^k_i}$, \textbf{\textcolor{violet}{Threshold: $\tau$}.}
 \\
 \STATE {\bfseries Input:} Desired $h,\lambda$, $B$ number of bootstraps, $m=K^2$ number of comparisons, significance level $\alpha$.
    \STATE {\bfseries \textcolor{blue}{ Cache the bootstraps samples}}  
    
    \FOR{$j=1$ {\bfseries to} $k$}
    \STATE $p^{0}_j\gets p_j$
    \FOR{$b=1$ {\bfseries to} $B$}
    
     \STATE $p^{b}_j \gets   \textsc{ResampleWithReplacement}(p_j, n) $ %
   
      \ENDFOR
        \ENDFOR
 \STATE {\bfseries\textcolor{blue}{ Compute all violation ratios}}
   \FOR{$b=0$ {\bfseries to} $B$}
        \FOR{$i=1$ {\bfseries to} $k$}
           \FOR{$j=1$ {\bfseries to} $k$}
   \IF{$i\neq j$}   
   \STATE Cost matrix $[C_h]_{k,l} \gets h([p^b_i]_k - [p^b_j]_l)$. 
\COMMENT{{\color{asparagus}Use $h(z) = \log (1+ e^{\beta z})$. $[p^b_i]_k$ is the $k$th sample in $p^b_i$.}}
   \STATE $[C_{\bar{h}}]_{k,l} \gets [C_h]_{k,l} + [C_h]_{l,k}$.
   \STATE $\Pi_{i,j,b} \gets \operatorname{Sinkhorn} (C_h, \lambda, p_i^b, p_j^b)$, $\bar{\Pi}_{i,j,b} \gets \operatorname{Sinkhorn} (C_{\bar{h}}, \lambda, p_i^b, p_j^b)$.  
   \\\qquad\quad\COMMENT{{\color{asparagus}Sinkhorn alg. with costs $C_h$, $C_{\bar{h}}$ and entropic reg. $\lambda$.}}
   \STATE $\mathsf{OT}_{h,\lambda}(p_i^b,p_j^b) \gets \operatorname{Trace}( C_h^\top \Pi_{i,j,b})   + \lambda D_{\mathsf{KL}}(\bar{\Pi}_{i,j,b} || p_i^b\otimes p_j^b )$.
   \STATE $\mathsf{OT}_{\bar{h},\lambda}(p_i^b,p_j^b) \gets \operatorname{Trace}( C_{\bar h}^\top \bar\Pi_{i,j,b}) + \lambda D_{\mathsf{KL}}(\bar{\Pi}_{i,j,b} || p_i^b\otimes p_j^b )$.
     \STATE $\varepsilon_{b,i,j}\gets \varepsilon_{h,\lambda}(p_i^b,p_j^b) = \frac{\mathsf{OT}_{h,\lambda}(p_i^b,p_j^b)}{\mathsf{OT}_{\bar h,\lambda}(p_i^b,p_j^b)}$ in \eqref{eq:violationratio}.
     \ENDIF
     \ENDFOR
     \ENDFOR
     \ENDFOR
     \STATE $\varepsilon_{b,i,i}=0, \forall ~ b, i$

  \STATE {\bfseries \textcolor{blue}{Compute the sum statistics }}
   
\FOR{$b=0$ {\bfseries to} $B$}
\FOR{$i=1$ {\bfseries to} $k$}
\STATE $\varepsilon^i_{b} \gets\frac{1}{k-1} \sum_{j}\varepsilon_{b,i,j}$ 
\ENDFOR
\ENDFOR
  \STATE {\bfseries \textcolor{blue}{Compute the relative statistics }}
\STATE $\Delta \varepsilon^{i,j}_b= \varepsilon^i_{b}- \varepsilon^j_{b},\forall b, i,j$

\STATE {\bfseries \textcolor{blue}{Compute the Bootstrap Variance }}
\FOR{$i=1$ {\bfseries to} $k$}
\FOR{$j=1$ {\bfseries to} $k$}
\STATE $\sigma_{ij}=\sqrt{\frac{1}{B-1}\sum_{b=1}^B( \Delta \varepsilon^{i,j}_b - \textsc{Mean}(\Delta \varepsilon^{i,j}_b,b) )^2}$ 
\STATE \textbf{\textcolor{violet}{$\sigma^{\mathrm{abs}}_{ij}=\sqrt{\frac{1}{B-1}\sum_{b=1}^B(  \varepsilon_{b,i,j} - \textsc{Mean}( \varepsilon_{b,i,j},b) )^2}$}}
\ENDFOR
\ENDFOR

\STATE {\bfseries \textcolor{blue}{Compute the test  }}
\STATE $\mathrm{Win}_{ij}= \mathrm{Win}^{\mathrm{abs}}_{ij}=0$
\FOR{$i=1$ {\bfseries to} $k$}
\FOR{$j=1$ {\bfseries to} $k$}
  \IF{$i \neq j$ and $\Delta \varepsilon^{i,j}_0 - \frac{1}{\sqrt{n}}\sigma_{ij}\Phi^{-1}(
\alpha/k^2) \leq 0$} 
    \STATE $\mathrm{Win}_{ij}=1$ \COMMENT{with confidence level $1-\alpha/k^2$}
    \ENDIF
\textcolor{violet}{\IF{$i \neq j$ and $ \varepsilon_{0.i,j} - \frac{1}{\sqrt{n}}\sigma^{\mathrm{abs}}_{ij}\Phi^{-1}(
\alpha/k^2) \leq \tau$} 
    \STATE $\mathrm{Win}^{\mathrm{abs}}_{ij}=1$ \COMMENT{with confidence level $1-\alpha/k^2$}
\ENDIF}
\ENDFOR
\ENDFOR

$\mathrm{rank} =\textsc{Borda}(\mathrm{Win})$ \COMMENT{with confidence level $1-\alpha$}\\
\textcolor{violet}{$\mathrm{rank}_{\mathrm{abs}} =\textsc{Borda}(\mathrm{Win}^{\mathrm{abs}})$ \COMMENT{with confidence level $1-\alpha$}}
\STATE \textbf{Return} $\mathrm{rank}$, \textcolor{violet}{$\mathrm{rank}_{\mathrm{abs}}$}
\end{algorithmic}
\end{algorithm}

\section{Proofs of main results}
\subsection{Proof of \texorpdfstring{\cref{lem:FSDEquivalent}}{Lemma 1}}
    Observe that the set $\{(x,y)\in\mathbb R^d\times\mathbb R^d: x\leq y\}$ is closed, as any limit point, $(x,y)$, of a sequence $\{(x_n,y_n)\}_{n\in\mathbb N}$ satisfying $x_n\leq y_n$ is such that $x\leq y$ as the relevant inequalities are preserved in the limit. It follows that $\mathbbm 1_{\{x\leq y\}}(x,y)$ is a lower semicontinuous function and hence there exists a coupling $\bar \pi\in\Pi(\mu,\nu)$ for which $\inf_{\pi\in\Pi(\mu,\nu)} \int \mathbbm 1_{\{x\leq y\}}(x,y)d\pi(x,y)=\int \mathbbm 1_{\{x\leq y\}}(x,y)d\bar \pi(x,y)$ for any choice of $\mu,\nu\in\mathbb R^d$ (cf. e.g. Theorem 4.1 in \citealp{villani2009optimal}).

    Assume that $\inf_{\pi\in\Pi(\mu,\nu)} \int \mathbbm 1_{\{x\leq y\}}(x,y)d\pi(x,y) = 0$ such that there exists a coupling $\bar \pi$ for which $0=\int \mathbbm 1_{\{x\leq y\}}(x,y)d\bar\pi(x,y)=\mathbb P_{(\hat X,\hat Y)\sim \bar \pi}(\hat X\leq \hat Y)=1-\mathbb P_{(\hat X,\hat Y)\sim \bar \pi}(\hat X> \hat Y)$ i.e. $\mathbb P_{(\hat X,\hat Y)\sim \bar \pi}(\hat X\geq \hat Y)\geq \mathbb P_{(\hat X,\hat Y)\sim \bar \pi}(\hat X> \hat Y)=1$ which, in light of Theorem 6.B.1. in \citep{shaked2007stochastic}, implies that $X\underset{\text{FSD}}{\succcurlyeq}  Y$. 
\qedsymbol

\subsection{Proof of \texorpdfstring{\cref{prop:regularizedToUnregarized}}{Proposition 2}}
\label{proof:prop:regularizedToUnregarized}

Throughout, we fix the metric $\mathsf d:( (x,y),(x',y'))\in\mathbb R^{2d}\times \mathbb R^{2d}\mapsto \|x-x'\|_1+\|y-y'\|_1$ on $\mathbb R^{2d}$. We first show that costs induced by functions $h$ satisfying $\mathrm{(SC_d)}$ are Lipschitz continuous with respect to $\mathsf d$ with constant $L$. 

\begin{lemma}
    \label{lem:LipschitzCost}
    For any $(x,y),(x',y')\in\mathbb R^{d}\times \mathbb R^d$, $|c(x,y)-c(x',y')|\leq L \mathsf d( (x,y),(x',y') )$.
\end{lemma}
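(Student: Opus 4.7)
The plan is to unwind the definition of $c$ as a sum, apply the triangle inequality, invoke the Lipschitz property of $h$ guaranteed by $\mathrm{(SC_d)}$, and then recognize the resulting sum as $\|x-x'\|_1 + \|y-y'\|_1$.

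More concretely, I would first write
\[
|c(x,y)-c(x',y')| = \Bigl|\sum_{i=1}^d \bigl(h(y_i-x_i) - h(y'_i-x'_i)\bigr)\Bigr| \leq \sum_{i=1}^d |h(y_i-x_i) - h(y'_i-x'_i)|
\]
by the ordinary triangle inequality. Next, since $h$ is $L$-Lipschitz under $\mathrm{(SC_d)}$, each summand is bounded by $L|(y_i-x_i)-(y'_i-x'_i)|$, and then a second application of the triangle inequality on $\mathbb R$ gives $|(y_i-y'_i) - (x_i-x'_i)| \leq |y_i-y'_i| + |x_i-x'_i|$. Summing over $i$ yields $L(\|y-y'\|_1 + \|x-x'\|_1) = L\,\mathsf d((x,y),(x',y'))$.

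There is no real obstacle here: the result is a direct consequence of the Lipschitz hypothesis and the fact that $c$ is a separable sum of univariate evaluations of $h$ on coordinate differences. The only care needed is the bookkeeping with the $\ell^1$ structure of the metric $\mathsf d$, which is chosen precisely so that the componentwise bounds add up cleanly. No further properties of $h$ beyond Lipschitzness are invoked, which is consistent with the remark following \cref{prop:regularizedToUnregarized} stating that only the Lipschitz portion of $\mathrm{(SC_d)}$ is needed for the first stability estimate.
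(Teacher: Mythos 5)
Your proof matches the paper's argument exactly: unwind $c$ as a coordinate sum, apply the triangle inequality, invoke $L$-Lipschitzness of $h$, and apply the triangle inequality once more to split $|(y_i-x_i)-(y'_i-x'_i)|$ into $|y_i-y'_i|+|x_i-x'_i|$, which sums to $L\,\mathsf d((x,y),(x',y'))$. No differences of substance.
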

\begin{proof}
    As $c(x,y)=\sum_{i=1}^dh(y_i-x_i)$ for $(x,y)\in\mathbb R^{d}\times \mathbb R^d$ and $h$ is Lipschitz continuous with constant $L\geq 0$, we have that, for any $x,y,x',y'\in\mathbb R^d$, 
    \[
        \begin{aligned}
            \left|c(x,y)-c(x',y')\right|\leq \sum_{i=1}^d\left|h(y_i-x_i)-h(y_i'-x_i')\right|&\leq L \sum_{i=1}^d\left|y_i-y_i'+x_i'-x_i\right|
            \\
            &\leq L(\|y-y'\|_1+\|x-x'\|_1).
        \end{aligned}
    \]
\end{proof}

\begin{lemma}
    \label{lem:contractionW1}
   For any choice of $(\mu,\nu),(\mu',\nu')\in\mathcal P(\mathbb R^d)\times \mathcal P(\mathbb R^d)$ and  $\pi\in\Pi(\mu,\nu),\pi'\in\Pi(\mu',\nu')$, we have that 
   \begin{equation}
        \label{eq:contractionW1}
        \left|\int c d(\pi-\pi')\right|\leq L\mathsf{W}_{1}(\pi,\pi'),
   \end{equation}
   where $\mathsf{W}_{1}(\pi,\pi')=\mathsf{OT}_{\mathsf d}(\pi,\pi')$ is the $1$-Wasserstein distance for the metric $\mathsf d$. 
\end{lemma}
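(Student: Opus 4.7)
The plan is to invoke the Kantorovich duality for the $1$-Wasserstein distance, using the Lipschitz bound on $c$ established in \cref{lem:LipschitzCost} as the key ingredient. Alternatively, one can argue directly via couplings, which I expect to be the cleanest approach since it avoids citing duality results.

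Concretely, I would take an arbitrary coupling $\gamma\in\Pi(\pi,\pi')$ of the measures $\pi$ and $\pi'$, viewed as probability measures on the Polish space $(\mathbb R^{2d},\mathsf d)$. Writing
\[
\int c\,d\pi-\int c\,d\pi'=\int\bigl[c(x,y)-c(x',y')\bigr]\,d\gamma\bigl((x,y),(x',y')\bigr),
\]
and applying the triangle inequality followed by \cref{lem:LipschitzCost}, I obtain
\[
\left|\int c\,d(\pi-\pi')\right|\leq\int\bigl|c(x,y)-c(x',y')\bigr|\,d\gamma\leq L\int\mathsf d\bigl((x,y),(x',y')\bigr)\,d\gamma.
\]
Taking the infimum of the right-hand side over all $\gamma\in\Pi(\pi,\pi')$ then yields exactly $L\,\mathsf{W}_1(\pi,\pi')=L\,\mathsf{OT}_{\mathsf d}(\pi,\pi')$, completing the proof.

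One mild subtlety to verify before integrating $c$ against $\pi-\pi'$ is that the integrals are finite, i.e.\ $c\in L^1(\pi)\cap L^1(\pi')$; but this follows immediately from the Lipschitz property of $c$, since $c(x,y)\leq c(0,0)+L(\|x\|_1+\|y\|_1)$ and the marginals of $\pi,\pi'$ need only have finite first moment for the upper bound in the statement to be nontrivial (if $\mathsf{W}_1(\pi,\pi')=\infty$ the inequality is vacuous, so we may assume it is finite, and a standard triangle-inequality argument shows the two integrals are then simultaneously finite or infinite). I don't expect any real obstacle here — the entire argument is a two-line application of Lipschitz continuity under a coupling, with the heavy lifting having already been done in \cref{lem:LipschitzCost}.
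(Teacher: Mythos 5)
Your proof is correct and follows essentially the same route as the paper's: fix an arbitrary coupling $\gamma\in\Pi(\pi,\pi')$, rewrite the difference of integrals against $\gamma$, apply \cref{lem:LipschitzCost} pointwise, and then take the infimum over $\gamma$. The only addition is your (reasonable, if slightly informal) aside about integrability, which the paper does not bother to spell out.
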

\begin{proof}
    Let $(\mu,\nu),(\mu',\nu')\in\mathcal P(\mathbb R^d)\times \mathcal P(\mathbb R^d)$ be arbitrary and fix any $\pi\in\Pi(\mu,\nu)$ and $\pi'\in\Pi(\mu',\nu')$. Let $\gamma\in\Pi(\pi,\pi')$ be arbitrary and consider  
    \[
        \begin{aligned}
        \left|\int c(x,y) d\pi(x,y)-\int c(x',y') d\pi(x',y')\right|&= \left|\int c(x,y)-c(x',y')d\gamma(x,y,x',y')\right|,
        \\
        &\leq \int \left|c(x,y)-c(x',y')\right|d\gamma(x,y,x',y'),
        \\
        &\leq L\int \mathsf d( (x,y),(x',y'))d\gamma.
        \end{aligned}
    \]
    As $\gamma$ is arbitrary, it follows that $\left|\int cd(\pi-\pi')\right|\leq \mathsf{OT}_{\mathsf d}(\pi,\pi')$.
\end{proof}

The proof of \cref{prop:regularizedToUnregarized} is based on the framework developed in \citep{eckstein2023convergence} and, in particular, the proof of Theorem 3.1 therein. As a byproduct of their proof technique, it is demonstrated that, for any $\pi\in\Pi(\mu,\nu)$, there exists a coupling $\pi'\in\Pi(\mu,\nu)$ which is independent of $c$ satisfying 
\begin{equation}
    \label{eq:costBound}
    \begin{aligned}
    \int c d\pi'-\int cd\pi&\leq 2L\left(\mathsf{OT}_{|\cdot|,0}(\mu^n,\mu)\wedge\mathsf{OT}_{|\cdot|,0}(\nu^n,\nu)\right),
   \end{aligned}
\end{equation}
provided that the cost satisfies the condition \eqref{eq:contractionW1} (see Definition 3.1 in \citep{eckstein2023convergence} for a weaker condition) for any choice of $\mu^n,\nu^n\in\mathcal P^n(\mathbb R^d)$, the set of all probability measures on $\mathbb R^d$ supported on at most $n$ points. We note that $\mathsf{OT}_{|\cdot|,0}$ is simply the $1$-Wasserstein distance for the distance  induced by the $1$-norm and recall that $|\cdot|$ has Lipschitz constant $1$ due to the reverse triangle inequality (recall the notation $\mathsf{OT}_{h,\lambda}$ from \cref{sec:EOT}). It is then shown in their proof that
\begin{equation}
    \label{eq:boundsShadowCoupling}
    \begin{aligned}
    \int c d\pi'-\int cd\pi&\leq 4LC\lambda,\quad 
    \mathsf D_{\mathsf{KL}}(\pi'||\mu\otimes \nu)\leq \frac{1}{\alpha}\log\left(\frac{1}{\lambda}\right),
   \end{aligned} 
\end{equation}
provided that there exists $\mu^n,\nu^n\in\mathcal P^n(\mathbb R^d)$ satisfying $\left(\mathsf{OT}_{|\cdot|,0}(\mu^n,\mu)\wedge\mathsf{OT}_{|\cdot|,0}(\nu^n,\nu)\right)\leq Cn^{-\alpha}$ for $n=\lfloor \lambda^{-\nicefrac{1}{\alpha}}\rfloor$ for some $\alpha\in(0,1]$ and $C\geq 0$. 

\begin{lemma}
\label{lem:OTnotMutuallySingular}
Fix $\delta>0$ and assume that $\eta\in\mathcal P(\mathbb R^d)$ is not mutually singular with respect to the Lebesgue measure. Then, there exists a measure $\eta^n\in\mathcal P^n(\mathbb R^d)$ satisfying 
    \[
        \mathsf{OT}_{|\cdot|,0}(\eta^n,\eta)\leq n^{-\nicefrac{1}{d}}\left(\frac{5}{\delta}\sum_{i=1}^d\mathbb E_{\eta}|X_i|^{1+\delta}+\frac{20 d}{\delta}\right),
    \]
    for every $n\geq \left(\nicefrac{20}{\delta}\right)^d$.
\end{lemma}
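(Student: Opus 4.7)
My plan is to build $\eta^n$ by partitioning $\mathbb R^d$ and assigning $\eta$-mass to representative points, then bounding the $W_1$ cost via an explicit coupling. The first attempt is the single-box construction: fix $R>0$, partition $[-R,R]^d$ into $n-1$ equal cubes $Q_i$ of side length $\ell=2R/(n-1)^{1/d}$, set $\eta^n=\sum_{i=1}^{n-1}\eta(Q_i)\delta_{z_i}+\eta(B_R^c)\delta_0$ with $z_i$ the cube centers, and couple by sending mass in $Q_i$ to $z_i$ and mass in $B_R^c$ to the origin. Since $\|x-z_i\|_1\leq d\ell/2$ for $x\in Q_i$, the interior cost is at most $d\ell/2\leq Rd/(n-1)^{1/d}$. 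For the exterior piece I would use $\mathbbm 1_{\|x\|_\infty>R}\leq(\|x\|_\infty/R)^\delta$ and $\|x\|_\infty^{1+\delta}\leq\sum_i|x_i|^{1+\delta}$ to obtain
\[
\int_{B_R^c}\|x\|_1\,d\eta\leq d\int_{B_R^c}\|x\|_\infty\,d\eta\leq dR^{-\delta}\int\|x\|_\infty^{1+\delta}d\eta\leq dR^{-\delta}\sum_i\mathbb E_\eta|X_i|^{1+\delta}=dCR^{-\delta}.
\]

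However, the two terms $Rd/(n-1)^{1/d}$ and $dCR^{-\delta}$ cannot simultaneously be made of order $n^{-1/d}$ for any $R$ chosen independently of $n$; the single-box approach only yields the suboptimal rate $n^{-\delta/(d(1+\delta))}$. To reach the claimed $n^{-1/d}$ rate I would refine this by a dyadic shell decomposition: set $A_0=[-1,1]^d$ and $A_k=\{x:2^{k-1}\leq\|x\|_\infty<2^k\}$ for $k\geq 1$. Markov's inequality then gives $\eta(A_k)\leq C\cdot 2^{-(k-1)(1+\delta)}$. I would allocate $n_k$ atoms to shell $A_k$ subject to $\sum_k n_k\leq n$, tile $A_k$ with cubes of side proportional to $2^k/n_k^{1/d}$, and place the corresponding mass at cube centers. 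The per-shell cost is then at most $d\cdot 2^k n_k^{-1/d}\cdot\eta(A_k)\leq dC\cdot 2^{1+\delta-k\delta}n_k^{-1/d}$, and a Lagrange optimization under the atom budget produces $n_k\propto 2^{-kd\delta/(d+1)}$, giving total cost of the form $\big(\sum_k c_k^{d/(d+1)}\big)^{(d+1)/d}n^{-1/d}$ with $c_k$ the shell cost coefficients.

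To close the constants I would bound the geometric sum $\sum_{k\geq 1}2^{-kd\delta/(d+1)}=(2^{d\delta/(d+1)}-1)^{-1}$ using the elementary inequality $2^t-1\geq t\log 2$ for $t>0$, which produces the $1/\delta$ factor visible in the target constants $5C/\delta$ and $20d/\delta$. The hypothesis $n\geq(20/\delta)^d$ ensures that at least one atom is assigned to each relevant shell (keeping the construction non-degenerate) and provides the slack needed to absorb floor-function rounding in the allocation as well as to replace $(n-1)^{1/d}$ by $n^{1/d}/2$-type bounds. The main obstacle I foresee is the final bookkeeping: recovering the clean separated form $5C/\delta+20d/\delta$ instead of a messier expression of the type $(C+d)^{(d+1)/d}/\delta^{(d+1)/d}$ likely forces a deliberate, non-Lagrange-optimal allocation that trades sharpness of the shell-splitting for clean additive constants matching the statement, together with a careful split of the cost into a purely geometric piece (from $A_0$, producing the $20d/\delta$ term) and a moment-driven piece (from the tail shells, producing the $5C/\delta$ term).
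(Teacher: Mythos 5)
Your approach is a genuinely different, self-contained reconstruction: you try to build the $n$-point quantizer by hand via a dyadic shell decomposition and a budget optimization over atoms per shell, whereas the paper's proof is essentially a citation. It invokes Lemmas~6.6 and~6.7 of Graf and Luschgy (\citealp{graf2007foundations}), whose statements already deliver the bound
\[
\inf_{\rho\in\mathcal P^n(\mathbb R^d)} \mathsf{OT}_{|\cdot|,0}(\rho,\eta)\leq n^{-1/d}\Bigl(2C_1\sum_i\mathbb E_{\eta}|X_i|^{1+\delta}+2dC_2\Bigr)\quad\text{for } n\geq (2C_3)^d,
\]
after which the paper only has to track the explicit constants $C_1=\tfrac{5}{2\delta}$, $C_2=C_3=\tfrac{10}{\delta}$ (coming from a gamma-ratio inequality in Graf--Luschgy) and then invoke Theorem~4.12 of the same reference to upgrade the infimum to an attained minimum; the ``not mutually singular'' hypothesis is there purely to make that last citation legitimate. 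If your explicit construction worked it would, as a bonus, make that hypothesis unnecessary and render \cref{lem:OTMutuallySingular} (the mollification step) redundant.

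However, there is a genuine gap, and you essentially flag it yourself. With shell cost coefficients $a_k=d\,2^k\,\eta(A_k)\lesssim dC\,2^{1+\delta}2^{-k\delta}$ (where $C=\sum_i\mathbb E_\eta|X_i|^{1+\delta}$), the Lagrange-optimal allocation yields
\[
\text{total cost}\;\lesssim\;\Bigl(\textstyle\sum_k a_k^{d/(d+1)}\Bigr)^{(d+1)/d}n^{-1/d}\;\lesssim\;dC\Bigl(\textstyle\sum_k 2^{-k\delta d/(d+1)}\Bigr)^{(d+1)/d}n^{-1/d}.
\]
The geometric sum is $\Theta(\delta^{-1})$ via your $2^t-1\geq t\log 2$ bound, but it then gets raised to the power $(d+1)/d$, producing a $\delta^{-(d+1)/d}$ prefactor rather than the $\delta^{-1}$ in the lemma. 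This is not a cosmetic bookkeeping issue: $\delta^{-(d+1)/d}$ is strictly worse than $\delta^{-1}$ as $\delta\downarrow 0$, so the argument as written proves a weaker statement. Your suggested fix (``a deliberate, non-Lagrange-optimal allocation'') is not spelled out, and the obvious alternatives do not rescue the rate: any allocation of the form $n_k\propto 2^{-\kappa k}$ that respects the budget $\sum_k n_k\leq n$ introduces a normalizing factor $(1-2^{-\kappa})^{-1/d}\sim\delta^{-1/d}$ through $n_k^{-1/d}$ and a second $\sim\delta^{-1}$ factor from summing the residual geometric tail, again landing on $\delta^{-1-1/d}$. The power $(d+1)/d$ (or equivalently $1+1/d$) appears to be structural in this ``sum-then-power'' route, so matching the clean $\tfrac{5}{\delta}C+\tfrac{20d}{\delta}$ constants would need a qualitatively different covering argument --- which is precisely what Graf--Luschgy's Lemmas~6.6/6.7 supply and the paper outsources to them.
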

\begin{proof}
    First, note that if $\sum_{i=1}^d\mathbb E_{\eta}|X_i|^{1+\delta}=\infty$, the inequality holds vacuously. We hence assume that $\sum_{i=1}^d\mathbb E_{\eta}|X_i|^{1+\delta}<\infty$.
    
    Fix $\delta>0$. Then, by Lemmas 6.6 and 6.7 in \citep{graf2007foundations}, there exists constants $C_1,C_2,C_3>0$ depending on $\delta$ for which 
    \begin{equation}
        \label{eq:minimalQuantization}
        \inf_{\rho\in\mathcal P^n(\mathbb R^d)} \mathsf{OT}_{|\cdot|,0}(\rho,\eta)\leq n^{-\nicefrac{1}{d}}\left( 2C_1\sum_{i=1}^d\mathbb E_{\eta}|X_i|^{1+\delta}+2 d C_2\right), 
    \end{equation}
    for every $n\geq (2C_3)^d$. Here $C_1=\frac{5}{2\delta}$ and $C_2,C_3>0$ can be chosen as any constants satisfying $\frac{\Gamma(2)\Gamma(\delta n-1)}{\Gamma(\delta n)}=\frac{1}{\delta n - 1}\leq \frac{C_2}{5 n}$ for every $n\geq \frac{C_3}{5}$. Observe that $C_2=C_3=\frac{10}{\delta}$ satisfy these conditions. 

    We conclude by noting that the infimum in \eqref{eq:minimalQuantization} is achieved by Theorem 4.12 in \citep{graf2007foundations}.
\end{proof}

\begin{lemma}
\label{lem:OTMutuallySingular}
Fix $\delta>0$ and $\eta\in\mathcal P(\mathbb R^d)$. Then, there exists a measure $\eta^n\in\mathcal P^n(\mathbb R^d)$ satisfying 
    \[
        \mathsf{OT}_{|\cdot|,0}(\eta^n,\eta)\leq n^{-\nicefrac{1}{d}}\left(\frac{5}{\delta}\sum_{i=1}^d\mathbb E_{\eta}|X_i|^{1+\delta}+\frac{20 d}{\delta}\right),
    \]
    for every $n\geq \left(\nicefrac{20}{\delta}\right)^d$.
\end{lemma}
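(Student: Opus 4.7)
\medskip
\noindent\textbf{Proof proposal for \cref{lem:OTMutuallySingular}.}

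The plan is to reduce to \cref{lem:OTnotMutuallySingular} by perturbing $\eta$ with an absolutely continuous reference measure, obtaining a quantizer for the perturbed measure, and then extracting a weakly convergent subsequence as the perturbation vanishes. First, if $\sum_{i=1}^d \mathbb E_\eta |X_i|^{1+\delta} = \infty$ the claimed bound is vacuous, so I may assume this sum is finite. Fix any reference $\rho \in \mathcal P(\mathbb R^d)$ that is absolutely continuous with respect to Lebesgue measure and has $\sum_{i=1}^d \mathbb E_\rho |X_i|^{1+\delta} < \infty$ (for instance, a standard Gaussian). For $\varepsilon \in (0,1)$ set $\eta_\varepsilon \coloneqq (1-\varepsilon)\eta + \varepsilon \rho$. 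Since $\rho$ is absolutely continuous, $\eta_\varepsilon$ is not mutually singular with respect to Lebesgue measure, so \cref{lem:OTnotMutuallySingular} produces, for every $n \geq (20/\delta)^d$, a measure $\eta^n_\varepsilon \in \mathcal P^n(\mathbb R^d)$ with
\[
\mathsf{OT}_{|\cdot|,0}(\eta^n_\varepsilon, \eta_\varepsilon) \leq n^{-1/d}\left( \frac{5}{\delta}\sum_{i=1}^d \mathbb E_{\eta_\varepsilon} |X_i|^{1+\delta} + \frac{20d}{\delta} \right).
\]

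Next, I specialize to $\varepsilon_k = 1/k$ and pass to the limit. By linearity, $\mathbb E_{\eta_{\varepsilon_k}} |X_i|^{1+\delta} \to \mathbb E_{\eta}|X_i|^{1+\delta}$ as $k \to \infty$, so the right-hand side above converges to the target bound. Furthermore, $\eta_{\varepsilon_k} \to \eta$ weakly (in fact, in 1-Wasserstein distance), and the measures $\eta^n_{\varepsilon_k}$ have uniformly bounded first moments: indeed, for each $i$ one has $\mathbb E_{\eta^n_{\varepsilon_k}} |X_i| \leq \mathbb E_{\eta_{\varepsilon_k}}|X_i| + \mathsf{OT}_{|\cdot|,0}(\eta^n_{\varepsilon_k}, \eta_{\varepsilon_k})$ by Kantorovich--Rubinstein, and both terms are uniformly bounded in $k$. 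Hence the family $\{\eta^n_{\varepsilon_k}\}$ is tight by Markov's inequality, so by Prokhorov's theorem a subsequence converges weakly to some $\eta^n \in \mathcal P(\mathbb R^d)$. Because each $\eta^n_{\varepsilon_k}$ is supported on at most $n$ points, writing $\eta^n_{\varepsilon_k} = \sum_{j=1}^n p^{k}_j \delta_{x^{k}_j}$ and applying tightness together with compactness of the simplex, one may pass to a further subsequence so that $p^k_j \to p_j$ and $x^k_j \to x_j \in \mathbb R^d$ for each $j$; the limit $\eta^n = \sum_{j=1}^n p_j \delta_{x_j}$ therefore lies in $\mathcal P^n(\mathbb R^d)$.

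Finally, since the cost $|\cdot|$ is nonnegative and continuous, the functional $\mathsf{OT}_{|\cdot|,0}$ is jointly lower semicontinuous with respect to weak convergence of the marginals (cf.\ Theorem 5.20 in \citealp{villani2009optimal}, applied via extraction of a weak limit of the optimal couplings $\pi^n_k \in \Pi(\eta^n_{\varepsilon_k}, \eta_{\varepsilon_k})$ and Portmanteau's theorem). Combining this with the preceding display yields
\[
\mathsf{OT}_{|\cdot|,0}(\eta^n, \eta) \leq \liminf_{k\to\infty} \mathsf{OT}_{|\cdot|,0}(\eta^n_{\varepsilon_k}, \eta_{\varepsilon_k}) \leq n^{-1/d}\left( \frac{5}{\delta} \sum_{i=1}^d \mathbb E_\eta |X_i|^{1+\delta} + \frac{20d}{\delta} \right),
\]
which is the desired inequality. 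The main delicate point is ensuring that the limit $\eta^n$ remains supported on at most $n$ points, which is handled by the explicit parameterization of the atoms above, and that the OT value passes to the limit, which requires the joint lower semicontinuity quoted from Villani rather than a one-sided Fatou argument since both arguments of the OT functional vary with $k$.
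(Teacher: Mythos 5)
Your proof is correct but takes a genuinely different route from the paper's. Both proofs reduce to \cref{lem:OTnotMutuallySingular} by perturbing $\eta$ toward an absolutely continuous measure, but they differ in the choice of perturbation and, more importantly, in how they pass to the limit.

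The paper convolves $\eta$ with a vanishing Gaussian, $\eta * g_t$, which is automatically absolutely continuous, and then exploits that $\mathsf{OT}_{|\cdot|,0}$ is a metric (the $1$-Wasserstein distance): the triangle inequality
$\mathsf{OT}_{|\cdot|,0}(\eta^n,\eta)\leq\mathsf{OT}_{|\cdot|,0}(\eta^n,\eta * g_t)+\mathsf{OT}_{|\cdot|,0}(\eta,\eta * g_t)$
reduces matters to the single bound $\mathsf{OT}_{|\cdot|,0}(\eta,\eta * g_t)\leq t\int\|z\|_1g_1(z)\,dz\to 0$ and continuity of the moments, with no need to track convergence of the quantizers themselves. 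Your approach instead uses the convex mixture $\eta_\varepsilon=(1-\varepsilon)\eta+\varepsilon\rho$, which is the most elementary way to destroy mutual singularity, but then you must show that the quantizers $\eta^n_{\varepsilon_k}$ admit a weak cluster point in $\mathcal P^n(\mathbb R^d)$ and invoke joint lower semicontinuity of $\mathsf{OT}_{|\cdot|,0}$ in both marginals; this trades the triangle inequality for a Prokhorov plus lower-semicontinuity argument. Both are sound, but the paper's version avoids having to extract subsequences of the quantizers. One small wrinkle in your argument: you assert that after passing to a subsequence the atoms satisfy $x^k_j\to x_j\in\mathbb R^d$ for every $j$, but atoms with weight $p^k_j\to 0$ may escape to infinity; this does not affect the conclusion (those atoms vanish in the limit), and you could sidestep the issue entirely by simply noting that $\mathcal P^n(\mathbb R^d)$ is weakly closed in $\mathcal P(\mathbb R^d)$.
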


\begin{proof}
    Again, if $\sum_{i=1}^d\mathbb E_{\eta}|X_i|^{1+\delta}=\infty$, the inequality trivially holds.
    
    Assume that $\sum_{i=1}^d\mathbb E_{\eta}|X_i|^{1+\delta}<\infty$ and, for $t\geq 0$, let $g_{t}:\mathbb R^d\to\mathbb R$ denote the density of an isotropic mean-zero normal distribution with covariance $t^2\mathrm{Id}$. Define the probability measure $\eta*g_{t}$ via 
    \[
    \begin{aligned}
    \eta*g_{t}(A)=\iint \mathbbm 1_{A}(x+y)g_{t}(y)dyd\eta(x)&=\int\left(\int \mathbbm 1_{A}(z)g_{t}(z-x)dz\right)d\eta(x)
    \\&=\int\mathbbm 1_{A}(z)\left(\int g_{t}(z-x)d\eta(x)\right)dz,
    \end{aligned}
    \]
    for any Borel measurable set $A\subset \mathbb R^d$. From the above display, $\eta*g_{t}$ has density $z\in\mathbb R^d\mapsto\int g_{t}(z-x)d\eta(x)$ with respect to the Lebesgue measure.

   It follows from a minor modification of Lemma 7.1.10 in \citep{ambrosio2005gradient}, that, for any $1\leq p<\infty$, $\mathsf{OT_{|\cdot|^p,0}}(\eta*g_{t},\eta)\leq t^p \int \|z\|_1^p g_{t}(z)dz<\infty$ as normal distributions have finite absolute moments of all orders. Hence, $\lim_{t\downarrow 0}\mathsf{OT_{|\cdot|,0}}(\eta*g_{t},\eta)=0$ and  $\mathbb E_{\eta*g_{t}}|X_i|^{1+\delta}\to \mathbb E_{\eta}|X_i|^{1+\delta}$ as $t\downarrow 0$ (see Theorem 6.9 in \citep{villani2009optimal}). 
    
    Now, let $\eta^n\in\mathcal P^n(\mathbb R^d)$ be such that 
    \[
        \mathsf{OT}_{|\cdot|,0}(\eta^n,\eta*g_{t})\leq n^{-\nicefrac{1}{d}}\left(\frac{5}{\delta}\sum_{i=1}^d\mathbb E_{\eta*g_{t}}|X_i|^{1+\delta}+\frac{20 d}{\delta}\right)
    \]
    for every $n\geq \left(\nicefrac{20}{\delta}\right)^d$ as in \cref{lem:OTnotMutuallySingular}.
    It follows for the triangle inequality for Wasserstein distances (see Chapter 6 in \citealp{villani2009optimal}) that 
    \[
        \mathsf{OT_{|\cdot|,0}}(\eta^n,\eta)\leq \mathsf{OT_{|\cdot|,0}}(\eta^n,\eta*g_{t})+\mathsf{OT_{|\cdot|,0}}(\eta,\eta*g_{t}). 
    \]
    The claimed result follows by applying the upper bound from the penultimate display and taking the limit $t\downarrow0$ on both sides of the resulting inequality. 
\end{proof}

\cref{lem:OTMutuallySingular} provides the worst case scaling for $\mathsf{OT}_{|\cdot|,0}(\eta^n,\eta)$. As noted in the text, it is anticipated that $d$ can be replaced by a suitable notion of intrinsic dimension for $\eta$. 

\begin{proof}[Proof of \cref{prop:regularizedToUnregarized}]
We begin with part $1$. The lower bound $0\leq \mathsf{OT}_{h,\lambda}(\mu,\nu)-\mathsf{OT}_{h,0}(\mu,\nu)$ is due to the fact that the Kulback-Leibler divergence is non-negative. As for the upper bound, letting $\pi\in\Pi(\mu,\nu)$ be an optimal plan for $\mathsf{OT}_{h,0}(\mu,\nu)$, such a plan always exists due to Theorem 4.1 in \citep{villani2009optimal}. It follows from \eqref{eq:boundsShadowCoupling} and \cref{lem:OTMutuallySingular} that there exists a plan $\pi'\in\Pi(\mu,\nu)$ satisfying 
\[
    \int cd\pi'+\lambda \mathsf{D}_{\mathsf{KL}}(\pi'\|\mu\otimes \nu)-\mathsf {OT}_{h,0}(\mu,\nu)\leq d\lambda\log\left(\frac{1}{\lambda}\right)+4L\lambda \left(\frac{5}{\delta}\sum_{i=1}^d\mathbb E_{\eta}|X_i|^{1+\delta}+\frac{20 d}{\delta}\right) 
\]
provided that $\lfloor\lambda^{-d}\rfloor\geq (\nicefrac{20}{\delta})^d$. By minimizing both sides of the above display with respect to $\pi'\in\Pi(\mu,\nu)$,  Part 1 readily follows. 

The proof of Part 2 will follow from the fact that the set of all couplings $\Pi(\mu,\nu)$ is tight and hence admits a limit point in the weak topology by Prokhorov's theorem (see Lemma 4.4 in \citealp{villani2009optimal}) and Corollary 7.20 in \citep{dal2012introduction} once we establish that the functionals 
\[
\begin{aligned}    
   \mathsf F_{\lambda}:\pi \in \mathcal P(\mathbb R^d\times \mathbb R^d)&\mapsto \begin{cases}\int c d\pi+\lambda \mathsf{D}_{\mathsf{KL}}(\pi||\mu\otimes \nu),&\text{if }\pi\in\Pi(\mu,\nu),\\ 
   +\infty,&\text{otherwise,}
   \end{cases}
    \\
    \mathsf F:\pi \in \mathcal P(\mathbb R^d\times \mathbb R^d)&\mapsto \begin{cases}\int c d\pi,\phantom{+\lambda \mathsf{D}_{\mathsf{KL}}(\pi||\mu\otimes \nu),}&\text{if }\pi\in\Pi(\mu,\nu),\\ 
   +\infty,&\text{otherwise,}
   \end{cases}
    \end{aligned} 
\]
are such that $\mathsf F_{\lambda}$ $\Gamma$-converges to $\mathsf F$ when treated as functionals on the separable metric space $(\mathcal P(\mathbb R^d\times \mathbb R^d),\mathsf d')$, where $\mathsf d'$ is the L{\'e}vy-Prokhorov metric which metrizes the weak convergence of probability distributions (cf. e.g. p.72 in \citep{billingsley2013convergence}), we refer the reader to \citep{dal2012introduction} as a standard reference on $\Gamma$-convergence. In light of Proposition 8.1 in \citep{dal2012introduction} it suffices to show that: 
\begin{enumerate}
    \item for every $\pi\in\mathcal P(\mathbb R^d\times \mathbb R^d)$ and any  sequence $\mathcal P(\mathbb R^d\times \mathbb R^d)\ni\pi_{\lambda}\to \pi$ with respect to $\mathsf d'$, $\mathsf F(\pi)\leq \liminf_{\lambda\downarrow 0}\mathsf F_{\lambda}(\pi_{\lambda})$.  
    \item for every $\pi\in\mathcal P(\mathbb R^d\times \mathbb R^d)$ there exists a sequence $\mathcal P(\mathbb R^d\times \mathbb R^d)\ni\pi_{\lambda}\to \pi$ with respect to $\mathsf d'$ satisfying $\mathsf F(\pi)= \lim_{\lambda\downarrow 0}\mathsf F_{\lambda}(\pi_{\lambda})$.
\end{enumerate}

We start by proving the first statement. Fix $\pi\in\mathcal P(\mathbb R^d\times \mathbb R^d)$ and any sequence $(\pi_{\lambda})_{\lambda\downarrow 0}\subset \mathcal P(\mathbb R^d\times \mathbb R^d)$ converging to $\pi$ with respect to $\mathsf W_1$. By Lemma 4.4 in \citep{villani2009optimal}, $\Pi(\mu,\nu)$ is tight and hence precompact with respect to $\mathsf d'$ by Prokhorov's theorem. It is easy to see that $\Pi(\mu,\nu)$ is closed under the weak convergence such that it is in fact compact with respect to $\mathsf d'$. It follows that if $\pi\not\in\Pi(\mu,\nu)$, $\pi_{\lambda} \not\in\Pi(\mu,\nu)$ for every $\lambda$ sufficiently small, hence $\mathsf F_{\lambda}(\pi_{\lambda})\to +\infty=\mathsf F(\pi)$ as $\lambda \downarrow 0$. Now, if $\pi\in\Pi(\mu,\nu)$, compactness of $\Pi(\mu,\nu)$ implies that $\pi_{\lambda}\in\Pi(\mu,\nu)$ for every $\lambda$ sufficiently small. As $\mu,\nu$ have finite first moments and $c$ is Lipschitz continuous with constant $L$, any $\gamma\in\Pi(\mu,\nu)$ satisfies
\[
\begin{aligned}
    \int |c(x,y)| d\gamma(x,y)&\leq \int |c(0,0)|+L(\|x\|_1+\|y\|_1)d\gamma(x,y)
    \\
    &=|c(0,0)|+L\int \|x\|_1d\mu(x)+L\int \|y\|_1d\nu(y)<\infty.
\end{aligned}
\]
Conclude from Lemma 5.1.7 in \citep{ambrosio2005gradient} that 
\[
    \mathsf F_{\lambda}(\pi_{\lambda})=\int cd\pi_{\lambda}+\lambda \mathsf{D}_{\mathsf{KL}}(\pi||\mu\otimes \nu)\geq \int cd\pi_{\lambda}\to \int cd\pi =\mathsf F(\pi), 
\]
proving the first condition.

As for the second condition, if $\pi\not\in\Pi(\mu,\nu)$, the constant sequence $\pi_{\lambda}=\pi$ satisfies $\mathsf F_{\lambda}(\pi_{\lambda})=\mathsf F_{\lambda}(\pi)=\mathsf F(\pi)=+\infty$. If $\pi\in\Pi(\mu,\nu)$, let $\mu^n,\nu^n\in\mathcal P^n(\mathbb R^d)$ be such that $\mathsf{OT}_{|\cdot|,0}(\mu^n,\mu), \mathsf{OT}_{|\cdot|,0}(\nu^n,\nu)\to 0$ as $n\to\infty$ (i.e. $\mu^n$ and $\nu^n$ converge weakly to $\mu$ and $\nu$ respectively and $\mathbb E_{\mu^n}[\|X\|_1]\to\mathbb E_{\mu}[\|X\|_1],\mathbb E_{\nu^n}[\|Y\|_1]\to\mathbb E_{\nu}[\|Y\|_1]$). For instance, the empirical versions of $\mu$ and $\nu$ constructed from independent samples satisfy this condition almost surely (see Theorem 3 in \citep{varadarajan1958convergence} for weak convergence, convergence of the moments is due to the law of large numbers). By \eqref{eq:costBound} and the surrounding discussion, there exists $\pi_{\lambda}\in\Pi(\mu,\nu)$ satisfying  
\[
    \int cd\pi_{\lambda}-\int cd\pi\leq 4L\left(\mathsf{OT}_{|\cdot|,0}\left(\mu^{\lfloor\lambda^{-1}\rfloor},\mu\right)\wedge\mathsf{OT}_{|\cdot|,0}\left(\nu^{\lfloor\lambda^{-1}\rfloor},\nu\right)\right)\to 0\quad \text{as }\lambda\downarrow 0.
\]
From the proof of Theorem 3.1 in \citep{eckstein2023convergence}, $\lambda \mathsf{D}_{\mathsf{KL}}(\pi_{\lambda}||\mu\otimes \nu)\leq \lambda \log(\lfloor \lambda^{-1}\rfloor)\to 0$ as $\lambda\downarrow 0$ and 
\[
   \mathsf W_1(\pi_{\lambda},\pi)\leq 2\left(\mathsf{OT}_{|\cdot|,0}\left(\mu^{\lfloor\lambda^{-1}\rfloor},\mu\right)\wedge\mathsf{OT}_{|\cdot|,0}\left(\nu^{\lfloor\lambda^{-1}\rfloor},\nu\right)\right)\to 0\quad \text{as }\lambda\downarrow 0.  
\]
Conclude that 
\[
    \mathsf F_{\lambda}(\pi_{\lambda})= \int cd\pi_{\lambda}+\lambda \mathsf{D}_{\mathsf{KL}}(\pi_{\lambda}||\mu\otimes \nu)\to \int c d\pi=\mathsf F(\pi), 
\]
and, as $\mathsf W_1(\pi_{\lambda},\pi)\to 0$, $\pi_{\lambda}$ converges weakly to $\pi$ (see \citep{villani2009optimal}).This concludes the proof.
\end{proof}

\subsection{Proof of \texorpdfstring{\cref{thm:limitDistribution}}{Theorem 1}}
\label{proof:thm:limitDistribution}

We first establish some useful properties of optimal potentials for this problem, there exists a pair 

\begin{lemma}
    \label{lem:potentialProperties}         Fix $\lambda>0$ and sub-Gaussian distributions $\mu,\nu\in\mathcal P(\mathbb R^d)$  with a shared constant $\tau^2>0$. Then, for any choice of $h$ satisfying $\mathrm{(SC_d)}$, there exists a unique pair of continuous optimal potentials $(\varphi,\psi)$ for $\mathsf{OT}_{h,\lambda}(\mu,\nu)$ for which the Schr{\"o}dinger system \eqref{eq:SchrodingerSystem} holds at all points $(x,y)\in\mathbb R^d\times \mathbb R^d$ and $\varphi(0)=0$. Moreover, this pair of potentials satisfies the estimates 
    \[
    \begin{gathered}
        |\varphi(x)|\leq C_{d,L,\tau,h(0)}(1+\|x\|_1),\quad |\psi(y)|\leq C_{d,L,\tau,h(0)}(1+\|y\|_1),
        \\
        \left|D^{\alpha}\varphi(x)\right|\leq C_{d,k,\tau,\lambda, p_k,L,h(0)}(1+\|x\|_1)^{kp_k},\quad \left|D^{\alpha}\psi(y)\right|\leq C_{d,k,\tau,\lambda, p_k,L,h(0)}(1+\|y\|_1)^{kp_k},
        \end{gathered}
    \]
    for any multi-index $\alpha\in\mathbb N_0^d$ of order $|\alpha|\leq k=\lfloor\nicefrac d 2\rfloor +1$ with the constants $k,L,p_k$ from the $\mathrm{(SC_d)}$ condition. The quantities in the subscripts of the constants indicate what parameters the constants depend on.
\end{lemma}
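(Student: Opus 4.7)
The plan is to proceed in three stages: first, obtain existence and uniqueness of a continuous pair $(\varphi,\psi)$ satisfying the Schrödinger system pointwise; then derive the linear-growth bounds via a Lipschitz argument applied through the Schrödinger identity; finally, establish the polynomial-growth derivative estimates by iterated differentiation of the same identity.

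For existence and uniqueness I would invoke the standard EOT theory surveyed in \citealp{nutz2021introduction}. Because $c(x,y)=\sum_{i=1}^{d}h(y_i-x_i)$ is Lipschitz with constant $L$ by \cref{lem:LipschitzCost} and $\mu,\nu$ are sub-Gaussian, $c\in L^{1}(\mu\otimes\nu)$; hence there exists a $\mu\otimes\nu$-almost surely unique pair of optimal potentials $(\varphi_{0},\psi_{0})\in L^{1}(\mu)\times L^{1}(\nu)$ for \eqref{eq:EOTDual} satisfying \eqref{eq:SchrodingerSystem} almost surely. Replace them by the canonical representatives defined on all of $\mathbb R^{d}$ by
\[
\varphi(x)=-\lambda\log\int e^{(\psi_{0}(y)-c(x,y))/\lambda}d\nu(y),\qquad \psi(y)=-\lambda\log\int e^{(\varphi(x)-c(x,y))/\lambda}d\mu(x).
\]
Dominated convergence (using the Lipschitz bound on $c$ together with the sub-Gaussian moment bound $\mathbb E_{\mu}[e^{a\|X\|_{1}}]\leq 2e^{a^{2}\tau^{2}/2}$) yields continuity, and a direct computation verifies that \eqref{eq:SchrodingerSystem} now holds pointwise on $\mathbb R^{d}\times\mathbb R^{d}$. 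The usual additive-constant freedom is then killed by imposing $\varphi(0)=0$.

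For the linear-growth bounds I would apply \cref{lem:LipschitzCost} directly inside the Schrödinger identity: the log-sum-exp inequality applied to the integral representation of $\varphi$ transfers the Lipschitz constant of $c(\cdot,y)$ onto $\varphi$, giving $|\varphi(x)|=|\varphi(x)-\varphi(0)|\leq L\|x\|_{1}$. The same argument yields $|\psi(y)-\psi(0)|\leq L\|y\|_{1}$; to bound $|\psi(0)|$ I would plug the linear bound on $\varphi$ together with $|c(x,0)|\leq dh(0)+L\|x\|_{1}$ into the Schrödinger expression for $e^{-\psi(0)/\lambda}$ and integrate using sub-Gaussianity of $\mu$, producing a bound of the form $C_{d,L,\tau,h(0)}$ (with $\lambda$ treated as fixed and absorbed).

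The hardest step is the polynomial derivative estimate up to order $k=\lfloor d/2\rfloor+1$. Let $d\nu_{x}(y)=e^{(\psi(y)-c(x,y))/\lambda}d\nu(y)$, which is a probability measure by the Schrödinger system. Differentiating $\varphi(x)=-\lambda\log\int d\nu_{x}(y)$ once gives $\partial_{x_{i}}\varphi(x)=\int\partial_{x_{i}}c(x,y)\,d\nu_{x}(y)$, and induction via Faà di Bruno's formula shows that for any multi-index $\alpha$ with $|\alpha|\leq k$, $D^{\alpha}\varphi(x)$ is a finite linear combination of joint cumulants, under $\nu_{x}$, of partial derivatives $\partial^{\beta}_{x}c(x,\cdot)$ with $|\beta|\leq|\alpha|$. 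The growth condition in $\mathrm{(SC_{d})}$ gives the pointwise bound $|\partial^{\beta}_{x}c(x,y)|\leq C_{k}(1+\|y-x\|_{\infty})^{p_{k}}$, so each cumulant is controlled by a conditional moment of $(1+\|y\|_{1}+\|x\|_{1})^{kp_{k}}$ under $\nu_{x}$. The main technical obstacle is to bound these conditional moments uniformly in $x$: I would use the linear growth of $\psi$ together with sub-Gaussianity of $\nu$ to dominate $\int(1+\|y\|_{1})^{m}e^{\psi(y)/\lambda}d\nu(y)$, and combine this with a uniform lower bound on the partition function $F(x)=\int e^{(\psi(y)-c(x,y))/\lambda}d\nu(y)$ obtained by Jensen's inequality applied to $\log F(x)$ (the latter again uses the linear growth of $\psi$ and $c$). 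This yields the stated estimate $|D^{\alpha}\varphi(x)|\leq C_{d,k,\tau,\lambda,p_{k},L,h(0)}(1+\|x\|_{1})^{kp_{k}}$, and the symmetric argument with $d\mu_{y}(x)=e^{(\varphi(x)-c(x,y))/\lambda}d\mu(x)$ delivers the corresponding bound on $\psi$.
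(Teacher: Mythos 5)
Your outline of existence, uniqueness, and linear growth is sound and takes a slightly cleaner route than the paper for the linear bound: you observe directly that each Schr\"odinger identity makes $\varphi$ and $\psi$ inherit the Lipschitz constant $L$ of $c$, whereas the paper first normalizes to $\int\varphi'_h d\mu=\int\psi'_h d\nu=\tfrac12\mathsf{OT}_{h,\lambda}(\mu,\nu)$ and bounds each side of the log-sum-exp separately. Both deliver the linear bound, and your variant is a touch more economical. The derivative estimate is also organized along the same lines as the paper (Fa\`a di Bruno applied to $\varphi(x)=-\lambda\log F(x)$, leading to products of ratios $D^{\beta}_x F(x)/F(x)$, equivalently cumulants/moments of $\partial^\beta_x c(x,\cdot)$ under the tilted probability measure $\nu_x$).

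However, there is a genuine gap in the final step, and it is precisely the step you flag as the main obstacle. You propose to control $\int(1+\|y\|_1)^m\,d\nu_x(y)$ by upper bounding the numerator by the constant $\int(1+\|y\|_1)^m e^{\psi(y)/\lambda}d\nu(y)$ (using $c\geq 0$) and lower bounding the partition function $F(x)$ by Jensen's inequality. But Jensen gives only $\log F(x)\geq \lambda^{-1}\int(\psi(y)-c(x,y))\,d\nu(y)\geq -C_1-C_2\|x\|_1$, i.e.\ $F(x)\gtrsim e^{-C\|x\|_1}$, which is \emph{not} uniform in $x$: it decays exponentially. Dividing a constant numerator by this exponentially small denominator produces a bound on $|D^\alpha\varphi(x)|$ that grows like $e^{C\|x\|_1}$, not like $(1+\|x\|_1)^{kp_k}$. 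The paper closes this gap by splitting the integral over $y$ at an $x$-dependent threshold $\tau(x)$ that scales like $\sqrt{\|x\|_1}$: on $\{\|y\|_1<\tau(x)\}$ the factor $(1+\|y-x\|_1)^{p_k|\beta_j|}$ is replaced by $(1+\tau(x)+\|x\|_1)^{p_k|\beta_j|}$ which is polynomial and the remaining ratio is $\leq 1$ since $\nu_x$ is a probability measure, while on $\{\|y\|_1\geq\tau(x)\}$ the sub-Gaussian tail $\nu(\|y\|_1\geq\tau(x))\lesssim e^{-\tau(x)^2/8\tau^2}$ is made to \emph{exactly} cancel the offending factor $e^{C\|x\|_1/\lambda}$ coming from the lower bound $F(x)\geq e^{-\varphi(x)/\lambda}$, by tuning $\tau(x)^2$ linearly in $\|x\|_1$. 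Without such an $x$-dependent splitting (or some equivalent device that exploits the competition between the tilting weight and the Gaussian tail), the polynomial estimate does not follow.
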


\begin{proof}
    Let $(\varphi_h,\psi_h)$ be an arbitrary pair of optimal potentials for $\mathsf{OT}_{h,\lambda}(\mu,\nu)$ so that 
    \[
        \int \varphi_hd\mu+\int\psi_hd\nu=\mathsf{OT}_{h,\lambda}(\mu_0,\mu_1),
    \]
    and, setting $\mathsf C =\int \psi_hd\nu- \frac{1}{2}\mathsf{OT}_{h,\lambda}(\mu,\nu)$, we have that $(\varphi_h',\psi_h')=(\varphi_h+\mathsf C,\psi_h-\mathsf C)$ is another pair of optimal potentials, and $\int \varphi'_h d\mu=\int\psi'_hd\nu=\frac 12\mathsf{OT}_{h,\lambda}(\mu,\nu)$.

    We further consider the functions defined on $\mathbb R^{d}$ by
\[
        \varphi''_h(x)=-\lambda\log\left(\int e^{\frac{\psi'_h(y)-c(x,y)}{\lambda}}d\nu(y)\right),\quad\psi''_h(y)=-\lambda\log\left(\int e^{\frac{\varphi''_h(x)-c(x,y)}{\lambda}}d\mu(x)\right).
    \]
    We will show that $(\varphi''_h,\psi''_h)$ are optimal potentials satisfying the claimed bounds.
    
    It follows by Jensen's inequality and \cref{lem:LipschitzCost} that
    \[
    \begin{aligned}
        \varphi''_h(x)&\leq \int c(x,y)-\psi'_h(y)d\nu(y)\leq \underbrace{c(0,0)}_{=h(0)}+L\underbrace{\int \|y\|_1d\nu(y)}_{\leq \sqrt{4\tau^2}}+L\|x\|_1-\frac{1}{2}\underbrace{\mathsf{OT}_{h,\lambda}(\mu,\nu)}_{\geq 0},
    \end{aligned}
    \]
   observing that 
   $ 2\geq \mathbb E_{\nu}[\exp(\nicefrac{\|X\|_1^2}{2\tau^2})]\geq \mathbb E_{\nu}[\nicefrac{\|X\|_1^2}{2\tau^2}]\text{ such that }\int \|\cdot\|_1d\nu\leq \sqrt{4\tau^2}$ (again by Jensen's inequality). It follows that $\varphi''_h(x)\leq C_{d,L,\tau,h(0)}(1+\|x\|_1)$ where $C_{d,L,\tau,h(0)}$ depends on $d,L,\tau,$ and the value of $h(0)$. The same bound evidently holds for $\psi''_h$ on $\mathbb R^d$ and for $\psi'_h$ on the support of $\nu$. Applying this bound and \cref{lem:LipschitzCost}, it holds that 
    \[
        -\varphi''_h(x)\leq\lambda \log\left(\int e^{\frac{C_{d,L,\tau}(1+\|y\|_1)-h(0)+L\|x\|_1+L\|y\|_1}{\lambda}}d\nu(y)\right)\leq L\|x\|_1+C_{d,L,\tau,h(0)}', 
    \]
    where we have used the fact that $\mathbb E_{\nu}[e^{t\|X\|_1}]\leq \mathbb E_{\nu}\left[e^{\frac{\tau^2t^2}{2}+\frac{\|X\|_1^2}{2\tau^2}}\right]\leq 2e^{\frac{\tau^2t^2}{2}}$ for any $t\in\mathbb R$ as follows from Young's inequality and the sub-Gaussian assumption. The same argument implies that $\psi''_h$ satisfies an analogous bound.

    To see that $(\varphi''_h,\psi''_h)$ is a pair of optimal potentials, observe that Jensen's inequality yields 
    \[
    \begin{aligned}
        \int(\varphi'_h-\varphi''_h)d\mu+ \int (\psi'_h-\psi''_h)d\nu&\leq\lambda  \log\int e^{\frac{\varphi'_h-\varphi''_h}{\lambda}}d\mu+\lambda\log\int e^{\frac{\psi'_h-\psi''_h}{\lambda}}d\nu
        \\
        &=\lambda \log \int e^{\frac{\varphi'_h(x)+\psi'_h(y)-c(x,y)}{\lambda}}d\mu\otimes \nu(x,y)
        \\
        &+\lambda \log \int e^{\frac{\varphi''_h(x)+\psi'_h(y)-c(x,y)}{\lambda}}d\mu\otimes \nu(x,y)
        =0
    \end{aligned}
    \]
    as follows from the fact that $(\varphi'_h,\psi'_h)$ satisfy \eqref{eq:SchrodingerSystem}. Conclude that $\int\varphi''_hd\mu+ \int\psi''_hd\nu\geq \mathsf{OT}_{h,\lambda}(\mu,\nu)$ such that equality must hold (indeed $\int e^{\frac{\varphi''_h(x)+\psi''_h(y)-c(x,y)}{\lambda}}d\mu\otimes \nu(x,y)=1$ by construction, so the left hand side of the ineqalit is the objective value in the dual form of the EOT problem) and, by strict concavity of the logarithm, $\varphi''_h=\varphi'_h$ $\mu$-a.e. and $\psi''_h=\psi'_h$ $\nu$-a.e. such that $(\varphi''_h,\psi''_h)$ are indeed optimal potentials for this problem.

    Now, consider the potentials $(\varphi,\psi)=(\varphi''_h-\varphi''_h(0),\psi''_h+\varphi''_h(0))$, which satisfy \eqref{eq:SchrodingerSystem} on $\mathbb R^d\times \mathbb R^d$ $\varphi(0)=0$. The bounds for $\varphi''_h$ and $\psi''_h$ evidently carry over to $\varphi$ and $\psi$ so that there exists a constant $C''_{d,L,\tau,h(0)}<\infty$ depending only on $d,L,\tau,$ and $h(0)$ for which   
    \[
        |\varphi(x)|\leq C''_{d,L,\tau,h(0)}(1+\|x\|_1),\quad |\psi(y)|\leq C''_{d,L,\tau,h(0)}(1+\|y\|_1), 
    \]
    for every $(x,y)\in\mathbb R^d\times \mathbb R^d$. Now, suppose that $(\bar \varphi,\bar \psi)$ is any other pair of potentials for which \eqref{eq:SchrodingerSystem} holds on $\mathbb R^d\times \mathbb R^d$ and $\bar \varphi(0)=0$. As discussed in \cref{sec:EOT}, one has that $(\varphi,\psi)$ and $(\bar \varphi,\bar \psi)$ coincide $\mu$- and $\nu$-a.e. so that, for every $x\in\mathbb R^d$, 
    \[
        e^{-\frac{\bar \varphi(x)}{\lambda}}=\int e^{\frac{\bar \psi(y)-c(x,y)}{\lambda}}d \nu(y)=\int e^{\frac{ \psi(y)-c(x,y)}{\lambda}}d \nu(y)=e^{-\frac{ \varphi(x)}{\lambda}},
    \]
    which shows that $\bar \varphi=\varphi$, the equality of $\psi$ and $\bar \psi$ follows analogously.

    We now establish the bounds on the derivatives. By the multivariate Fa\`a di Bruno formula (cf. e.g. \citealp{constantine1996multivariate}), for any multi-index $\alpha\in\mathbb N_0^d$ of order $|\alpha|\geq 1$, the derivative $-D^{\alpha}\varphi''_h(x)$ (assuming it exists) can be expressed as a linear combination of products of derivatives of the form 
    \begin{equation}
    \label{eq:potentialDerivative}\prod_{j=1}^{|\alpha|}\left(\frac{D^{\beta_j}_x\int e^{\frac{\psi''_h(y)-c(x,y)}{\lambda}}d\nu(y)}{\int e^{\frac{\psi''_h(y)-c(x,y)}{\lambda}}d\nu(y)}\right)^{k_j},
    \end{equation}
    where $\beta_j\in\mathbb N_0^d$ and $k_j\in\mathbb N_0$ satisfy $\sum_{j=1}^{|\alpha|}\beta_jk_j=\alpha$. By the dominated convergence theorem, the derivative commutes with the integral sign and, applying the same formula to the derivative $D^{\beta_j}_xe^{\frac{-c(x,y)}{\lambda}}$, we see that it can be expressed as a linear combination of products of the form 
    \begin{equation}
    \label{eq:costDerivatives}
        e^{-\frac{c(x,y)}{\lambda}}\prod_{m=1}^{|\beta_j|}\left(-\frac{1}{\lambda}D^{\gamma_m} _x\sum_{i=1}^dh(y_i-x_i)\right)^{l_m},  
    \end{equation}%
    where $\gamma_m\in\mathbb N_0^d$, $l_m\in\mathbb N_0$ with  $\sum_{m=1}^{|\beta_j|}\gamma_ml_m=\beta_j$, and we assume that all relevant derivatives exist.

     As $h$ satisfies $\mathrm{(SC_d)}$,  
  it is $k$-times continuously differentiable 
 for  $k=\lfloor\nicefrac d 2\rfloor+1$ 
 with derivatives of order $s\leq k$ satisfying $|h^{(s)}(x)|\leq C_k(1+|x|)^{p_k}$ for some $C_k<\infty$ and $p_k>1$ which may depend on $k$. From this assumption, \eqref{eq:costDerivatives} is well defined provided that $|\beta_j|\leq k$ and we observe that $\left|D^{\gamma_m} _x\sum_{i=1}^dh(y_i-x_i)\right|\leq \sum_{i=1}^dC_k(1+|y_i-x_i|)^{p_k}\leq \sum_{i=1}^dC_k2^{p_k-1}(1+|y_i-x_i|^{p_k})$ as $p_k>1$, 
 so that $\left|D_x^{\beta_j}e^{-\frac{c(x,y)}{\lambda}}\right|$ can be bounded as $e^{-\frac{c(x,y)}\lambda}C_{d,k,p_k,\lambda}(1+\|y-x\|_1)^{p_k|\beta_j|}$ by appealing to \eqref{eq:costDerivatives}. 

Returning to \eqref{eq:potentialDerivative}, we infer that $D^{\alpha}\varphi''_h$ is well-defined for any $|\alpha|\leq k$ and that 
\[
\begin{aligned}
    \left|D^{\beta_j}_x\int e^{\frac{\psi''_h(y)-c(x,y)}{\lambda}}d\nu(y)\right|&= \left|\int e^{\frac{\psi''_h(y)}\lambda}D^{\beta_j}_x e^{-\frac{c(x,y)}{\lambda}}d\nu(y)\right|\\
    &\leq C_{d,k,p_k,\lambda} \int e^{\frac{\psi''_h(y)-c(x,y)}\lambda} (1+\|y-x\|_1)^{p_k|\beta_j|}d\nu(y). 
\end{aligned}
\]
We will split this integral into the regions $\|y\|_1<\tau$ and $\|y\|_1\geq \tau$ for some $\tau>0$ which will be chosen later in the proof. 

In this first region,
\[ 
\int_{\{\|y\|_1<\tau\}} e^{\frac{\psi''_h(y)-c(x,y)}\lambda} (1+\|y-x\|_1)^{p_k|\beta_j|}d\nu(y)\leq (1+\tau+\|x\|_1)^{p_k|\beta_j|} \int e^{\frac{\psi''_h(y)-c(x,y)}\lambda} d\nu(y), 
\]
by the triangle inequality so that 
\[
\frac{D^{\beta_j}_x\int_{\{\|y\|_1<\tau\}} e^{\frac{\psi''_h(y)-c(x,y)}{\lambda}}d\nu(y)}{\int e^{\frac{\psi''_h(y)-c(x,y)}{\lambda}}d\nu(y)}\leq (1+\tau+\|x\|_1)^{p_k|\beta_j|}
\]

For the second region, we have from \cref{lem:LipschitzCost} and the first part of the proof that 
\[
\begin{aligned}
&\int_{\{\|y\|_1\geq \tau\}} e^{\frac{\psi''_h(y)-c(x,y)}\lambda} (1+\|y-x\|_1)^{p_k|\beta_j|}d\nu(y)
\\
&\leq \int_{\{\|y\|_1\geq \tau\}} e^{\frac{\psi''_h(y)-h(0)+L\|x\|_1+L\|y\|_1}\lambda} (1+\|y-x\|_1)^{p_k|\beta_j|}d\nu(y) 
\\
&\leq \int_{\{\|y\|_1\geq \tau\}} e^{\frac{L\sqrt{4\tau^2}+L\|x\|_1+2L\|y\|_1}\lambda} (1+\|y-x\|_1)^{p_k|\beta_j|}d\nu(y).
\end{aligned}
\]
To bound this integral, note that  
\[
(1+\|y-x\|_1)^{p_k|\beta_j|}\leq (1+\|y\|_1+\|x\|_1)^{p_k|\beta_j|} \leq 2^{p_k|\beta_j|-1}\left((1+\|x\|_1)^{p_k|\beta_j|}+\|y\|_1^{p_k|\beta_j|}\right),
\]
and that 
\[
\begin{aligned}
\int_{\{\|y\|_1\geq \tau\}} e^{\frac{L\sqrt{4\tau^2}+L\|x\|_1+2L\|y\|_1}\lambda} d\nu(y)
&= e^{\frac{L\sqrt{4\tau^2}+L\|x\|_1}\lambda}
\int_{\{\|y\|_1\geq \tau\}} e^{\frac{2L\|y\|_1}\lambda} d\nu(y)
\\
&\hspace{-2em}\leq  e^{\frac{L\sqrt{4\tau^2}+L\|x\|_1}\lambda} \left(\int e^{\frac{4L\|y\|_1}\lambda} d\nu(y)\right)^{\frac 12}\left(\int_{\{\|y\|_1\geq \tau\}} 1 d\nu(y)\right)^{\frac 12 },
\end{aligned}
\]
by the Cauchy-Schwarz inequality. Similarly, 
\[
\begin{aligned}
&
\int_{\{\|y\|_1\geq \tau\}} e^{\frac{L\sqrt{4\tau^2}+L\|x\|_1+2L\|y\|_1}\lambda}\|y\|_1^{p_k|\beta_j|} d\nu(y)
\\
&\leq 
e^{\frac{L\sqrt{4\tau^2}+L\|x\|_1}\lambda}
\left(\int e^{\frac{4L\|y\|_1}\lambda} d\nu(y)\right)^{\frac 12}\left(\int_{\{\|y\|_1\geq \tau\}}\|y\|_1^{2p_k|\beta_j|} d\nu(y)\right)^{\frac 12}.
\end{aligned}
\]
Now, $\int e^{\frac{4L\|y\|_1}\lambda} d\nu(y)\leq 2e^{\frac{16 L^2\tau^2 }{2\lambda^2}}$ due to the bound $\mathbb E_{\nu}[e^{t\|x\|_1}]\leq \mathbb E_{\nu}\left[e^{\frac{\tau^2t^2}{2}+\frac{\|X\|_1^2}{2\tau^2}}\right]\leq 2e^{\frac{\tau^2t^2}{2}}$ which holds for every $t\geq 0$. Further,
\[
\begin{gathered}
    \int_{\{\|y\|_1\geq \tau\}} 1 d\nu(y)\leq e^{-\frac{\tau^2}{4\tau^2}}\int_{\{\|y\|_1\geq \tau\}}e^{\frac{\|y\|_1^2}{4\tau^2}}  d\nu(y) \leq \sqrt 2 e^{-\frac{\tau^2}{4\tau^2}},
    \\ 
    \int_{\{\|y\|_1\geq \tau\}}\|y\|_1^{2p_k|\beta_j|} d\nu(y)\leq e^{-\frac{\tau^2}{4\tau^2}} \int  e^{\frac{\|y\|_1^2}{4\tau^2}} \|y\|_1^{2p_k|\beta_j|}d\nu(y)\leq  \sqrt 2e^{-\frac{\tau^2}{4\tau^2}} \left(\int   \|y\|_1^{4p_k|\beta_j|}d\nu(y)\right)^{\frac 12 },
\end{gathered}
\]
where we have applied the Cauchy-Schwarz inequality in the second line. By sub-Gaussianity, $2\geq \mathbb E_{\nu}[e^{\nicefrac {\|X\|_1^2}{2\tau^2}}]\geq\mathbb E_{\nu}\left[\frac {\|X\|_1^{2p}}{(2\tau^2)^pp!}\right]$ for any $p\in\mathbb N$ so that $\sqrt{\mathbb E_{\nu}\left[\|X\|_1^{4p_k|\beta_j|}\right]}\leq 
 \sqrt 2(2\tau^2)^{p_k|\beta_j|}\sqrt{(2p_k|\beta_j|)!}$.

Combining all of these bounds, 
\[
\begin{aligned}
&\int_{\{\|y\|_1\geq \tau\}} e^{\frac{\psi''_h(y)-c(x,y)}\lambda} (1+\|y-x\|_1)^{p_k|\beta_j|}d\nu(y)
\\
&\leq 2^{p_k|\beta_j|-1}(1+\|x\|_1)^{p_k|\beta_j|}e^{\frac{L\sqrt{4\tau^2}+L\|x\|_1}\lambda}
\sqrt 2 e^{\frac{4L^2\tau^2}{\lambda^2}}e^{-\frac{\tau^2}{8\tau^2}}
\left(2^{\nicefrac 14}  + \sqrt{\sqrt 2(2\tau^2)^{p_k|\beta_j|}\sqrt{(2p_k|\beta_j|)!}} \right).
\end{aligned}
\]
The denominator can be bounded as $\left(\int e^{\frac{\psi''_h(y)-c(x,y)}{\lambda}}d\nu(y)\right)^{-1}=e^{\frac{\varphi''_h(x)}\lambda}\leq e^{\frac{h(0)+L\sqrt{4\tau^2}+L\|x\|_1}\lambda}$ as follows from the first part of the proof. The ratio we wish to bound thus includes the exponential term 
$
    e^{\frac{h(0)+2L\sqrt{4\tau^2}+2L\|x\|_1}\lambda+\frac{4L^2\tau^2}{\lambda^2}-\frac{\tau^2}{8\tau^2}}$, which we equate to $1$ by setting $\tau^2=\frac{8\tau^2}{\lambda}(h(0)+2L\sqrt{4\tau^2}+2L\|x\|_1)+\frac{32L^2\tau^4}{\lambda^2}=C_{L,\tau,\lambda,h(0)}+\frac{16\tau^2}{\lambda}L\|x\|_1$. 

All in all,
\[
\begin{aligned}
    &\left|\frac{D^{\beta_j}_x\int e^{\frac{\psi''_h(y)-c(x,y)}{\lambda}}d\nu(y)}{{\int e^{\frac{\psi''_h(y)-c(x,y)}{\lambda}}d\nu(y)}}\right|
    \\
    &\leq C_{d,k,p_k,\lambda}\left(1+\left({C_{L,\tau,\lambda,h(0)}+{16\tau^2}{\lambda^{-1}}L\|x\|_1}\right)^{\nicefrac 12}+\|x\|_1\right)^{p_k|\beta_j|}\\
    &+C_{d,k,p_k,\lambda}2^{p_k|\beta_j|-1}(1+\|x\|_1)^{p_k|\beta_j|}
\sqrt 2 
\left(2^{\nicefrac 14}  + \sqrt{\sqrt 2(2\tau^2)^{p_k|\beta_j|}\sqrt{(2p_k|\beta_j|)!}} \right),
\end{aligned}
\]
the products in \eqref{eq:potentialDerivative} can thus be bounded as  $C_{d,k,\tau,\lambda, p_k,L,h(0)}(1+\|x\|_1)^{p_k|\alpha|}$, for any $|\alpha|\leq k$. The same argument establishes analogous bounds for $\psi''_h$. As $(\varphi,\psi)$ coincides with $(\varphi''_h,\psi''_h)$ up to additive constants, the derivative bounds evidently transfer over.   
   \end{proof}
 
In what follows, we always choose the unique solution for any given EOT problem described in \cref{lem:potentialProperties}.

As aforementioned, our  approach to proving limit distributions is based on the functional delta method (see \cref{sec:FDM} for a summary of the method). 
As $\bar h$ also satisfies $\mathrm{(SC_d)}$ with Lipschitz constant $2L$, $C_k$ replaced by $2C_k$, and $\bar h(0)=2h(0)$, there exists a constant $C_{d,h,\tau}$ for which all bounds from \cref{lem:potentialProperties} hold simultaneously for some choice of potentials $(\varphi_h,\psi_h)$ for $\mathsf{OT}_{h,\lambda}(\mu,\nu)$ and $(\varphi_{\bar h},\psi_{\bar h})$ for $\mathsf{OT}_{\bar h,\lambda}(\mu,\nu)$. We thus instantiate the function classes 
    \[
    \begin{aligned}
        \mathcal F_{\tau,h} &= \left\{f\in \mathcal C^k(\mathbb R^d):|D^{\alpha}f(x)|\leq C_{d,h,\tau}(1+\|x\|_1)^{kp_k} \text{ for } k=\lfloor\nicefrac d2 \rfloor +1,\right.
        \\
        &\left.\hspace{20em}
        \forall \alpha\in\mathbb N_0^d \text{ with } |\alpha|\leq k, \forall x\in\mathbb R^d\right\},
    \end{aligned}
    \]
    and 
    \[
        \mathcal F^{\oplus}_{\tau,h} = \left\{f\oplus g : f,g\in\mathcal F_{\tau,h}\right\},
    \]
    where $f\oplus g$ is understood as the function $(x,y)\in\mathbb R^{d}\times \mathbb R^d\mapsto f(x)+g(y)$. We further consider the class of probability distributions 
    \[
        \mathcal P^{\otimes}_{\tau} = \left\{ \mu\otimes \nu : \mu,\nu\in\mathcal P(\mathbb R^d) \text{ and are }\tau^2\text{-sub-Gaussian}\right\}, 
    \]
    for some $\tau>0$. Throughout, we will treat $\mathcal P^{\otimes}_{\tau}$ as a subset of $\ell^{\infty}(\mathcal F^{\oplus}_{\tau,h})$, the Banach space of bounded real functions on $\mathcal F^{\oplus}_{\tau,h}$ endowed with the supremum norm $\|\ell\|_{\infty,\mathcal F^{\oplus}_{\tau,h}}=\sup_{f\oplus g\in\mathcal F^{\oplus}_{\tau,h}}|\ell(f\oplus g)|$; the action of $\mu\otimes \nu\in\mathcal P^{\otimes}_{\tau}$ on $f\oplus g$ is given by $\mu\otimes \nu(f\oplus g)=\int f d\mu+\int g d\nu$. It is easy to see that $\mu\otimes \nu$ defines a bounded function on this function class due to the growth bounds inherent to $\mathcal F_{\tau,h}$. 
    
    Our approach is similar to that of Proposition 1 in \citep{goldfeld2024statistical} and, in particular, Section 6 of that work. Namely, we prove a type of directional differentiability and Lipschitz continuity for the EOT cost. 
   
    \begin{lemma}
    \label{lem:GateauxDerivative}
        Fix $\lambda>0$ and sub-Gaussian distributions $\mu,\nu,\rho,\eta\in\mathcal P(\mathbb R^d)$  with constant $\tau^2>0$. Then, for any choice of optimal potentials $(\varphi,\psi)$ solving $\mathsf{OT}_{h,\lambda}(\mu,\nu)$ and satisfying \eqref{eq:SchrodingerSystem} on $\mathbb R^d\times \mathbb R^d$,  
        \[
           \lim_{t\downarrow 0}\frac{ \mathsf{OT}_{h,\lambda}(\mu+t(\rho-\mu),\nu+t(\eta-\nu))-\mathsf{OT}_{h,\lambda}(\mu,\nu)}{t}=\int \varphi d(\rho-\mu)+\int \psi d(\eta-\nu).
        \]
    \end{lemma}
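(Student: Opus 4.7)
The plan is to sandwich the limit by matching $\liminf$ and $\limsup$ bounds, each obtained by plugging a carefully chosen pair of potentials into the dual functional \eqref{eq:EOTDual} and exploiting the pointwise Schrödinger system from \cref{lem:potentialProperties}. A useful preliminary observation is that the mixtures $\mu_t\coloneqq(1-t)\mu+t\rho$ and $\nu_t\coloneqq(1-t)\nu+t\eta$ remain $\tau^2$-sub-Gaussian for every $t\in[0,1]$ (the set of $\tau^2$-sub-Gaussian probability measures being convex), so \cref{lem:potentialProperties} applies to $(\mu_t,\nu_t)$ with constants uniform in $t$. Without loss of generality I may take $(\varphi,\psi)$ to be the normalized pair of \cref{lem:potentialProperties}, since the claimed identity is invariant under the additive-constant ambiguity in the potentials (both $\rho-\mu$ and $\eta-\nu$ have zero total mass).

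For the $\liminf$ direction, I would use $(\varphi,\psi)$ as test potentials in the dual \eqref{eq:EOTDual} evaluated at $(\mu_t,\nu_t)$. Setting $K(x,y)=\exp((\varphi(x)+\psi(y)-c(x,y))/\lambda)$, the bilinear expansion
\[
\mu_t\otimes\nu_t-\mu\otimes\nu = t\bigl[\mu\otimes(\eta-\nu)+(\rho-\mu)\otimes\nu\bigr]+t^2(\rho-\mu)\otimes(\eta-\nu),
\]
together with Fubini and the pointwise Schrödinger identities $\int K(x,\cdot)\,d\nu=\int K(\cdot,y)\,d\mu=1$, forces the two linear-in-$t$ terms to vanish, so $\int K\,d\mu_t\otimes\nu_t=1+t^2\int K\,d(\rho-\mu)\otimes(\eta-\nu)$. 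The remainder is finite because the linear-growth bound on $\varphi,\psi$ gives $K(x,y)\leq\exp(C(1+\|x\|_1+\|y\|_1)/\lambda)$ and all four marginals are sub-Gaussian. Dual feasibility and rearrangement then yield
\[
\tfrac{1}{t}\bigl(\mathsf{OT}_{h,\lambda}(\mu_t,\nu_t)-\mathsf{OT}_{h,\lambda}(\mu,\nu)\bigr)\geq \int\varphi\,d(\rho-\mu)+\int\psi\,d(\eta-\nu)-\lambda t\int K\,d(\rho-\mu)\otimes(\eta-\nu),
\]
which passes to the desired $\liminf$ bound.

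For the $\limsup$, I would symmetrically test the optimal potentials $(\varphi_t,\psi_t)$ for $\mathsf{OT}_{h,\lambda}(\mu_t,\nu_t)$, normalized via $\varphi_t(0)=0$ per \cref{lem:potentialProperties}, against the dual at $(\mu,\nu)$. The analogous calculation with $K_t=\exp((\varphi_t+\psi_t-c)/\lambda)$ now uses the Schrödinger identities $\int K_t(x,\cdot)\,d\nu_t=\int K_t(\cdot,y)\,d\mu_t=1$; substituting these into $\int K_t\,d\mu_t\otimes\nu_t=1$ yields $\int K_t\,d\mu\otimes\nu=1+t^2 R_t$ where $R_t=\int K_t\,d(\rho-\mu)\otimes(\eta-\nu)$ is uniformly bounded in $t$ by the uniform-in-$t$ growth estimates of \cref{lem:potentialProperties}. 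This produces
\[
\tfrac{1}{t}\bigl(\mathsf{OT}_{h,\lambda}(\mu_t,\nu_t)-\mathsf{OT}_{h,\lambda}(\mu,\nu)\bigr)\leq \int\varphi_t\,d(\rho-\mu)+\int\psi_t\,d(\eta-\nu)+\lambda t R_t,
\]
and matching this with the $\liminf$ reduces the task to showing $\int\varphi_t\,d(\rho-\mu)\to\int\varphi\,d(\rho-\mu)$ and analogously for $\psi_t$ as $t\downarrow 0$.

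I expect this stability step to be the main obstacle. The plan is as follows: the uniform-in-$t$ bounds from \cref{lem:potentialProperties} on $|\varphi_t|,|\psi_t|$ and their derivatives up to order $\lfloor d/2\rfloor+1$ yield equicontinuity on compact sets, so Arzelà--Ascoli extracts a locally uniformly convergent subsequence $(\varphi_{t_n},\psi_{t_n})\to(\varphi_*,\psi_*)$. Dominated convergence (justified by the growth bounds on the potentials, sub-Gaussian tails of $\mu,\nu$, and Lipschitz continuity of $c$ from \cref{lem:LipschitzCost}) allows passage to the limit in the Schrödinger system, so $(\varphi_*,\psi_*)$ solves it for $(\mu,\nu)$ pointwise with $\varphi_*(0)=0$; the uniqueness clause of \cref{lem:potentialProperties} then forces $(\varphi_*,\psi_*)=(\varphi,\psi)$, and since every subsequence has the same limit, the full family $(\varphi_t,\psi_t)$ converges locally uniformly to $(\varphi,\psi)$. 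Combining the uniform linear-growth bound with sub-Gaussian integrability of the signed measures $\rho-\mu,\eta-\nu$ lets me truncate to a large ball and use dominated convergence on the complement, lifting locally uniform convergence of potentials to convergence of the relevant integrals and closing the sandwich.
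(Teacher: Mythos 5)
Your proposal is correct and lands on the same endgame as the paper --- a sandwich between $\liminf$ and $\limsup$ bounds obtained from dual suboptimality, reduced to a stability statement for the potentials proved via Arzel\`a--Ascoli, dominated convergence in the Schr\"odinger system, and the uniqueness clause of \cref{lem:potentialProperties}. The one genuine difference is in how the sandwich is set up. The paper telescopes through the intermediate value $\mathsf{OT}_{h,\lambda}(\mu_t,\nu)$, changing one marginal at a time; each one-marginal step is then controlled exactly (no remainder) because the Schr\"odinger normalization of the plugged-in potential makes the exponential penalty term in the dual integrate to one against the relevant mixed product measure. You instead change both marginals simultaneously, expand $\mu_t\otimes\nu_t-\mu\otimes\nu$ bilinearly, and observe that \emph{both} Schr\"odinger identities together kill the linear-in-$t$ cross terms, leaving a controllable $\lambda t^2\int K\,d(\rho-\mu)\otimes(\eta-\nu)$ remainder. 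Your route avoids introducing the auxiliary potentials $(\varphi^{(\mu_t,\nu)},\psi^{(\mu_t,\nu)})$ --- and hence you only need stability of $(\varphi^{(\mu_t,\nu_t)},\psi^{(\mu_t,\nu_t)})$, whereas the paper must also track $\varphi^{(\mu_t,\nu)}$ and $\psi^{(\mu_t,\nu)}$ --- at the cost of estimating the quadratic remainder, which you correctly bound via the linear-growth estimates on the potentials from \cref{lem:potentialProperties} and the sub-Gaussian tails of all four marginals. You also make explicit two things the paper leaves implicit: convexity of the $\tau^2$-sub-Gaussian class (so the constants in \cref{lem:potentialProperties} are uniform in $t$), and the fact that the right-hand side is invariant under the additive-constant ambiguity since $\rho-\mu$ and $\eta-\nu$ have zero mass. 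Both are worthwhile remarks. The argument is sound.
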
 
    \begin{proof}
        For $t\in[0,1]$, let $\mu_{t}:=\mu+t(\rho-\mu)$ and $\nu_{t}:=\nu+t(\eta-\nu)$ and, let $(\varphi^{(\mu_t,\nu_t)},\psi^{(\mu_t,\nu_t)})$ denote the unique pair of optimal potentials for $\mathsf{OT}_{h,\lambda}(\mu_{t},\nu_{t})$ satisfying $\varphi^{(\mu_t,\nu_t)}(0)=0$ and solving the Schr{\"o}dinger system on $\mathbb R^d\times \mathbb R^d$ (see \cref{lem:potentialProperties}) and likewise for $(\varphi^{(\mu_t,\nu)},\psi^{(\mu_t,\nu)})$ and $(\varphi^{(\mu,\nu)},\psi^{(\mu,\nu)})$. Observe that 
    \[
        \begin{aligned}
            &\mathsf{OT}_{h,\lambda}(\mu_{t},\nu_{t})-\mathsf{OT}_{h,\lambda}(\mu,\nu)
            \\&=\mathsf{OT}_{h,\lambda}(\mu_{t},\nu_{t})-\mathsf{OT}_{h,\lambda}(\mu_{t},\nu)+\mathsf{OT}_{h,\lambda}(\mu_{t},\nu)-\mathsf{OT}_{h,\lambda}(\mu,\nu)
            \\&\leq \int \varphi^{(\mu_t,\nu_t)}d\mu_t+\int \psi^{(\mu_t,\nu_t)}d\nu_t - \int \varphi^{(\mu_t,\nu_t)}d\mu_t-\int \psi^{(\mu_t,\nu_t)}d\nu
            \\&+\lambda \int e^{\frac{\varphi^{(\mu_t,\nu_t)}(x)+\psi^{(\mu_t,\nu_t)}(y)-c(x,y)}{\lambda}} d\mu_t\otimes \nu(x,y) - \lambda
            \\&+ \int \varphi^{(\mu_t,\nu)}d\mu_t+\int \psi^{(\mu_t,\nu)}d\nu - \int \varphi^{(\mu_t,\nu)}d\mu-\int \psi^{(\mu_t,\nu)}d\nu
            \\&+\lambda \int e^{\frac{\varphi^{(\mu_t,\nu)}(x)+\psi^{(\mu_t,\nu)}(y)-c(x,y)}{\lambda}} d\mu\otimes \nu(x,y) - \lambda,     
        \end{aligned}
    \]
    where we recall that the potentials satisfy the relevant Schr\"odinger systems \eqref{eq:SchrodingerSystem} such that $\int e^{\frac{\varphi^{(\mu_t,\nu_t)}(x)+\psi^{(\mu_t,\nu_t)}(y)-c(x,y)}{\lambda}} d\mu_t(x)\equiv 1$ and  $\int e^{\frac{\varphi^{(\mu_t,\nu)}(x)+\psi^{(\mu_t,\nu)}(y)-c(x,y)}{\lambda}} d\nu(y)\equiv 1$ on $\mathbb R^d$ so that 
    \begin{equation}
        \label{eq:OTGateauxUpper}
        \begin{aligned}
            &\mathsf{OT}_{h,\lambda}(\mu_{t},\nu_{t})-\mathsf{OT}_{h,\lambda}(\mu,\nu)
            \\&=\mathsf{OT}_{h,\lambda}(\mu_{t},\nu_{t})-\mathsf{OT}_{h,\lambda}(\mu_{t},\nu)+\mathsf{OT}_{h,\lambda}(\mu_{t},\nu)-\mathsf{OT}_{h,\lambda}(\mu,\nu)
            \\&\leq \int \varphi^{(\mu_t,\nu_t)}d\mu_t+\int \psi^{(\mu_t,\nu_t)}d\nu_t - \int \varphi^{(\mu_t,\nu_t)}d\mu_t-\int \psi^{(\mu_t,\nu_t)}d\nu
           \\&+ \int \varphi^{(\mu_t,\nu)}d\mu_t+\int \psi^{(\mu_t,\nu)}d\nu - \int \varphi^{(\mu_t,\nu)}d\mu-\int \psi^{(\mu_t,\nu)}d\nu
      \\
      &= t\int \varphi^{(\mu_t,\nu)}d(\rho-\mu)+t\int \psi^{(\mu_t,\nu_t)}d(\eta-\nu)
        \end{aligned}
    \end{equation}    
    and, analogously, 
    \begin{equation}
        \label{eq:OTGateauxLower}
        \begin{aligned}
            \mathsf{OT}_{h,\lambda}(\mu_{t},\nu_{t})-\mathsf{OT}_{h,\lambda}(\mu,\nu)&\geq  t\int \varphi^{(\mu,\nu)}d(\rho-\mu)+t\int \psi^{(\mu_t,\nu)}d(\eta-\nu).
        \end{aligned}
    \end{equation}

    It suffices, therefore, to show pointwise convergence of all relevant potentials to $(\varphi^{(\mu,\nu)},\psi^{(\mu,\nu)})$ in the limit $t\downarrow 0$. Here we will only show convergence of $(\varphi^{(\mu_t,\nu_t)},\psi^{(\mu_t,\nu_t)})$, convergence of the other set of potentials follows analogously. For convenience set $(\varphi_t,\psi_t)=(\varphi^{(\mu_t,\nu_t)},\psi^{(\mu_t,\nu_t)})$   

    Let $[0,1]\ni t_n\downarrow 0$ be arbitrary and fix a subsequence $t_{n'}$. By \cref{lem:potentialProperties}  we can apply the Arzel{\`a}-Ascoli theorem (cf. e.g. Theorem 4.44 in \citealp{folland1999real}) to infer that $(\varphi_{t},\psi_{t})$ converges to a pair of continuous functions $(\varphi,\psi)$ uniformly on compact sets and, in particular, pointwise along a further subsequence $t_{n''}$. From \cref{lem:potentialProperties}, 
    \[
         e^{\frac{\varphi_{t_{n''}}(x)+\psi_{t_{n''}}(y)-c(x,y)}{\lambda}}\leq e^{\frac{C_{d,L,\tau}(2+\|x\|_1+\|y\|_1)-c(0,0)+L\|x\|_1+L\|y\|_1}{\lambda}},
    \]
    where this final term is integrable with respect to any product of sub-Gaussian measures. By the dominated convergence theorem, for any $(x,y)\in\mathbb R^d\times \mathbb R^d$,
    \[
    \begin{aligned}
        e^{-\frac{\varphi_{t_{n''}}(x)}\lambda}=\int e^{\frac{\psi_{t_{n''}}(y)-c(x,y)}{\lambda}} d(\nu+t_{n''}(\eta-\nu))(y) \to \int e^{\frac{\psi(y)-c(x,y)}{\lambda}} d\nu(y),
        \\
        e^{-\frac{\psi_{t_{n''}}(x)}\lambda}=\int e^{\frac{\varphi_{t_{n''}}(x)-c(x,y)}{\lambda}} d(\mu+t_{n''}(\rho-\mu))(x) \to \int e^{\frac{\varphi(x)-c(x,y)}{\lambda}} d\mu(x),
    \end{aligned} 
    \]
    as $t_{n''}\downarrow 0$ such that the pair $(\varphi,\psi)$ satisfies the Schr{\"o}dinger system \eqref{eq:SchrodingerSystem} pointwise and hence is a pair of optimal potentials for $\mathsf{OT}_{h,\lambda}(\mu,\nu)$ and $\varphi(0)=\lim_{t_{n''}\downarrow 0}\varphi_{t_{n''}}(0)=0$, whence $(\varphi,\psi)=(\varphi^{(\mu,\nu)},\psi^{(\mu,\nu)})$. Combining \eqref{eq:OTGateauxUpper} and \eqref{eq:OTGateauxLower}, we conclude that 
    \[
        \lim_{t_{n''}\downarrow 0}\frac{ \mathsf{OT}_{h,\lambda}(\mu_{t_{n''}},\nu_{t_{n''}})-\mathsf{OT}_{h,\lambda}(\mu,\nu)}{t_{n''}}=\int \varphi^{(\mu,\nu)}d(\rho-\mu)+\int \psi^{(\mu,\nu)}d(\eta-\nu),
    \]
    and, as the limit is independent of the choice of original subsequence it follows that 
    \[
           \lim_{t\downarrow 0}\frac{ \mathsf{OT}_{h,\lambda}(\mu_{t},\nu_{t})-\mathsf{OT}_{h,\lambda}(\mu,\nu)}{t}=\int \varphi^{(\mu,\nu)}d(\rho-\mu)+\int \psi^{(\mu,\nu)}d(\eta-\nu).
    \] 
    This limit is invariant under the transformation $(\varphi^{(\mu,\nu)},\psi^{(\mu,\nu)})\mapsto (\varphi^{(\mu,\nu)}+C,\psi^{(\mu,\nu)}-C)$, proving the claim. 
\end{proof}

\begin{lemma}
\label{lem:OTLipschitz}
Fix $\lambda > 0$ and arbitrary sub-Gaussian distributions $\rho,\eta,\rho',\eta'\in\mathcal P(\mathbb R^d)$ with a shared constant $\tau^2>0$. Then, we have that
\[
\left|\mathsf{OT}_{h,\lambda}(\rho,\eta)-\mathsf{OT}_{h,\lambda}(\rho',\eta')\right|\leq \| \rho\otimes \eta-\rho'\otimes \eta'\|_{\infty,\mathcal F^{\oplus}_{\tau,h}}.
\]
\end{lemma}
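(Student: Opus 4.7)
The plan is to compare $\mathsf{OT}_{h,\lambda}(\rho,\eta)$ and $\mathsf{OT}_{h,\lambda}(\rho',\eta')$ by passing through the linear interpolation $\rho_t := (1-t)\rho + t\rho'$, $\eta_t := (1-t)\eta + t\eta'$ for $t \in [0,1]$, combined with a one-step dual estimate. Since convex combinations of $\tau^2$-sub-Gaussian measures are again $\tau^2$-sub-Gaussian, each $(\rho_t, \eta_t)$ admits by \cref{lem:potentialProperties} a unique pair of optimal potentials $(\varphi_t, \psi_t)$ satisfying the Schr\"odinger system \eqref{eq:SchrodingerSystem} pointwise on $\mathbb R^d \times \mathbb R^d$ and lying in $\mathcal F_{\tau,h} \times \mathcal F_{\tau,h}$ with bounds uniform in $t$.

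The central algebraic observation is that the interpolation is linear in $t$ on $\oplus$-test functions: since $(\rho_t \otimes \eta_t)(f \oplus g) = \int f\,d\rho_t + \int g\,d\eta_t$ is linear in $t$, we have $(\rho_t \otimes \eta_t - \rho_s \otimes \eta_s)(f \oplus g) = (t-s)\,(\rho' \otimes \eta' - \rho \otimes \eta)(f \oplus g)$, even though $\rho_t \otimes \eta_t$ itself is quadratic in $t$. For $s < t$, I would then use $(\varphi_s, \psi_s)$ as a suboptimal dual pair for $\mathsf{OT}_{h,\lambda}(\rho_t, \eta_t)$, after subtracting $\lambda \log A_{st}$ from $\varphi_s$, where $A_{st} := \int e^{(\varphi_s + \psi_s - c)/\lambda}\,d\rho_t \otimes \eta_t$, so that the Schr\"odinger normalization is satisfied at $(\rho_t, \eta_t)$. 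Combining this with the identity $\mathsf{OT}_{h,\lambda}(\rho_s, \eta_s) = (\rho_s \otimes \eta_s)(\varphi_s \oplus \psi_s)$, which follows from \eqref{eq:SchrodingerSystem} at $(\rho_s, \eta_s)$, yields
\[
\mathsf{OT}_{h,\lambda}(\rho_s, \eta_s) - \mathsf{OT}_{h,\lambda}(\rho_t, \eta_t) \le (\rho_s \otimes \eta_s - \rho_t \otimes \eta_t)(\varphi_s \oplus \psi_s) + \lambda \log A_{st},
\]
and the linearity identity bounds the first summand by $(t-s)\,\|\rho \otimes \eta - \rho' \otimes \eta'\|_{\infty, \mathcal F^{\oplus}_{\tau,h}}$ since $\varphi_s \oplus \psi_s \in \mathcal F^{\oplus}_{\tau,h}$.

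The step I expect to be the most delicate is controlling the residual $\lambda \log A_{st}$. The pointwise Schr\"odinger identities from \cref{lem:potentialProperties}, namely $\int e^{(\varphi_s + \psi_s - c)/\lambda}\,d\eta_s(y) = 1$ for every $x$ and its symmetric counterpart for $\rho_s$, annihilate the first-order terms in the expansion of $A_{st} - 1$ along $(\rho_t - \rho_s,\,\eta_t - \eta_s) = (t-s)(\rho' - \rho,\, \eta' - \eta)$, leaving only a bilinear remainder of order $(t-s)^2$ whose prefactor can be bounded using sub-Gaussianity and the polynomial growth of $\varphi_s, \psi_s$. Summing the resulting one-step estimates over a partition of $[0,1]$ with mesh $1/N$ produces a bound $\mathsf{OT}_{h,\lambda}(\rho, \eta) - \mathsf{OT}_{h,\lambda}(\rho', \eta') \le \|\rho \otimes \eta - \rho' \otimes \eta'\|_{\infty, \mathcal F^{\oplus}_{\tau,h}} + O(1/N)$; letting $N \to \infty$ completes the one-sided estimate. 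Swapping the roles of $(\rho,\eta)$ and $(\rho', \eta')$ yields the reverse inequality, hence the claimed absolute value.
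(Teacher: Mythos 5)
Your argument is correct, but it takes a genuinely different route from the paper's. The paper telescopes along an \emph{L-shaped} path $(\rho,\eta)\to(\rho',\eta)\to(\rho',\eta')$, changing one marginal at a time. The reason this works in a single step with no error is the observation you yourself exploit at first order: if two measure pairs share one marginal, the pointwise Schr\"odinger system \eqref{eq:SchrodingerSystem} forces the normalization integral $\int e^{(\varphi+\psi-c)/\lambda}\,d\rho\otimes\eta$ to equal $1$ exactly (not just approximately) for the ``wrong'' pair, so the dual of $\mathsf{OT}_{h,\lambda}(\rho,\eta)$ evaluated at the potentials of $\mathsf{OT}_{h,\lambda}(\rho',\eta)$ gives an exact lower bound $\int\varphi^{(\rho',\eta)}d\rho+\int\psi^{(\rho',\eta)}d\eta$ with nothing left over. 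Iterating over the two legs yields two-sided bounds $\int\varphi d(\rho-\rho')+\int\psi d(\eta-\eta')$ with potentials in $\mathcal F^{\oplus}_{\tau,h}$, and one is done. Your version instead interpolates along the diagonal $(\rho_t,\eta_t)$, so the marginal changes in both coordinates at once; the automatic normalization no longer holds and you must pay $\lambda\log A_{st}$, which you correctly identify as a second-order term (the Schr\"odinger identities annihilate both first-order pieces of $A_{st}-1$) and absorb by a Riemann-sum / $N\to\infty$ argument, using the uniformity of the \cref{lem:potentialProperties} bounds in the sub-Gaussian constant to control $\sup_s|B_s|$. Both proofs are valid and rely on the same structural ingredients; the paper's two-step path sidesteps the error estimate and limiting argument entirely, which is why it is shorter, while your straight-line interpolation is a more generic perturbative pattern that would also apply if the Schr\"odinger normalization were only approximate. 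One small caution if you write this up: you should record explicitly that $(1-t)\rho+t\rho'$ remains $\tau^2$-sub-Gaussian (convexity of the defining expectation), and that $(t-s)^2 B_s>-1$ for small enough mesh so that the logarithm expansion is legitimate.
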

\begin{proof}
Let $(\varphi^{(\rho,\eta)},\psi^{(\rho,\eta)})$, $(\varphi^{(\rho',\eta')},\psi^{(\rho',\eta')})$, and $(\varphi^{(\rho',\eta)},\psi^{(\rho',\eta)})$  be the optimal potentials for $\mathsf{OT}_{h,\lambda}(\rho,\eta)$, $\mathsf{OT}_{h,\lambda}(\rho',\eta')$, and $\mathsf{OT}_{h,\lambda}(\rho',\eta)$  described in \cref{lem:potentialProperties}. Then, by analogy with   \eqref{eq:OTGateauxUpper} and  \eqref{eq:OTGateauxLower} 
\[
\begin{gathered}
\mathsf{OT}_{h,\lambda}(\rho,\eta)-\mathsf{OT}_{h,\lambda}(\rho',\eta')\leq \int \varphi^{(\rho,\eta')} d(\rho-\rho') +\int \psi^{(\rho,\eta)} d(\eta-\eta'), 
\\
\mathsf{OT}_{h,\lambda}(\rho,\eta)-\mathsf{OT}_{h,\lambda}(\rho',\eta') \geq  \int \varphi^{(\rho',\eta')} d(\rho-\rho') +\int \psi^{(\rho,\eta')} d(\eta-\eta'),
\end{gathered} 
\]
such that 
\[
\begin{aligned}
\left|\mathsf{OT}_{h,\lambda}(\rho,\eta)-\mathsf{OT}_{h,\lambda}(\rho',\eta')\right|
&\leq \left| \int \varphi^{(\rho',\eta)} d(\rho'-\rho) +\int \psi^{(\rho',\eta')} d(\eta'-\eta)\right|\\&\hspace{8em}\bigvee\left|\int \varphi^{(\rho,\eta)} d(\rho'-\rho) +\int \psi^{(\rho',\eta)} d(\eta'-\eta)\right|. 
\end{aligned}
\]
As the constants from \cref{lem:potentialProperties} (and hence the constants defining $\mathcal F^{\oplus}_{\tau,h}$) are independent of the choice of sub-Gaussian distributions (rather they depend only on the sub-Gaussian constant), $(\varphi^{(\rho',\eta)} ,\psi^{(\rho',\eta')})$ and $ (\varphi^{(\rho,\eta)}, \psi^{(\rho',\eta)} )$ are elements of $\mathcal F^{\oplus}_{\tau,h}$ so that  
    \[
    \left|\mathsf{OT}_{h,\lambda}(\rho,\eta)-\mathsf{OT}_{h,\lambda}(\rho',\eta')\right|\leq \|\rho\otimes \eta-\rho'\otimes \eta'\|_{\infty,\mathcal F^{\oplus}_{\tau,h}}. 
    \]
\end{proof}

\begin{lemma}
\label{lem:WeakConvergenceEmpiricalMeasure}
    Fix $\lambda>0$ and sub-Gaussian distributions $\mu,\nu\in\mathcal P(\mathbb R^d)$  with a shared constant $\tau^2>0$. Then, $\sqrt n(\hat \mu_n\otimes  \hat \nu_n-\mu\otimes \nu)\stackrel{d}{\to} G_{\mu\otimes \nu}$ in $\ell^{\infty}(\mathcal F^{\oplus}_{\tau,h})$, where $G_{\mu\otimes \nu}$ is a tight Gaussian process in $\ell^{\infty}(\mathcal F^{\oplus}_{\tau,h})$ for which $G_{\mu\otimes \nu}(f\oplus g)=N(0,\mathrm{var}_{\mu}(f)+\mathrm{var}_{\nu}(g))$.
\end{lemma}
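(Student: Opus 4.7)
The plan is to reduce to the Donsker property of the smooth function class $\mathcal{F}_{\tau,h}$ under each marginal and then exploit independence of the samples. For any $f\oplus g\in\mathcal{F}^{\oplus}_{\tau,h}$, the product empirical measure decouples as
\[
(\hat{\mu}_n\otimes \hat{\nu}_n-\mu\otimes \nu)(f\oplus g)=(\hat{\mu}_n-\mu)(f)+(\hat{\nu}_n-\nu)(g),
\]
so it suffices to prove $\sqrt n(\hat{\mu}_n-\mu)\stackrel{d}{\to} G_\mu$ in $\ell^{\infty}(\mathcal F_{\tau,h})$ for a tight Gaussian process $G_\mu$ with covariance $\mathrm{cov}_\mu(f,f')$, and analogously for $\nu$. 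Because the samples $(X_i)$ and $(Y_j)$ are independent, joint convergence of the two marginal empirical processes in $\ell^{\infty}(\mathcal F_{\tau,h})\times \ell^{\infty}(\mathcal F_{\tau,h})$ follows from the marginal convergences, and the continuous mapping theorem (applied to the bilinear embedding $(\ell_1,\ell_2)\mapsto \ell_1\oplus \ell_2$ into $\ell^{\infty}(\mathcal F^{\oplus}_{\tau,h})$) yields the stated limit with $G_{\mu\otimes\nu}(f\oplus g):=G_\mu(f)+G_\nu(g)$. Since $G_\mu$ and $G_\nu$ are independent, this is a Gaussian random variable with variance $\mathrm{var}_\mu(f)+\mathrm{var}_\nu(g)$.

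The core task is therefore to verify that $\mathcal F_{\tau,h}$ is a $\mu$-Donsker class whenever $\mu$ is $\tau^2$-sub-Gaussian. I would apply the bracketing CLT (Theorem 2.5.6 in \citealp{vaart1996weak}). The functions in $\mathcal F_{\tau,h}$ lie in $\mathcal C^k(\mathbb R^d)$ for $k=\lfloor d/2\rfloor+1$ with all derivatives up to order $k$ dominated pointwise by the polynomial envelope $F(x)=C_{d,h,\tau}(1+\|x\|_1)^{kp_k}$. Sub-Gaussianity of $\mu$ ensures that $F\in L^2(\mu)$ (in fact $F$ has finite moments of every order). Standard bracketing number bounds for weighted smooth classes (cf.\ Corollary~2.7.4 in \citealp{vaart1996weak} for the compact case, combined with a tail truncation argument as in Section~6 of \citealp{goldfeld2024statistical}) yield
\[
\log N_{[\,]}\bigl(\varepsilon,\mathcal F_{\tau,h},L^2(\mu)\bigr)\lesssim \varepsilon^{-d/k},
\]
and since $k>d/2$, the exponent $d/k<2$ makes the bracketing integral $\int_0^1\sqrt{\log N_{[\,]}(\varepsilon,\mathcal F_{\tau,h},L^2(\mu))}\,d\varepsilon$ finite. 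This gives asymptotic tightness/equicontinuity, and the finite-dimensional convergence follows from the ordinary multivariate CLT together with the Cauchy–Schwarz inequality $|f(X)|\le F(X)\in L^2(\mu)$.

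The main obstacle is handling the unbounded envelope $F$, since classical bracketing bounds for smooth functions are stated on compact domains. I would execute this in two steps: first, split $\mathbb R^d$ into a large ball $B_R=\{\|x\|_1\le R\}$ and its complement; on $B_R$, the derivative bounds are uniformly of order $R^{kp_k}$, so one can rescale to the unit ball and apply the Hölder bracketing estimate of \citet{vaart1996weak} with the number of brackets growing polynomially in $R$; on the complement, one controls the $L^2(\mu)$ mass of $F\mathbbm 1_{B_R^c}$ using the sub-Gaussian tail $\mu(\|X\|_1>R)\le 2e^{-R^2/(8\tau^2)}$, which beats any polynomial. Balancing these terms in $R=R(\varepsilon)$ recovers the bracketing estimate above. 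Once the Donsker property is in hand, the limiting process $G_\mu$ is tight, has continuous sample paths with respect to the intrinsic $L^2(\mu)$ semimetric, and its covariance is determined by the marginals, concluding the proof.
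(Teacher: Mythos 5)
Your proposal takes essentially the same route as the paper: decompose $(\hat\mu_n\otimes\hat\nu_n-\mu\otimes\nu)(f\oplus g)$ into the two marginal empirical processes, establish that $\mathcal F_{\tau,h}$ is $\mu$- and $\nu$-Donsker, invoke independence for joint weak convergence in the product space, and push through the continuous (in fact linear, not bilinear) map $(\ell,\ell')\mapsto \ell\otimes\ell'$. The only difference is that where you sketch a bracketing-plus-truncation argument for the Donsker property, the paper simply cites a ready-made sufficient condition (Lemma 27 of \citealp{goldfeld2024statistical}, itself essentially the argument you outline) and verifies its tail-summability hypothesis via a Chernoff bound.
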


\begin{proof}
From the proof of Lemma 27 in \citep{goldfeld2024statistical} (see also Lemma 8 in \citealp{nietert2021smooth}), we see that the function class $\mathcal F_{\tau,h}$ is $\mu$-Donsker and $\nu$-Donsker (i.e. the associated empirical processes $\sqrt n(\hat\mu_n-\mu)$ and $\sqrt n(\hat\nu_n-\nu)$ converge weakly to tight mean-zero Brownian bridge processes in $\ell^{\infty}(\mathcal F_{\tau,h})$ with respective covariance functions given by $(f,g)\in\mathcal F_{\tau,h}\times \mathcal F_{\tau,h}\mapsto \cov_{\mu}(f,g)$ and $(f,g)\in\mathcal F_{\tau,h}\times \mathcal F_{\tau,h}\mapsto \cov_{\nu}(f,g)$ respectively) provided that $\sum_{r=1}^{\infty}r^{d+kp_k-1}\mathbb P_{\mu}(\|X\|\geq r-1)^{\nicefrac 12}<\infty$ for $k=\lfloor \nicefrac d 2\rfloor +1$ and likewise for $\nu$.
By the Chernoff bound, $\mathbb P_{\mu}(\|X\|\geq r-1)\leq \mathbb E_{\mu}\left[e^{t\|X\|}\right]e^{-t(r-1)}$ for any $t>0$. The standard inequality $\|z\|\leq \|z\|_1$ for any $z\in\mathbb R^d$ yields $\mathbb E_{\mu}\left[e^{t\|X\|}\right]\leq \mathbb E_{\mu}\left[e^{t\|X\|_1}\right]\leq 2e^{\frac{t^2\tau^2}{2}}$ (recall the proof of \cref{lem:potentialProperties}). It readily follows that the sum $\sum_{r=1}^{\infty}r^{d+kp_k-1}\mathbb P_{\mu}(\|X\|\geq r-1)^{\nicefrac 12}$ is finite, establishing Donskerness of the class with respect to $\mu$ and hence $\nu$ by the same argument.

   By independence of the samples $(X_i)_{i=1}^n$ and $(Y_j)_{j=1}^n$ we have by Example 1.4.6 in \citep{vaart1996weak}, Lemma 3.2.4 in \citep{dudley2014uniform}, and Donskerness of the class that 
   \[
        (\sqrt n (\hat \mu_n-\mu),\sqrt n (\hat \nu_n-\nu))\stackrel{d}{\to} (G_{\mu},G_{\nu}) \text{ in }\ell^{\infty}(\mathcal F_{\tau,h})\times\ell^{\infty}(\mathcal F_{\tau,h}) ,
   \]
   where $G_{\mu}$ and $G_{\nu}$ are independent tight $\mu$- and $\nu$-Brownian bridge processes. As the map $(\ell,\ell')\in \ell^{\infty}(\mathcal F_{\tau,h})\times\ell^{\infty}(\mathcal F_{\tau,h})\mapsto \ell\otimes \ell'\in \ell^{\infty}(\mathcal F^{\oplus}_{\tau,h})$ is continuous 
    (indeed $\|\ell\otimes \ell'\|_{\infty,\mathcal F^{\oplus}_{\tau,h}}\leq  \|\ell\|_{\mathcal F_{\tau,h}}+ \|\ell'\|_{\mathcal F_{\tau,h}} $), we have by the continuous mapping theorem that 
    \[
        \sqrt n (\hat \mu_n-\mu)\otimes \sqrt n (\hat \nu_n-\nu))=\sqrt n\left(\hat \mu_n \otimes \hat \nu_n-\mu\otimes \nu\right)\stackrel{d}{\to} G_{\mu\otimes \nu} \text{ in }\ell^{\infty}(\mathcal F^{\oplus}_{\tau,h}),  
    \]
    where $G_{\mu\otimes \nu}(f_0\oplus f_1)= G_{\mu}(f_0)+G_{\nu}(f_1)$ for any $f_0\oplus f_1\in\mathcal F^{\oplus}_{\tau,h}$, proving the claim.
\end{proof}

\begin{proof}[Proof of \cref{thm:limitDistribution}]
Throughout, we fix some $\bar \tau>\tau$ and observe that if $\mu,\nu$ are $\tau^2$-sub-Gaussian, then they are also $\bar\tau^2$-sub-Gaussian.  From the proof of Proposition 1 in \citep{goldfeld2024statistical} (see also Remark 4 of the same reference), Lemmas \ref{lem:GateauxDerivative} and  \ref{lem:OTLipschitz} 
  together imply that the functional $\rho\otimes\eta\in\mathcal P^{\otimes}_{\bar \tau}\mapsto \mathsf{OT}_{h,\lambda}(\rho,\eta)$
  is Hadamard directionally differentiable at $\mu\otimes \nu$ tangentially to $\mathcal P^{\otimes}_{\bar \tau}$ (treated as a convex subset of $\ell^{\infty}(\mathcal F^{\oplus}_{\bar \tau,h})$) with derivative 
  $
    \gamma \in\mathcal T_{\mathcal P^{\otimes}_{\bar \tau}}(\mu\otimes \nu)\mapsto \gamma(\varphi\oplus \psi),
  $ where $(\varphi,\psi)$ denote any pair of optimal potentials for $\mathsf{OT}_{h,\lambda}(\mu,\nu)$ satisfying \eqref{eq:SchrodingerSystem} on $\mathbb R^d\times \mathbb R^d$, and the derivative is defined on the tangent cone to $\mathcal P^{\otimes}_{\bar \tau}$ at $\mu\otimes \nu$  which is defined as 
  \[
  \begin{aligned}
\mathcal T_{\mathcal P^{\otimes}_{\bar \tau}}(\mu\otimes \nu)&:= \left\{\gamma\in\ell^{\infty}(\mathcal F^{\oplus}_{\bar \tau, h}): \exists \mathcal P^{\otimes}_{\bar \tau}\supset(\rho_n\otimes \eta_n)_{n\in\mathbb N}\to \mu\otimes \nu \text{ in }\ell^{\infty}(\mathcal F^{\oplus}_{\bar \tau,h}) \text{ and }t_n\downarrow 0\right.\\&\hspace{20em}\left.\text{ s.t. } \gamma = \lim_{n\to\infty}\frac{\rho_n\otimes \eta_n-\mu\otimes \nu}{t_n}\right\},
  \end{aligned}
  \]
  see \cref{sec:FDM} for precise definitions. Note that we have identified $\mathsf{OT}_{h,\lambda}(\rho,\eta)$ with a functional on $\mathcal P^{\otimes}_{\bar\tau}$; such an identification is well-defined in light of the discussion following  Proposition 1 in \citep{goldfeld2024statistical}.

  The same implications hold for the functional $\rho\otimes\eta\in\mathcal P^{\otimes}_{\bar \tau}\mapsto \mathsf{OT}_{\bar h,\lambda}(\rho,\eta)$ in light of the discussion preceding \cref{lem:GateauxDerivative}, with corresponding derivative $
    \gamma \in\mathcal T_{\mathcal P^{\otimes}_{\bar \tau}}(\mu\otimes \nu)\mapsto \gamma(\bar \varphi\oplus \bar \psi),$ where $(\bar \varphi,\bar \psi)$ is any pair of optimal potentials for $\mathsf{OT}_{h,\lambda}(\mu,\nu)$ satisfying \eqref{eq:SchrodingerSystem} on $\mathbb R^d\times \mathbb R^d$.

 Note that $\varepsilon_{h,\lambda}(\mu,\nu)=f\circ \left(\mathsf{OT}_{h,\lambda}(\mu,\nu),\mathsf{OT}_{\bar h,\lambda}(\mu,\nu)\right)$ for $f:(x,y)\in\mathbb R\times \mathbb R\mapsto \nicefrac xy$ such that the chain rule (cf. e.g. 
 Proposition 3.6 in \citealp{shapiro1990concepts})
 guarantees that $\rho\otimes\eta\in\mathcal P^{\otimes}_{\bar \tau}\mapsto\varepsilon_{h,\lambda}(\rho,\eta)$ is Hadamard directionally differentiable at $\mu\otimes \nu$ tangentially to $\mathcal P^{\otimes}_{\bar \tau}$ with derivative 
  \begin{equation}
  \label{eq:HadamardDerivative}
    \left(\varepsilon_{h,\lambda}\right)'_{\mu\otimes \nu}:\gamma \in \mathcal T_{\mathcal P^{\otimes}_{\bar \tau}}(\mu\otimes \nu)\mapsto 
    \frac{1}{\mathsf {OT}_{\bar h,\lambda}(\mu,\nu)}\gamma(\varphi\oplus \psi)-\frac{\mathsf {OT}_{ h,\lambda}(\mu,\nu)}{\mathsf {OT}_{\bar h,\lambda}(\mu,\nu)^2}\gamma(\bar \varphi\oplus \bar \psi)
    ,
  \end{equation} 
   which is notably linear as a function of $\gamma$. It will be useful to rewrite this expression in terms of a single evaluation of $\gamma$. To this end,  observe that if $f_1\oplus f_2,g_1\oplus g_2\in\mathcal F^{\oplus}_{\bar \tau,h}$, then so too is $(\alpha f_1-\beta g_1) \oplus(\alpha f_2-\beta g_2)$ for any $\alpha,\beta\in\mathbb R$ with $|\alpha|+|\beta|\leq 1$. Moreover, setting $M=\frac{1}{\mathsf {OT}_{\bar h,\lambda}(\mu,\nu)}\bigvee \frac{\mathsf {OT}_{ h,\lambda}(\mu,\nu)}{\mathsf {OT}_{\bar h,\lambda}(\mu,\nu)^2}$ (which is assumed to be nonzero by construction), we have that $0\leq \frac{1}{2M}\frac{1}{\mathsf {OT}_{\bar h,\lambda}(\mu,\nu)}\bigvee \frac{1}{2M}\frac{\mathsf {OT}_{ h,\lambda}(\mu,\nu)}{\mathsf {OT}_{\bar h,\lambda}(\mu,\nu)^2}\leq \frac 12$ so that we can write 
\[
\begin{aligned}
&\frac{1}{\mathsf {OT}_{\bar h,\lambda}(\mu,\nu)}(\varphi\oplus \psi)-\frac{\mathsf {OT}_{ h,\lambda}(\mu,\nu)}{\mathsf {OT}_{\bar h,\lambda}(\mu,\nu)^2}(\bar \varphi\oplus \bar \psi)\\
&=2M\left(\frac{1}{2M}\frac{1}{\mathsf {OT}_{\bar h,\lambda}(\mu,\nu)}\varphi\oplus \frac{1}{2M}\frac{1}{\mathsf {OT}_{\bar h,\lambda}(\mu,\nu)}\psi\right)\\
&\hspace{15em}-2M\left(\frac{1}{2M}\frac{\mathsf {OT}_{ h,\lambda}(\mu,\nu)}{\mathsf {OT}_{\bar h,\lambda}(\mu,\nu)^2}\bar \varphi\oplus \frac{1}{2M}\frac{\mathsf {OT}_{ h,\lambda}(\mu,\nu)}{\mathsf {OT}_{\bar h,\lambda}(\mu,\nu)^2}\bar \psi\right)
\\
&=2M\left(\frac{1}{2M}\frac{1}{\mathsf {OT}_{\bar h,\lambda}(\mu,\nu)}\varphi-\frac{1}{2M}\frac{\mathsf {OT}_{ h,\lambda}(\mu,\nu)}{\mathsf {OT}_{\bar h,\lambda}(\mu,\nu)^2}\bar \varphi\right.\\&\hspace{18em}\oplus\left. \frac{1}{2M}\frac{1}{\mathsf {OT}_{\bar h,\lambda}(\mu,\nu)}\psi-\frac{1}{2M}\frac{\mathsf {OT}_{ h,\lambda}(\mu,\nu)}{\mathsf {OT}_{\bar h,\lambda}(\mu,\nu)^2}\bar \psi\right),
\end{aligned}
\]
where the final term in brackets is an element of $\mathcal F^{\oplus}_{\bar \tau,h}$. Further, for any $\gamma\in\mathcal T_{\mathcal P^{\otimes}_{\bar \tau}}(\mu\otimes \nu),$ there exists a sequence $(\rho_n\otimes \eta_n)_{n\in\mathbb N}\subset \mathcal P^{\otimes}_{\bar \tau}$ which converges to $\mu\otimes \nu$ in $\ell^{\infty}(\mathcal F^{\oplus}_{\bar \tau,h})$ and a sequence $t_n\downarrow 0$ for which $\gamma = \lim_{n\to\infty}\frac{\rho_n\otimes \eta_n-\mu\otimes \nu}{t_n}$. Thus, if $f_0\oplus f_1$, $g_0\oplus g_1\in\mathcal F^{\oplus}_{\bar \tau,h}$ are such that $f_0+g_0\oplus f_1+g_1\in\mathcal F^{\oplus}_{\bar \tau,h}$, we have that 
\[
\begin{aligned}
    &\gamma(f_0\oplus f_1)+\gamma(g_0\oplus g_1)\\&= \lim_{n\to\infty}t_n^{-1}\left(\int f_0d\rho_n+\int f_1d\eta_n -\int f_0 d\rho -\int f_1d\eta\right)\\
    &+\lim_{n\to\infty}t_n^{-1}\left(\int g_0d\rho_n+\int g_1d\eta_n -\int g_0 d\rho -\int g_1d\eta\right)
    \\
    &=\lim_{n\to\infty}t_n^{-1}\left(\int f_0+g_0d\rho_n+\int f_1+g_1d\eta_n -\int f_0+g_0 d\rho -\int f_1+g_1d\eta\right)\\
    &=\gamma(f_0+g_0\oplus f_1+g_1).
\end{aligned} 
\]
Likewise, if $\alpha f_0\oplus \alpha f_1\in\mathcal F^{\oplus}_{\bar \tau,h}$ for some $\alpha\in\mathbb R$, and $ f_0\oplus f_1\in\mathcal F^{\oplus}_{\bar \tau,h}$, then 
\[
\begin{aligned}
    &\gamma(\alpha f_0\oplus \alpha f_1)=\alpha \lim_{n\to\infty}t_n^{-1}\left(\int f_0d\rho_n+\int f_1d\eta_n -\int f_0 d\rho -\int f_1d\eta\right)=\alpha \gamma(f_0\oplus f_1).
\end{aligned} 
\]
With this, \eqref{eq:HadamardDerivative} can be written as 
\begin{equation}
\label{eq:HamardDerivativeEval}
  \begin{aligned}
    &\left(\varepsilon_{h,\lambda}\right)'_{\mu\otimes \nu}:\gamma \in \mathcal T_{\mathcal P^{\otimes}_{\bar \tau}}(\mu\otimes \nu)\mapsto
    \\
    &
    2M\gamma\left( \frac{1}{2M\mathsf {OT}_{\bar h,\lambda}(\mu,\nu)}\varphi-\frac{\mathsf {OT}_{h,\lambda}(\mu,\nu)}{2M\mathsf {OT}_{\bar h,\lambda}(\mu,\nu)^2}\bar \varphi\oplus \frac{1}{2M\mathsf {OT}_{\bar h,\lambda}(\mu,\nu)}\psi-\frac{\mathsf {OT}_{h,\lambda}(\mu,\nu)}{2M\mathsf {OT}_{\bar h,\lambda}(\mu,\nu)^2}\bar \psi
    \right).
    \end{aligned}
  \end{equation}

    Given the above differentiability result and \cref{lem:WeakConvergenceEmpiricalMeasure}, part 1 of    \cref{thm:limitDistribution} will follow from the functional delta method (see \cref{lem:deltaMethod} in \cref{sec:FDM}) upon showing that $\hat \mu_n\otimes \hat \nu_n\in\mathcal P^{\otimes}_{\bar \tau}$ with probability approaching one and noting that $G_{\mu\otimes \nu}\in\mathcal T_{\mathcal P^{\otimes}_{\bar \tau}}(\mu\otimes \nu)$ with probability one as follows from the portmanteau theorem. To this end, note that, by the  law of large numbers, 
    \[
         \mathbb E_{\hat \mu_n}\left[\exp(\nicefrac{\|X\|_1^2}{2\bar \tau^2})\right]=\frac 1n\sum_{i=1}^n\exp(\nicefrac{\|X_i\|_1^2}{2\bar\tau^2})\to\mathbb E_{ \mu}\left[\exp(\nicefrac{\|X\|_1^2}{2\bar \tau^2})\right] \leq 2^{\frac{\tau^2}{\bar \tau^2}}<2.
    \]
    almost surely such that $\hat \mu_n$ is $\bar \tau^2$-sub-Gaussian with probability approaching one. The same deliberations imply that $\hat \nu_n$ share the same property.

    By applying the delta method, we obtain that 
    \[
        \sqrt n (\varepsilon_{h,\lambda}(\hat \mu_n,\hat \nu_n)-\varepsilon_{h,\lambda}(\mu,\nu))\stackrel{d}{\to} \left(\varepsilon_{h,\lambda}\right)_{\mu\otimes \nu}'(G_{\mu\otimes \nu}), 
    \]
    and using the explicit expression for the derivative from \eqref{eq:HamardDerivativeEval}, we see that $\left(\varepsilon_{h,\lambda}\right)_{\mu\otimes \nu}'(G_{\mu\otimes \nu})$ is equal in distribution to $2MN(0,v^2+w^2)$, where $v^2=\text{var}_{\mu}\left(\frac{1}{2M\mathsf {OT}_{\bar h,\lambda}(\mu,\nu)}\varphi-\frac{\mathsf {OT}_{h,\lambda}(\mu,\nu)}{2M\mathsf {OT}_{\bar h,\lambda}(\mu,\nu)^2}\bar \varphi\right)$ and $w^2= \text{var}_{\nu}\left(\frac{1}{2M\mathsf {OT}_{\bar h,\lambda}(\mu,\nu)}\psi-\frac{\mathsf {OT}_{h,\lambda}(\mu,\nu)}{2M\mathsf {OT}_{\bar h,\lambda}(\mu,\nu)^2}\bar \psi\right)$;  the $2M$ terms in the variance and multiplying the normal distribution evidently cancel out to give the desired formula for the limiting variance.    
   
    As for the bootstrap consistency result, since \eqref{eq:HadamardDerivative} is linear, it follows from Corollary 1 in \citep{goldfeld2024statistical} that  $(\rho,\eta)\mapsto \varepsilon_{h,\lambda}(\rho,\eta)$ is Hadamard directionally differentiable at $\mu\otimes \nu$ tangentially to $\supp(G_{\mu\otimes \nu})$.  As in the proof of \cref{lem:WeakConvergenceEmpiricalMeasure}, the class $\mathcal F_{\bar \tau}$ is $\mu$- and $\nu$-Donsker such that the bootstrapped empirical processes $\sqrt n(\hat \mu_n^B-\hat \mu_n)$ and $\sqrt n(\hat \nu_n^B-\hat \nu_n)$ are asymptotically measurable and converge conditionally in distribution to the $\mu$- and $\nu$-Brownian bridge processes  $G_{\mu}$ and $G_{\nu}$ respectively (see Chapter 3.6 in \citealp{vaart1996weak}). By Lemma 1.4.4 and Example 1.4.6 in \citep{vaart1996weak}, $(\sqrt n(\hat \mu_n^B-\hat \mu_n),\sqrt n(\hat \nu_n^B-\hat \nu_n))$ is asymptotically measurable and converges conditionally in distribution to  $(G_{\mu},G_{\nu})$ as elements of $\ell^{\infty}(\mathcal F_{\bar \tau})\times \ell^{\infty}(\mathcal F_{\bar \tau})$. As the map $(\ell,\ell')\in \ell^{\infty}(\mathcal F_{\bar \tau})\times\ell^{\infty}(\mathcal F_{\bar \tau})\mapsto \ell\otimes \ell'\in \ell^{\infty}(\mathcal F^{\oplus}_{\bar \tau,h})$ is continuous, 
    $(\sqrt n(\hat \mu_n^B-\hat \mu_n)\otimes \sqrt n(\hat \nu_n^B-\hat \nu_n))$ is asymptotically measurable and converges conditionally in distribution to  $G_{\mu\otimes \nu}$ as elements of $\ell^{\infty}(\mathcal F^{\oplus}_{\bar \tau,h})$     where $G_{\mu\otimes \nu}(f_0\oplus f_1)= G_{\mu}(f_0)+G_{\nu}(f_1)$ for any $f_0\oplus f_1\in\mathcal F^{\oplus}_{\bar \tau,h}$.

    Bootstrap consistency then follows from Theorem 23.9 in \citep{vaart2000asymptotic} by applying the logic from the first half of the proof.
\end{proof}

\section{The Functional Delta Method} 
\label{sec:FDM}

Our strategy for deriving limit distributions and consistency of the bootstrap is based upon the functional delta method, which generalizes the standard delta method for functions of simple random variables. This section provides a brief introduction to the functional delta method following the exposition of \citep{romisch2006delta}. Throughout, convergence in distribution is understood in the sense of Hoffmann-J\o{}rgensen when necessary (cf. e.g. Chapter 1 in \citep{vaart1996weak}). 

 Much like the delta method which, identifies the distributional limit of $\sqrt n(g(X_n)-g(\mu))$  as $N(0,\sigma^2 (g'(\mu))^2)$ $n\to \infty$ provided that $\sqrt n (X_n-\mu)\stackrel{d}{\to} N(0,\sigma^2)$ and that $g:\mathbb R\to \mathbb R$ is differentiable at $\mu$ (see Proposition 8.14 in \citealp{keener2010theoretical}), the functional delta method establishes the limit distribution of a functional $f:\Theta\subset D\to \mathbb R$, where $D$ is a normed vector space. In this setting, the surrogate for the derivative in the standard delta method is the Hadamard directional derivative. 

\begin{definition}[Hadamard directional derivative \citep{romisch2006delta,shapiro1990concepts}]
    \label{def:HadamardDerivative} Let $D$ be a normed vector space and fix a non-empty set $\Theta\subset D$. The tangent cone to $\Theta$ at $\theta\in\Theta$ is given by 
\[
    \mathcal T_{\Theta}(\theta)\coloneqq\left\{ h\in D:h=\lim_{n\to\infty}\frac{\theta_n-\theta}{t_n},\text{ for some }\theta_n\in \Theta,\theta_n\to\theta,t_n\downarrow 0  \right\}.
\]
A functional $f:\Theta\to \mathbb R$ is Hadamard directionally differentiable at $\theta \in\Theta$ tangentially to $\Theta$ if there exists a map $f'_{\theta}:\mathcal T_{\Theta}(\theta)\to \mathbb R$ satisfying 
\begin{equation}
\label{eq:HDDLimit}
\lim_{n\to\infty}\frac{f(\theta+t_nh_n)-f(\theta)}{t_n}=f'_{\theta}(h),
\end{equation}
for any $h\in\mathcal T_{\Theta}(\theta)$, $t_n\downarrow 0$, and $h_n\to h$ in $D$ with $\theta+t_nh_n\in\Theta$.
\end{definition}

This notion of differentiability is compatible with distributional convergence of random elements of $D$ in the sense that the following generalization of the delta method holds.  

\begin{lemma}[Functional delta method \citep{romisch2006delta,shapiro1991asymptotic}]
\label{lem:deltaMethod} 
    Fix a probability space $(\Omega,\Sigma,\mathbb P$) and let $D$ be a normed vector space and $f:\Theta\subset D\to\mathbb R$ be Hadamard directionally differentiable at $\theta\in\Theta$ tangentially to $\mathcal T_{\Theta}(\theta)$ with derivative $f'_{\theta}:\mathcal T_{\Theta}(\theta)\to \mathbb R$. Let $T_n:\Omega\to\Theta$ be maps such that $r_n(T_n-\theta)\stackrel{d}{\to} T$ for some norming sequence $r_n\to\infty$ and a measurable map $T:\Omega\to \mathcal T_{\Theta}(\theta)\subset  D$. Then $r_n\big(f(T_n)-f(\theta)\big)\stackrel{d}{\to} f'_{\theta}(T)$ and, if $\Theta$ is convex, $r_n\big(f(T_n)-f(\theta)\big)-f'_{\theta}\big(r_n(T_n-\theta)\big)\to 0$ in outer probability.
\end{lemma}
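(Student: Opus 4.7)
The plan is to view $r_n(f(T_n)-f(\theta))$ as $g_n(r_n(T_n-\theta))$, where the sequence of ``difference quotients'' $g_n:D\to\mathbb R$ is defined by $g_n(h):=r_n\bigl(f(\theta+h/r_n)-f(\theta)\bigr)$ whenever $\theta+h/r_n\in\Theta$. Hadamard directional differentiability precisely says that $g_n(h_n)\to f'_\theta(h)$ whenever $h_n\to h\in\mathcal T_\Theta(\theta)$ along admissible paths $\theta+h_n/r_n\in\Theta$. The theorem therefore reduces to an extended continuous mapping argument for the pair $(g_n, r_n(T_n-\theta))$.

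I would carry this out in three steps. First, invoke the almost-sure representation theorem for weakly convergent nets in the Hoffmann--J\o{}rgensen sense (Theorem 1.10.4 in \citealp{vaart1996weak}) to produce, on a common probability space, perfect representatives $\widetilde T_n$ and $\widetilde T$ with the laws of $T_n$ and $T$ such that $r_n(\widetilde T_n-\theta)\to \widetilde T$ almost uniformly (equivalently, almost surely in outer measure). Second, fix $\omega$ in the full-measure set where this convergence holds, and set $h_n=r_n(\widetilde T_n(\omega)-\theta)$, $h=\widetilde T(\omega)\in\mathcal T_\Theta(\theta)$, $t_n=1/r_n\downarrow 0$. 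Since $\widetilde T_n(\omega)=\theta+t_n h_n\in\Theta$ by construction, the defining property of the Hadamard directional derivative applies and yields $g_n(h_n)\to f'_\theta(h)$, i.e.\ $r_n\bigl(f(\widetilde T_n)-f(\theta)\bigr)\to f'_\theta(\widetilde T)$ almost surely. Transferring this back via the equality of laws gives the asserted weak convergence.

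For the in-outer-probability statement, I would argue by a subsequence-of-subsequence principle: given any subsequence of $r_n(f(T_n)-f(\theta))-f'_\theta(r_n(T_n-\theta))$, apply the almost-sure representation along a further subsequence so that $r_{n_k}(\widetilde T_{n_k}-\theta)\to\widetilde T$ almost surely. Along this further subsequence the previous step gives $r_{n_k}\bigl(f(\widetilde T_{n_k})-f(\theta)\bigr)\to f'_\theta(\widetilde T)$ a.s., while the convexity of $\Theta$ ensures that $r_{n_k}(\widetilde T_{n_k}-\theta)$ lies in the cone $r_{n_k}(\Theta-\theta)\subset\mathcal T_\Theta(\theta)$, so it is a legitimate argument of $f'_\theta$; the continuity of $f'_\theta$ on its domain (a standard consequence of Hadamard directional differentiability, cf.\ \citealp{shapiro1990concepts}) then forces $f'_\theta\bigl(r_{n_k}(\widetilde T_{n_k}-\theta)\bigr)\to f'_\theta(\widetilde T)$ as well. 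Subtracting the two limits gives a.s.\ convergence to $0$ along the further subsequence, and the subsequence principle upgrades this to convergence in outer probability on the original space.

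The main obstacle is the outer-measure bookkeeping: the maps $T_n$ need not be Borel measurable into $D$, so each classical tool (the almost-sure representation, the extended continuous mapping step, and the passage from a.s.\ convergence to outer-probability convergence) must be invoked in its Hoffmann--J\o{}rgensen-compatible form. The convexity hypothesis in the second part is used in exactly one place, namely to guarantee that the random direction $r_n(T_n-\theta)$ already lies in $\mathcal T_\Theta(\theta)$ so that $f'_\theta$ can be evaluated on it without any further extension.
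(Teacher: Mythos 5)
The paper does not prove this lemma; it is quoted verbatim as a known result with citations to \citet{romisch2006delta} and \citet{shapiro1991asymptotic}, so there is no ``paper's own proof'' to compare against. Your sketch reproduces the standard argument found in those references and in Theorems 3.9.4--3.9.5 of \citet{vaart1996weak}: almost-sure (Skorokhod/Dudley) representation of the weakly convergent sequence via perfect maps, followed by an extended continuous mapping step applied to the difference quotients $g_n(h)=r_n\bigl(f(\theta+h/r_n)-f(\theta)\bigr)$, and the outline is correct. Two points are worth making explicit if you flesh this out: (i) the construction in Theorem 1.10.4 of \citet{vaart1996weak} returns $\widetilde T_n=T_n\circ\phi_n$ for perfect $\phi_n$, which is exactly what guarantees $\widetilde T_n(\omega)\in\Theta$ and that outer-distributions transfer back to the original space; and (ii) the claim that $f'_\theta$ is continuous on $\mathcal T_\Theta(\theta)$, which you use to handle the term $f'_\theta\bigl(r_n(\widetilde T_n-\theta)\bigr)$, is indeed a consequence of Hadamard directional differentiability (a diagonalization over admissible sequences $t_n\downarrow 0$, $h_n\to h$), but it is not completely automatic in the tangential setting and deserves a one-line justification rather than only a citation. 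Your observation that convexity is used solely to place $r_n(T_n-\theta)$ inside $\mathcal T_\Theta(\theta)$ matches the role it plays in the references.
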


Whilst \cref{lem:deltaMethod} is sufficient to derive limit distributions, bootstrap consistency typically requires the following notion of full Hadamard differentiability (see e.g. Theorem 23.9 in \citep{vaart2000asymptotic} or Theorem 3.9.11 in \citealp{vaart1996weak}).  A functional $f:D\to \mathbb R$ is said to be {Hadamard differentiable} at~$\theta$ {tangentially to} a vector subspace $D_0\subset  D$ if there exists a {continuous linear} functional $f'_{\theta}: D_0\mapsto \mathbb R$ satisfying \eqref{eq:HDDLimit}
for any $h\in\mathcal T_{\Theta}(\theta)$, $t_n\neq 0$, $t_n\to 0$, and $h_n\to h$ in $\mathfrak D$ with $\theta+t_nh_n\in \Theta$. The following lemma enables a connection between full and directional Hadamard differentiability. 
\begin{lemma}[Lemma 2 in \citep{goldfeld2024statistical}]
    \label{lem:linearDerivative}
    If $f:\Theta\subset D \to \mathbb R$ is Hadamard directionally differentiable at $\theta\in\Theta$ tangentially to $\mathcal T_{\Theta}(\theta)$ and $f'_{\theta}$ is linear on a subspace $ D_{0}\subset \mathcal T_{\Theta}(\theta)$, then $f$ is Hadamard differentiable at $\theta$ tangentially to $\mathfrak D_0$. 
\end{lemma}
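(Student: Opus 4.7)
The statement upgrades Hadamard directional differentiability to full Hadamard differentiability under the added hypothesis of linearity on a subspace $D_0$ of the tangent cone. Comparing \cref{def:HadamardDerivative} with the notion of full Hadamard differentiability recalled immediately after \cref{lem:deltaMethod}, the two definitions differ on $D_0$ in precisely two respects: in the full version (a) the derivative must be a \emph{continuous linear} functional on $D_0$, and (b) the limit in \eqref{eq:HDDLimit} must hold for arbitrary sequences $t_n \neq 0$ with $t_n \to 0$, not only for $t_n \downarrow 0$. The plan is to close these two gaps in order.

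For (a), linearity on $D_0$ is assumed, so only continuity must be checked. Hadamard directional derivatives are automatically continuous on the tangent cone: given $h_n \to h$ with $h_n, h \in \mathcal T_\Theta(\theta)$, for each $n$ one can exploit the definition to pick $t_n \downarrow 0$ and $\tilde h_n$ arbitrarily close to $h_n$ with $\theta + t_n \tilde h_n \in \Theta$ such that $(f(\theta + t_n \tilde h_n) - f(\theta))/t_n$ is within $1/n$ of $f'_\theta(h_n)$; a diagonal extraction then forces $f'_\theta(h_n) \to f'_\theta(h)$ because the same construction, read as a single sequence converging to $h$, must converge to $f'_\theta(h)$ by definition. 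Restricting to $D_0$ and combining with the hypothesized linearity yields the desired continuous linear functional.

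For (b), fix $h \in D_0$, a sequence $t_n \neq 0$ with $t_n \to 0$, and $h_n \to h$ with $\theta + t_n h_n \in \Theta$. Split along the signs of $t_n$. The positive subsequence is immediately handled by \eqref{eq:HDDLimit} applied to $h \in D_0 \subset \mathcal T_\Theta(\theta)$. For the negative subsequence, set $s_n = -t_n > 0$ and $\tilde h_n = -h_n \to -h$; crucially, $-h \in D_0$ since $D_0$ is a vector subspace, and the feasibility $\theta + s_n \tilde h_n = \theta + t_n h_n \in \Theta$ holds tautologically. Applying \eqref{eq:HDDLimit} in direction $-h$ yields $(f(\theta + t_n h_n) - f(\theta))/s_n \to f'_\theta(-h)$, and linearity of $f'_\theta$ on $D_0$ rewrites the right-hand side as $-f'_\theta(h)$. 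Dividing through by $t_n = -s_n$ restores the sign and gives $f'_\theta(h)$, so both subsequences yield the same limit, establishing \eqref{eq:HDDLimit} in the bilateral sense.

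The one step that is not entirely routine is the continuity assertion in (a): although it is a well-known property of Hadamard directional derivatives, it does require the diagonal argument sketched above rather than following from positive homogeneity alone. The remainder is bookkeeping that leans on $D_0$ being a subspace of the tangent cone to invoke symmetry of directions and the hypothesized linearity of $f'_\theta$.
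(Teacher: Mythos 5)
The paper does not actually prove this lemma; it is stated and attributed to Lemma~2 of \citet{goldfeld2024statistical}, so there is no in-paper proof to compare against. Evaluating your argument on its own terms: it is correct, and it is the standard argument for this kind of upgrade. You correctly isolate the two gaps between the two notions of differentiability as laid out in \cref{def:HadamardDerivative} and the paragraph following \cref{lem:deltaMethod} --- continuity of the (already linear-by-hypothesis) functional on $D_0$, and extending the defining limit from $t_n\downarrow 0$ to bilateral sequences $t_n\neq 0$, $t_n\to 0$. The sign-flip step for negative $t_n$ is handled precisely: setting $s_n=-t_n>0$, $\tilde h_n=-h_n$ keeps the feasibility constraint $\theta+s_n\tilde h_n=\theta+t_nh_n\in\Theta$ unchanged, $-h\in D_0\subset\mathcal T_\Theta(\theta)$ because $D_0$ is a subspace, and linearity converts $f'_\theta(-h)$ to $-f'_\theta(h)$, after which the sign cancels. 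The continuity claim in part (a) is the only place you lean on an argument by gesture, but the diagonal construction you sketch is indeed the standard one and fills the gap: for each $n$, use $h_n\in\mathcal T_\Theta(\theta)$ to produce $t_n>0$ small and $\tilde h_n$ close to $h_n$ (hence to $h$) with $\theta+t_n\tilde h_n\in\Theta$ and the difference quotient within $1/n$ of $f'_\theta(h_n)$; then apply \eqref{eq:HDDLimit} at direction $h$ to the diagonal sequence to force $f'_\theta(h_n)\to f'_\theta(h)$. If you wanted to be fully rigorous you should handle the mixed-sign case by splitting into the positive and negative subsequences and noting both converge to the same limit, and you should note that if $t_n\downarrow 0$ is read as strictly monotone, a routine sub-subsequence reduction restores the hypothesis --- both are bookkeeping points you correctly flag as such. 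In short, the approach is right, the key use of the subspace structure of $D_0$ is correctly located, and the proof would compile into a complete argument with a few more lines of detail in the continuity step.
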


\end{document}